\newcommand{\params}{{\theta}}
\newcommand{\abias}{{\tA}}
\newcommand{\norm}[1]{\left\lVert #1 \right\rVert}
\newcommand{\bnorm}[1]{\big\lVert #1 \big\rVert}
\newcommand{\Bnorm}[1]{\Big\lVert #1 \Big\rVert}
\def\eqref#1{equation~\ref{#1}}
\def\1{\bm{1}}
\DeclareMathAlphabet{\mathsfit}{\encodingdefault}{\sfdefault}{m}{sl}
\SetMathAlphabet{\mathsfit}{bold}{\encodingdefault}{\sfdefault}{bx}{n}
\newcommand{\tens}[1]{\bm{\mathsfit{#1}}}
\def\tA{{\tens{A}}}
\def\gL{{\mathcal{L}}}
\def\gO{{\mathcal{O}}}
\def\sR{{\mathbb{R}}}
\newcommand{\E}{\mathbb{E}}
\newcommand{\R}{\mathbb{R}}
\newcommand{\softmax}{\mathrm{softmax}}
\newcommand{\Var}{\mathrm{Var}}
\definecolor{bgcolor}{rgb}{0.95,0.95,0.95}
\definecolor{lblA}{RGB}{33,113,181}
\definecolor{lblB}{RGB}{230,85,13}
\declaretheorem[name=Theorem,numberwithin=section]{theorem}
\declaretheorem[name=Lemma,sibling=theorem]{lemma}
\declaretheorem[name=Proposition,sibling=theorem]{proposition}
\declaretheorem[style=definition,name=Definition,sibling=theorem]{definition}
\declaretheorem[style=remark,name=Remark,sibling=theorem]{remark}
\declaretheorem[name=Observation]{observation}
\newcolumntype{P}[1]{>{\centering\arraybackslash}p{#1}}
\newcolumntype{M}[1]{>{\centering\arraybackslash}m{#1}}
\newcommand{\ouralgo}{DroPE\xspace}
\title{Extending the Context of Pretrained LLMs by Dropping Their Positional Embeddings}
\author[1,2]{Yoav Gelberg}
\author[1]{Koshi Eguchi}
\author[1]{Takuya Akiba}
\author[1]{Edoardo Cetin}
\affil[1]{Sakana AI}
\affil[2]{University of Oxford}
\begin{document}

\maketitle

\begin{abstract}
    \vspace{-2.0em}
    \looseness=-1
    So far, expensive finetuning beyond the pretraining sequence length has been a requirement for effectively extending the context of language models (LM). In this work, we break this key bottleneck by \emph{\textbf{Dro}pping the \textbf{P}ositional \textbf{E}mbeddings of LMs after training (DroPE)}. Our simple method is motivated by three key theoretical and empirical observations. First, positional embeddings (PEs) serve a crucial role during pretraining, providing an important inductive bias that significantly facilitates convergence. Second, over-reliance on this explicit positional information is also precisely what prevents test-time generalization to sequences of unseen length, even when using popular PE-scaling methods. Third, positional embeddings are not an inherent requirement of effective language modeling and can be safely \emph{removed after pretraining} following a short recalibration phase. Empirically, DroPE yields seamless \emph{zero-shot} context extension \emph{without any long-context finetuning}, quickly adapting pretrained LMs without compromising their capabilities in the original training context. Our findings hold across different models and dataset sizes, far outperforming previous specialized architectures and established rotary positional embedding scaling methods.
    \vspace{0.5em}
    \begin{center}
        \begin{tabular}{rcl}
            \raisebox{-1.5pt}{\includegraphics[height=1.05em]{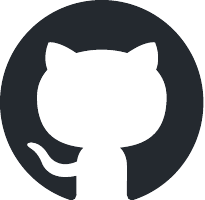}} & \textbf{Code} & \href{https://github.com/SakanaAI/DroPE}{\path{https://github.com/SakanaAI/DroPE}} \\
        \end{tabular}
    \end{center}
\end{abstract}

\section{Introduction}\label{sec:1introduction}

\begin{wrapfigure}[16]{r}{0.55\textwidth}
    \vspace{-15pt}
    \centering
    \includegraphics[width=0.55\textwidth]{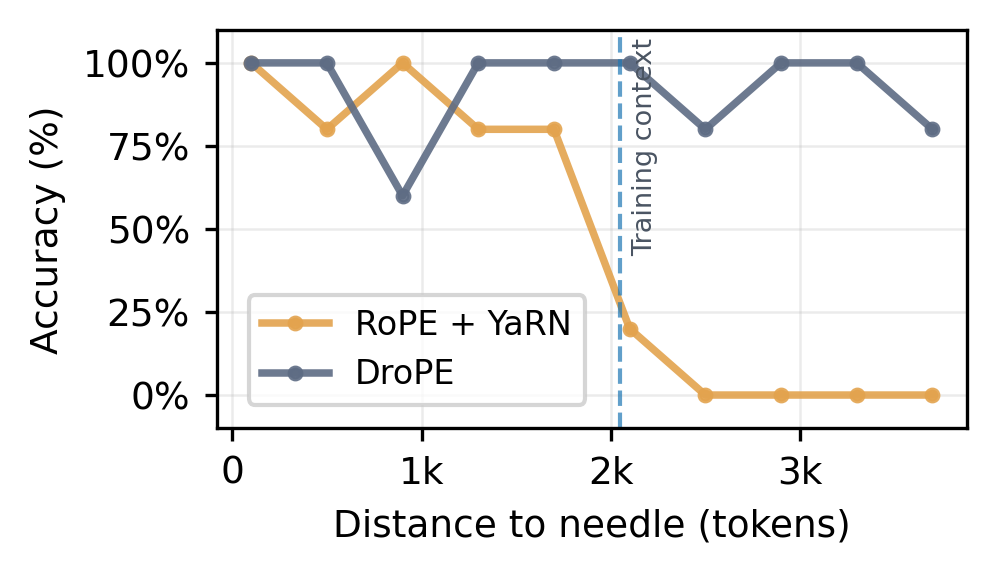}
    \vspace{-18pt}
    \caption{\textbf{DroPE generalizes zero-shot to long sequences.} Needle-in-a-haystack retrieval accuracy on sequences at $2\times$ the original context length with \emph{no long context training} (zero-shot context extension).}\label{fig:drope_v_yarn_niah}
\end{wrapfigure}

Transformers established themselves as the predominant architecture for training foundation models at unprecedented scale in language and beyond~\citep{gpt3, gemini, alphafold, vit}. The defining feature of transformers is abandoning explicit architectural biases such as convolutions and recurrences in favor of highly general self-attention layers~\citep{vaswani2017attention}, while injecting positional information about the sequence through positional embeddings (PEs) and causal masking. However, despite significant efforts to scale attention to long sequences on modern hardware~\citep{flashattention, ringattention, blockattention}, this powerful layer is inherently bottlenecked by quadratic token-to-token operations, which makes pretraining at long sequence lengths computationally intractable at scale. As a result, enabling models to use contexts beyond their pretraining length \emph{without additional long-context fine-tuning} (i.e., ``zero-shot context extension'') has emerged as a central challenge for the next generation of foundation models~\citep{alibi, chi2023attention}.

\looseness=-1
When inference sequence lengths exceed the pretraining context, the performance of modern transformer-based LMs degrades sharply. This is directly caused by their use of explicit PEs such as the ubiquitous rotary positional embeddings (RoPE)~\citep{rope}, which become out-of-distribution at unseen sequence lengths. To address this issue, careful scaling techniques that adapt RoPE frequencies on longer sequences were introduced~\citep{pi, ntk, yarn, longrope}. However, despite their popularity, these methods still rely on an expensive, long-context finetuning phase to \emph{meaningfully} use tokens beyond the original sequence length, failing to generalize out of the box~\citep{acontrolledstudy}. Beyond RoPE transformers, alternative architectures and positional embedding schemes have shown early promise in reducing costs by attenuating the underlying quadratic computational burden~\cite{performer, linformer, nystromformer, bigbird} or maintaining better out-of-context generalization~\citep{nope, rope2nope, swangpt}. Yet, these parallel efforts are still far from challenging established pipelines, introducing notable performance and stability trade-offs that prevent wide adoption.

\begin{figure}[t]
    \centering
    \includegraphics[width=0.85\textwidth]{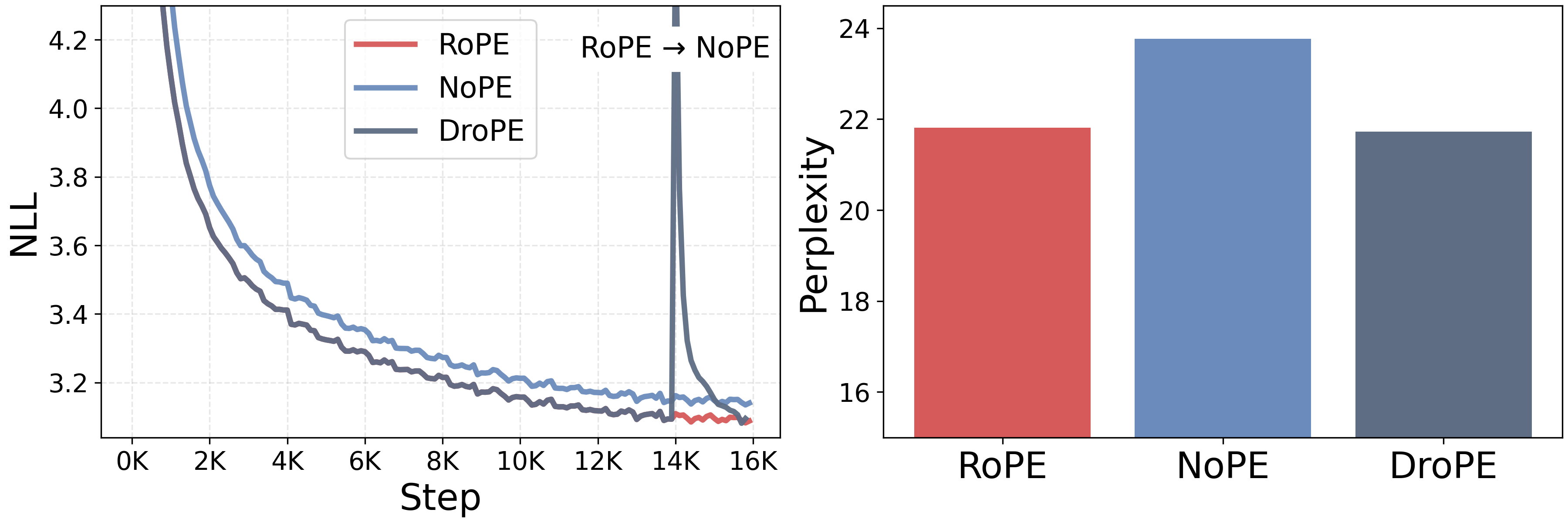}
    \vspace{-2pt}
    \caption{\textbf{\ouralgo matches RoPE's in-context perplexity.} We compare three training recipes: (1) a RoPE transformer trained for 16K steps (16B tokens), (2) a NoPE transformer trained for 16K steps, and (c) a \ouralgo transformer obtained by training the 14K-step RoPE checkpoint for 2K additional steps. The \ouralgo recipe matches the RoPE transformer's loss within 2K steps and achieves lower final perplexity than the NoPE-from-scratch baseline under the same budget.}\label{fig:loss_and_ppl}
\end{figure}

\looseness=-1
In this work, we challenge the conventional role of RoPE in language modeling, and propose to overcome this inherent trade-off by \emph{\textbf{Dro}pping the \textbf{P}ositional \textbf{E}mbeddings} (\ouralgo) of LMs after pretraining. Our method is based on three key theoretical and empirical observations. First, explicit positional embeddings significantly facilitate pretraining convergence by baking in an important inductive bias that is difficult to recover from data alone. Second, over-reliance on positional embeddings is precisely what prevents test-time generalization to sequences of unseen length, with RoPE-scaling context extension methods focusing on recent tokens instead of ones deeper in the context to retain perplexity. Third, explicit PE is not an inherent requirement for effective language modeling and can be \emph{removed after pretraining}, following a short recalibration phase which is performed at the \emph{original context length}. 

Empirically, \ouralgo models generalize zero-shot to sequences far beyond their training context, marking a sharp contrast to traditional positional scaling techniques. Moreover, we show that adapting RoPE models with \ouralgo does not compromise their original in-context capabilities, preserving both perplexity and downstream task performance. Our findings hold across LMs of different architectures and sizes up to 7B parameters pretrained on trillions of tokens, establishing a new standard for developing robust and scalable long-context transformers.

\textbf{Contributions.} In summary, our main contributions are as follows:
\begin{enumerate}[label=(\arabic*)]
    \item In Section~\ref{sec:pe_is_important}, we provide empirical and theoretical analysis of the role of positional embeddings in LM \emph{training}, showing their importance in significantly accelerating convergence.
    \item In Section~\ref{sec:rope_scaling_fails}, we discuss why RoPE-scaling methods fail to reliably attend across far-away tokens when evaluated zero-shot on long sequences, showing that these approaches inevitably shift attention weights, hindering the model's test-time behavior.
    \item In Section~\ref{sec:4method}, we introduce \ouralgo, a new method that challenges the conventional role of positional embeddings in transformers, motivated by our empirical and theoretical analyses of its role as a transient but critical training inductive bias.
    \item We demonstrate that \ouralgo enables \emph{zero-shot generalization} of pretrained RoPE transformers far beyond their original sequence length, \emph{without any long-context finetuning}. \ouralgo can be incorporated \emph{at no extra cost} into established training pipelines, and can be used to inexpensively empower \textit{arbitrary pretrained LLMs in the wild}.
\end{enumerate}

We \textbf{share our \href{https://github.com/SakanaAI/DroPE}{code}} to facilitate future work and extensions toward developing foundation models capable of handling orders-of-magnitude longer contexts.
\section{Preliminaries}\label{sec:2prel}

\textbf{Self-attention.} Let $h_1,\dots,h_T\in\sR^{d}$ be the representations fed into a multi-head attention block. Queries $q_i$, keys $k_i$, and values $v_i$ are computed by projecting the inputs $h_i$  via linear layers $W_Q$, $W_K$, and $W_V$. The \emph{attention} operation then computes a $T \times T$ matrix of attention scores $s_{ij}$ and then weights $\alpha_{ij}$ between all pairs of sequence positions, and reweighs value vectors:
\begin{equation}\label{eq:attention}
    s_{ij}=\tfrac{1}{\sqrt{d_k}}q_i^\top k_j, \quad
    \alpha_{ij}=\softmax(s_{i1},\dots,s_{ii})_j,\quad
    z_i=\sum_{j\leq i}\alpha_{ij}v_j,
\end{equation}
where $d_k$ is the head dimension. A multi-head attention block computes multiple attention outputs $z_i^{(1)}, \dots, z_i^{(H)}$, concatenates them, and projects to the model dimension: $o_i = W_O[z_i^{(1)}, \dots, z_i^{(H)}]$.
% The quadratic complexity of the attention operation is a critical computational bottleneck in modern transformers, with past advancements in mitigating its memory constraints on modern hardware having had a tremendous impact across the literature~\citep{flashattention, flashattention2}.

\textbf{Language and positional embeddings.} State-of-the-art autoregressive transformer LMs use information about sequence positions provided both implicitly via causal masking of the attention scores\footnote{Note the softmax in Equation~\ref{eq:attention} is taken on the first $i$ tokens, implementing a causal mask.}, and explicitly with positional embeddings. In particular, the modern literature has settled on the Rotary PE (RoPE) scheme~\citep{rope}, providing relative positional information to each attention head by rotating $q_i$ and $k_j$ in 2D chunks before the inner product in Equation~\ref{eq:attention}:
\begin{equation}\label{eq;rope_attnetion}
    s_{ij} = \tfrac{1}{\sqrt{d_k}}(R^iq_i)^\top (R^j k_j)=  \tfrac{1}{\sqrt{d_k}}q_i^\top R^{j - i} k_j, \qquad R=\operatorname{block\text{-}diag} \big(R(\omega_1),\ldots,R(\omega_{d_k/2})\big).
\end{equation}
Here, each $R(\omega_m)\in \R^{2\times2}$ is a planar rotation of angle $\omega_m=b^{-2(m-1)/d_k}$ acting on the $(2m,2m+1)$ subspace of $q_i$ and $k_j$. The base $b$ is commonly taken to be $10{,}000$.

\textbf{Context extension for RoPE.} Given the rapidly growing costs of self-attention, adapting LMs for longer sequences than those seen during training has been a longstanding open problem. To this end, prior context-extension methods introduce targeted rescaling of the RoPE frequencies in Equation~\ref{eq;rope_attnetion} to avoid incurring unseen rotations for new sequence positions. Formally, let the training and inference context lengths be $C_\mathrm{train} < C_\mathrm{test}$, and define the extension factor $s= C_\mathrm{test} / C_\mathrm{train}$. Context extension methods such as PI~\citep{pi}, RoPE-NTK~\citep{ntk}, and the popular YaRN~\citep{yarn} define new RoPE frequencies $\omega_m' = \gamma_m \omega_m$ with scaling factors:
\begin{equation}\label{eq:rope_scaling}
    \gamma_m^\mathrm{PI} = \tfrac{1}{s}, \quad
     \gamma_m^\mathrm{NTK} = \left(\tfrac{1}{s}\right)^{\frac{2m}{d_k -2}}, \quad\text{and}\quad \gamma_m^\mathrm{YaRN} = (1-\kappa_m)\tfrac{1}{s} + \kappa_m,
\end{equation}

where $\kappa_m \in [0, 1]$ interpolates between 0 and 1 as the base frequency $\omega_m$ grows (see Appendix~\ref{apx:extended_preliminaries}). These methods, referred to as \emph{RoPE-scaling}, still require additional finetuning on long sequences, and don't generalize to long-context downstream tasks out of the box~\citep{controlledstudylongcontext_rush}. % Additionally, there have been efforts to design transformers without positional embeddings, commonly referred to as NoPE architectures \citet{nope}, to avoid the need for this rescaling. However, these methods have failed to gain traction despite some early success at a small scale~\citep{haviv2022transformer, nope} due to degraded performance~\citep{rope2nope}. For a detailed overview of multi-head attention and RoPE scaling, see Appendix~\ref{apx:extended_preliminaries}.

\looseness=-1
\textbf{NoPE transformers.} In a parallel line of work, there have been efforts to train transformers without PEs, commonly referred to as NoPE architectures \citep{haviv2022transformer, nope}, to avoid the need for rescaling RoPE frequencies. While NoPE was shown to be a viable LM architecture, it has failed to gain traction due to degraded performance~\citep{haviv2022transformer, rope2nope} compared to RoPE architectures. For an in-depth introduction to the above concepts, see Appendix~\ref{apx:extended_preliminaries}.
\section{Explicit positional embeddings are beneficial for training}\label{sec:pe_is_important}
% \subsection{The missing inductive bias of NoPE transformers}
% \label{sec3:subsec:missing_bias}

\begin{wrapfigure}[15]{r}{0.44\textwidth}
    \vspace{-10pt}
    \centering
    \includegraphics[width=0.44\textwidth]{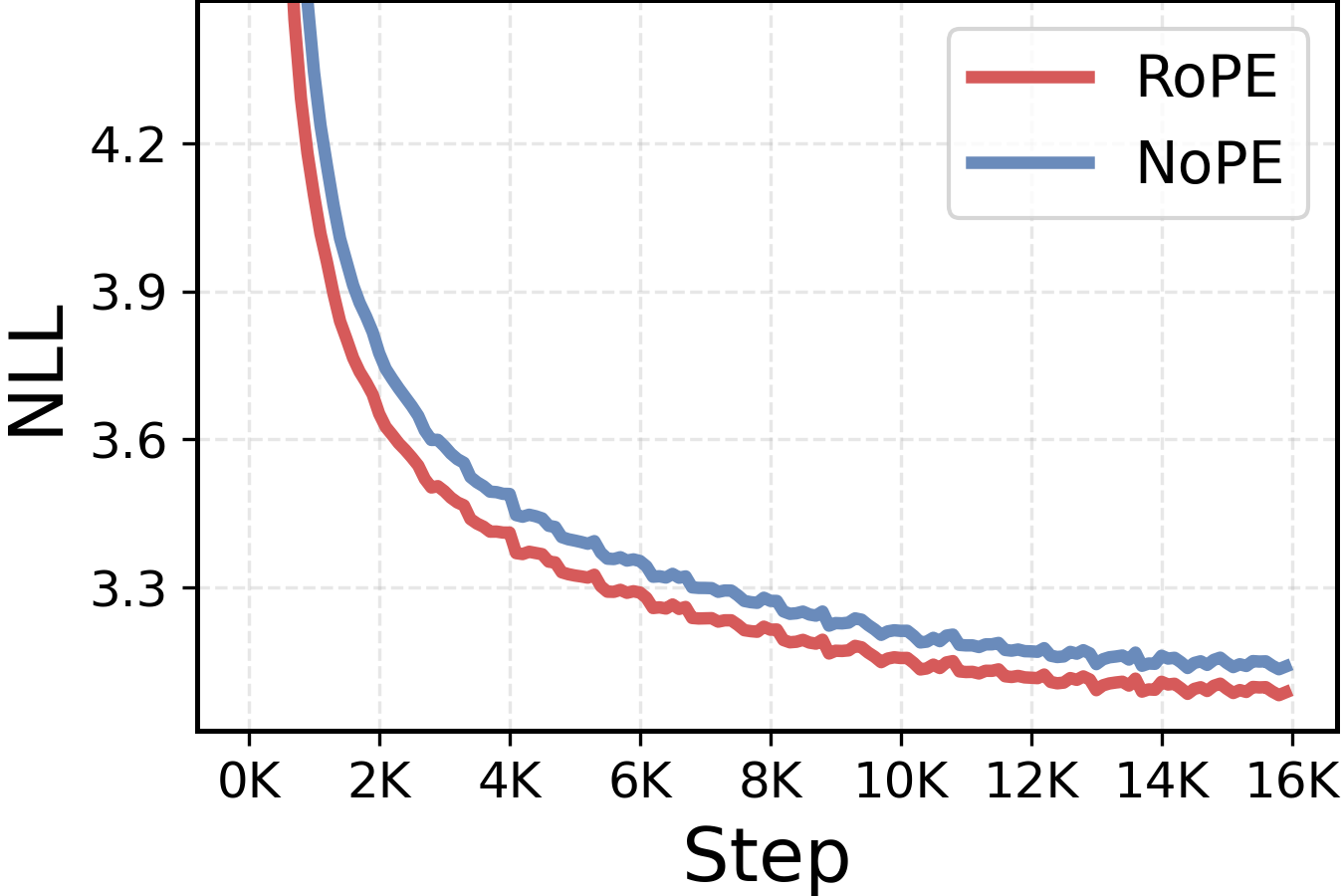}
    \vspace{-14pt}
    \caption{\textbf{RoPE outperforms NoPE.} Training loss curves for a RoPE and NoPE transformers on 16B fineweb tokens. RoPE outperforms NoPE \emph{throughout training}.}\label{fig:rope_nope_loss_comp}
\end{wrapfigure}
\looseness=-1
While NoPE transformers were shown to be \emph{expressive} enough for effective sequence modeling~\citep{haviv2022transformer, nope}, we find that they \emph{consistently underperform RoPE architectures} throughout our experiments. As illustrated in Figure~\ref{fig:rope_nope_loss_comp}, NoPE transformers maintain visibly worse perplexity \emph{throughout training}. These empirical results are consistent with past literature~\citep{haviv2022transformer, rope2nope}, yet the reasons why positional embeddings are key for effective language model \emph{training} have never been fully understood.

% Throughout our experiments, transformers trained without positional embeddings, commonly referred to as NoPE transformers~\citep{nope}, consistently underperform state-of-the-art RoPE architectures. We illustrate our findings in Figure~\ref{fig:rope_nope_loss_comp}, with NoPE displaying visibly worse perplexity \emph{throughout training}. While similar empirical results have been consistently shown in the past literature~\citep{haviv2022transformer, rope, rope2nope}, the reasons why positional embeddings are a key condition for effective language model \emph{training} have never been fully understood.

From a purely mechanistic perspective, even without explicit positional embeddings, NoPE transformers can exploit the causal mask to \emph{encode} positional information, maintaining the same expressivity as their RoPE counterparts~\citep{haviv2022transformer, nope}. Specifically, \citet{nope} prove that the first attention layer in a NoPE transformer can \emph{perfectly reconstruct} sequence positions, and subsequent layers can emulate the effects of relative or absolute positional embeddings. As detailed in Section \ref{sec3:subsec:theortical_der}, rather than looking at theoretical expressivity, we investigate this empirical performance discrepancy from an \emph{optimization} perspective, providing theoretical analysis of the positional bias of NoPE transformers during training. The theoretical and empirical analysis in this section can be summarized in the following observation.

\begin{mdframed}[style=observation]
    \begin{observation}\label{find:pe_is_crucial}
        
        Positional information and attention non-uniformity, which are crucial for sequence modeling, develop at a \textbf{bounded rate} in NoPE transformers. In contrast, explicit PE methods, such as RoPE, provide a strong bias from the outset and facilitate the propagation of positional information, resulting in faster training.
    \end{observation}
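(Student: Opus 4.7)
My plan is to decompose the observation into two formal claims and address each in turn: (i) a static characterization at initialization showing that NoPE attention weights are approximately uniform whereas RoPE injects non-trivial position-dependent structure from the outset; and (ii) a dynamical upper bound on the rate at which attention non-uniformity, and thus usable positional selectivity, can grow for a NoPE model under gradient descent.

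For (i), I would analyze the distribution of the scores $s_{ij}$ at initialization with $q_i, k_j$ having i.i.d.\ centered entries of appropriate scale. In the NoPE case $s_{ij}=q_i^\top k_j/\sqrt{d_k}$ is exchangeable in $j$ for fixed $i$, so $\mathbb{E}[\alpha_{ij}]=1/i$, and a concentration argument (e.g.\ by Taylor-expanding the softmax around uniform logits and bounding the variance of $q_i^\top k_j$) gives $\|\alpha_{i\cdot}-\mathrm{Unif}_i\|_{\mathrm{TV}}=O(1/\sqrt{d_k})$. For RoPE, the rotation $R^{j-i}$ destroys exchangeability: the bilinear form $q_i^\top R^{j-i} k_j$ has a covariance that explicitly depends on $j-i$ through the frequencies $\omega_m$, and I would derive a matching \emph{lower} bound on $\|\alpha_{i\cdot}-\mathrm{Unif}_i\|$ scaling with the spread of $|j-i|$. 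This formalizes the ``strong bias from the outset'' claim.

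For (ii), I would study gradient flow on a tractable surrogate --- a single causally-masked attention layer with a linear readout, or a teacher--student task whose teacher uses explicit positional information --- and track $\epsilon_t:=\|\alpha_{i\cdot}^{(t)}-\mathrm{Unif}_i\|$. In the near-uniform regime the output is $z_i\approx (1/i)\sum_{j\le i} v_j$, so the only positional signal available to the loss is the second-order leakage through $1/i$ and the implicit averaging, per the expressivity argument of \citet{nope}. I would bound the QK gradient norm in this regime by a quantity of the form $C\epsilon_t+\delta$ with small $\delta$ coming from the causal-mask leakage, yielding a differential inequality $\dot{\epsilon}_t\le C\epsilon_t+\delta$ and hence an exponential envelope with an explicit, modest rate constant. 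The same bound applied at $t=0$ to RoPE is already large, because $\epsilon_0$ is bounded \emph{below} by step (i), removing the warm-up barrier and implying a strictly faster initial decrease of the loss.

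The principal obstacle will be making step (ii) rigorous for realistic multi-layer transformer training, where the objective is non-convex, layers interact, and the causal-mask leakage identified by \citet{haviv2022transformer} can in principle seed non-trivial positional signal. A pragmatic route is to prove the rate bound cleanly in the linearized single-layer setting, separately exhibit the bootstrapping barrier empirically (as in Figure~\ref{fig:rope_nope_loss_comp}), and argue that the qualitative conclusion --- NoPE must first amplify a faint causal-mask signal before it can exploit it, while RoPE short-circuits this phase by construction --- transfers to the full model. This split between a rigorous simplified result and an empirically validated extension is in keeping with the informal status of Observation~\ref{find:pe_is_crucial}.
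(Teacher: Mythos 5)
Your decomposition into a static initialization claim and a dynamical rate bound is reasonable at a high level, but there is a genuine gap in step~(i) and your step~(ii) diverges substantially from what the paper actually proves.

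On step~(i): the concentration you appeal to does not deliver $\|\alpha_{i\cdot}-\mathrm{Unif}_i\|_{\mathrm{TV}} = O(1/\sqrt{d_k})$ at standard transformer initialization. With the usual scaling, $q_i^\top k_j/\sqrt{d_k}$ has $O(1)$ fluctuations in $j$, so the softmax sits a constant TV-distance from uniform; exchangeability over $j$ only tells you the deviations are \emph{randomly placed}, not small. The paper's mechanism is different and sidesteps this: it introduces the \emph{prefix spread} $\Delta_h^{(l)} = \max_{j\le i}\|\bar h^{(l)}_i - h^{(l)}_j\|$ and shows (Prop.~\ref{prop:rho}, Lemma~\ref{lem:softmax}) that small spread of the hidden states forces the attention logits in a row to be close to one another, hence $\|\alpha_i - u_i\|_1 \lesssim \Delta_h^{(l)}$, with constants independent of both $d_k$ and $T$. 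Moreover the paper does not merely track a raw distance from uniform but a \emph{centered} positional-bias functional $\abias^c$ with $\sum_{j\le i}c_{ij}=0$ (Definition~\ref{sec3:defn:positional_bias}); this centering is what lets random-but-uncorrelated fluctuations cancel and makes the bound meaningful even when attention is not literally near-uniform. Finally, your plan omits the depth-propagation argument: the paper's Theorem~\ref{thm:propagation} shows via Propositions~\ref{prop:secondorder}--\ref{prop:layer_recur} that small spread is preserved layer by layer, which is the load-bearing piece of the NoPE side. The paper also has a clean exact contrast you do not use: on constant input sequences, NoPE attention is \emph{exactly} uniform with vanishing $Q/K$ gradients (Proposition~\ref{prop:const_seq}), while any non-degenerate RoPE head has $\abias^c > 0$ and $\|\nabla_\params \abias^c\| > 0$ (Proposition~\ref{prop:rope_lowerbound}).

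On step~(ii): the paper stops at a bound on $\|\partial\abias^c/\partial W_Q\|$ and $\|\partial\abias^c/\partial W_K\|$ \emph{at initialization}, and interprets that as limiting the initial rate of escape from uniformity; it does not prove a Gr\"onwall-type inequality $\dot\epsilon_t \le C\epsilon_t+\delta$ valid along the training trajectory. Your proposed dynamical bound is strictly more ambitious --- it would require uniform-in-time Lipschitz control of the attention map and of the causal-mask leakage, precisely the obstacles you flag. It could be an interesting strengthening, but it is not needed to justify the observation and is not what the paper argues. If you want to match the paper's proof, replace your concentration-over-$d_k$ argument with the prefix-spread recursion, work with the centered functional $\abias^c$, and be content with a gradient-norm bound at $t=0$ rather than a differential inequality.
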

\end{mdframed}
At a high level, our analysis focuses on the rate at which NoPE and RoPE transformers can develop \emph{positional bias} in their self-attention heads, which captures their non-uniformity. We quantify attention positional bias as a linear functional on the attention map:
\begin{mdframed}[style=theorem]
    \begin{definition}[Attention positional bias]
    \label{sec3:defn:positional_bias}
        Given centered positional weights $c_{ij} \in \sR$ with $\sum_{j \leq i} c_{ij} = 0$, the \emph{positional bias} of the attention weights $\alpha_{ij}$ is
        \vspace{-5pt}
        \begin{equation*}
            \abias^c(\alpha)= \frac{1}{T}\sum_{i=1}^T\sum_{j \leq i} c_{ij} \alpha_{ij}.
        \end{equation*}
    \end{definition}
\end{mdframed}
\looseness=-1
Attention heads with a strong positional bias would maximize the average value of $\abias^c$ across input sequences for some weights $c$. For example, a ``diagonal'' attention head, focusing mass on the current token, is exactly the maximizer of $\abias^c$, with $c_{ij}$ having $1$s on the diagonal and $-\tfrac{1}{i-1}$ otherwise.

To validate the theory behind Observation~\ref{find:pe_is_crucial}, we empirically compare the gradients of the attention positional bias functional in attention heads of RoPE and NoPE transformers. Specifically, we measure the average gradient norm at initialization in the direction of two common language modeling patterns: diagonal attention heads, placing mass on the current token, and off-diagonal heads, capturing immediate previous token context. As illustrated in Figure~\ref{fig:abias_grads}, the gradient magnitudes of NoPE transformers are far lower than those of RoPE transformers, with the gap between the two growing in deeper layers. This means that diagonal and off-diagonal heads are slower to develop under NoPE, reflecting its difficulty in recovering positional information. In the next section, we theoretically analyze the causes of this gradient norm gap.

\begin{figure}[t]
    \vspace{-10pt}
    \centering
    \begin{subfigure}[b]{0.49\textwidth}
        \centering
        \includegraphics[width=0.95\textwidth]{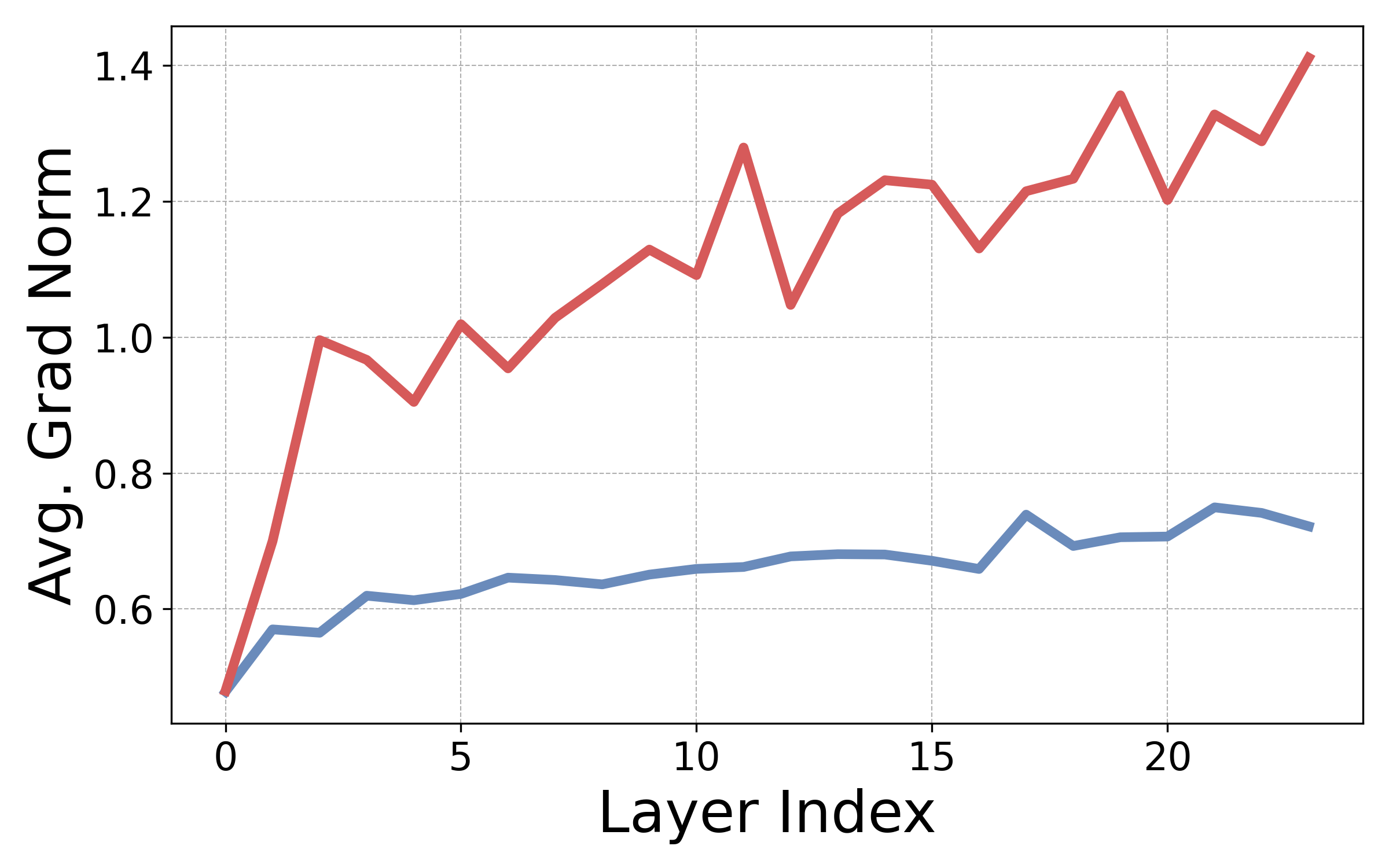}
        \caption{Diagonal head bias.}
        \label{fig:diagonal}
    \end{subfigure}
    \hfill
    \begin{subfigure}[b]{0.49\textwidth}
        \centering
        \includegraphics[width=0.95\textwidth]{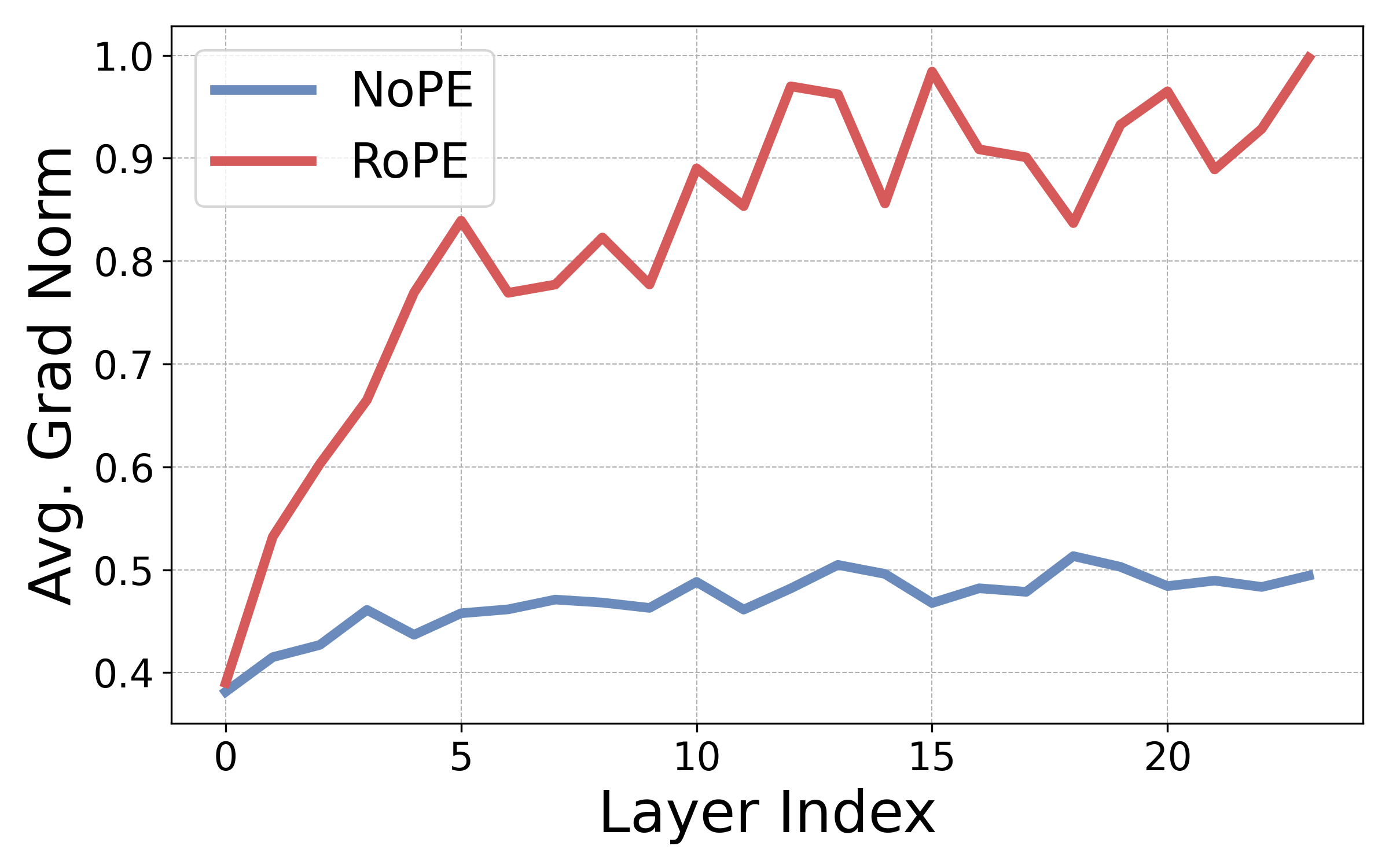}
        \caption{Off-diagonal head bias.}
        \label{fig:off_diagonal}
    \end{subfigure}
    \caption{\textbf{RoPE transformers have higher positional bias gradients at initialization.} We compare the average norm of $\abias^c$ across layers, for RoPE and NoPE transformers. In~\ref{fig:diagonal} we plot the gradient norms of positional bias towards a diagonal head, and in~\ref{fig:off_diagonal}, we take bias towards previous token attention, off-diagonal head. In both cases, the gradient norm is consistently higher for RoPE across layers, meaning that RoPE heads can learn these patterns faster.}\label{fig:abias_grads}
    %\vspace{-10pt}
\end{figure}

\subsection{Theoretical analysis}\label{sec3:subsec:theortical_der}
We detail our findings, summarized in Observation~\ref{find:pe_is_crucial}, with a series of formal results, bounding the rate at which positional bias can develop early in training. We provide full proofs and an extended analysis of these results in Appendix~\ref{appendix:extended_theory}. Throughout this section, we study the sensitivity of the attention positional bias $\abias^c$ to the transformer's parameters and interpret $\norm{\nabla_\params \abias^c}$ as bounding the rate at which non-uniform attention patterns can emerge during training.

\textbf{Warm-up: NoPE transformers break on constant sequences.} Before moving to the main theoretical result, we consider a motivating example that illustrates NoPE transformers' training difficulties. Because attention forms a convex combination of value vectors, an attention head applied to a sequence of identical tokens $x_1 = \cdots = x_T$ produces identical outputs at every position. Moreover, since normalization layers, MLP blocks, and residual connections act \emph{pointwise} on tokens, this uniformity propagates through the network. In a NoPE transformer, this means the attention logits are constant over all $j \leq i$, hence the post-softmax attention probabilities are uniform. Consequently, the model cannot induce any positional preference and $\abias^c \equiv 0$ for \emph{any} positional weights $c$. % This intuition is formalized in the following proposition.
\begin{mdframed}[style=theorem]
    \begin{restatable}{proposition}{ConstSeq}\label{prop:const_seq}
        
        Let $\mathsf{M}$ be a NoPE transformer. If the input sequence $x = (x_1, \dots, x_T)$ is comprised of identical tokens $x_1 = \cdots = x_T$, then (1) \textbf{all} attention heads are uniform: $\alpha_{ij} = \frac{1}{i}$, (2) query and key gradients vanish: $\partial \gL / \partial W_Q = \partial \gL / \partial W_K = 0$, (3) for all heads and any positional weights $\abias^c = 0$, $\nabla_\params \abias^c = \mathbf{0}$ , and (4) the output is constant: $\mathsf{M}(x)_1 = \cdots = \mathsf{M}(x)_T$.
    \end{restatable}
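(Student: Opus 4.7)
The plan is to prove all four claims from a single inductive invariant: on a constant input $x_1 = \cdots = x_T$, every layer of a NoPE transformer produces \emph{position-independent} hidden states, $h^{(\ell)}_1 = \cdots = h^{(\ell)}_T =: h^{\ast(\ell)}$. The base case is immediate since token embedding is a pointwise lookup. For the inductive step, normalization, MLP, and residual sub-layers act pointwise and trivially preserve position-invariance. The attention block is the only nontrivial case: if $h^{(\ell)}_i = h^\ast$ for all $i$, then $q_i = W_Q \operatorname{LN}(h^\ast)$, $k_j = W_K \operatorname{LN}(h^\ast)$, and $v_j = W_V \operatorname{LN}(h^\ast)$ are all equal across positions. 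In a NoPE block this forces the scores $s_{ij} = \tfrac{1}{\sqrt{d_k}} q^{\ast\top} k^\ast$ to be constant over all pairs $(i,j)$, so the causal softmax in Equation~\ref{eq:attention} yields $\alpha_{ij} = 1/i$ (claim 1), and $z_i = \sum_{j \leq i}(1/i)\, v^\ast = v^\ast$ is position-independent, closing the induction. Applied at the final layer, this also gives claim (4).

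For claim (3), substituting $\alpha_{ij} = 1/i$ into Definition~\ref{sec3:defn:positional_bias} immediately gives $\abias^c = \frac{1}{T}\sum_i \frac{1}{i}\sum_{j\leq i} c_{ij} = 0$ by the centering assumption $\sum_{j\leq i} c_{ij} = 0$. For $\nabla_\params \abias^c = \mathbf{0}$, I would argue that $\alpha_{ij}$ is locally constant in $\params$: any sufficiently small perturbation of the parameters preserves the position-invariance manifold, because pointwise sub-layers preserve it by inspection and an attention block preserves it whenever its input is position-independent (the forward-pass argument above applies to \emph{any} weight configuration). Hence at first order the perturbed model still has $\alpha_{ij} = 1/i$ at every layer, so $\nabla_\params \alpha_{ij} = \mathbf{0}$ and therefore $\nabla_\params \abias^c = \mathbf{0}$.

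Claim (2) follows from a complementary ``constant-value softmax'' observation: with position-independent values, the attention output $z_i = v^\ast \sum_{j \leq i}\alpha_{ij} = v^\ast$ has no functional dependence on the attention logits, since the softmax always sums to one. Concretely, $\partial z_i / \partial s_{ij} = v^\ast \cdot \partial(\sum_k \alpha_{ik})/\partial s_{ij} = 0$, and chaining through $s_{ij} = \tfrac{1}{\sqrt{d_k}} q_i^\top k_j$ yields $\partial \gL / \partial W_Q = \partial \gL / \partial W_K = 0$ at every attention block, independent of the upstream gradient $\partial \gL / \partial z_i$.

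The main (but minor) technical subtlety is in the gradient argument for claim (3): the perturbation must simultaneously preserve position-invariance at \emph{every} layer of the network, not just locally at the block in question. This reduces cleanly to the same layer-by-layer induction as the forward pass, since perturbing $W_Q$, $W_K$, $W_V$, MLP weights, or embedding/LN parameters all leave the attention block's output position-independent by exactly the same case analysis used above — so the position-invariant set is an invariant submanifold of parameter space under arbitrary perturbations.
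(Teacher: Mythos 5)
Your proof is correct and follows essentially the same inductive structure as the paper's: constant input forces position-independent hidden states at every layer (the paper proves this first as part (4), then derives (1), (2), and (3)). The one genuine variation is your route to claim (2). The paper argues that, with constant inputs, $\alpha_{ij} \equiv 1/i$ is independent of $W_Q, W_K$, so $\partial\alpha_{ij}/\partial W_Q = \partial\alpha_{ij}/\partial W_K = 0$, and since $\gL$ depends on $W_Q, W_K$ only through $\alpha_{ij}$ the gradients vanish. You instead observe that when the values $v_j \equiv v^\ast$ are constant, the attention output $z_i = v^\ast \sum_j \alpha_{ij} = v^\ast$ is invariant to the logits because softmax probabilities sum to one, so $\partial z_i/\partial s_{ij} = 0$ and the chain through $s_{ij}$ annihilates the gradient regardless of the upstream signal. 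Both arguments exploit the same degeneracy from opposite ends of the attention block; yours has the mild advantage of not needing claim (1) (constant values alone suffice) and makes the independence of the upstream gradient $\partial\gL/\partial z_i$ explicit. One small stylistic point: for claim (3) you frame the vanishing of $\nabla_\params \abias^c$ as preservation of an ``invariant submanifold'' under ``sufficiently small'' perturbations, but your own forward-pass argument shows $\alpha_{ij} \equiv 1/i$ for \emph{all} parameter values when the input is constant, not just a neighborhood, so the manifold language and the smallness qualifier are unnecessary.
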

\end{mdframed}

The explicit positional information injected into attention heads in RoPE transformers circumvents this issue. Enabling non-zero $\abias^c$ gradients even on constant sequences.
\begin{mdframed}[style=theorem]
    \begin{restatable}{proposition}{RoPELowerbound}\label{prop:rope_lowerbound}
        For a non-trivial RoPE attention head, even if the input sequence is constant, there are positional weights $c$, for which $\abias^c > 0$, and $\norm{\nabla_\params \abias^c} > 0$.
        % \begin{equation*}
        %     \abias^c > 0, \quad \text{and} \quad \norm{\nabla_\params \abias^c} > 0.
        % \end{equation*}
    \end{restatable}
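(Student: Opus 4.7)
My plan parallels Proposition \ref{prop:const_seq}. Fix a non-zero constant input $x_1 = \cdots = x_T = x$ and consider a head in the first attention block (the argument for deeper layers is strictly easier, as discussed below). Since token embeddings and all pointwise sub-layers preserve position uniformity, the representations entering the head satisfy $h_1 = \cdots = h_T =: h$, so the pre-rotation queries and keys $q = W_Q h$ and $k = W_K h$ are also position-independent. The RoPE scores therefore reduce to $s_{ij} = \tfrac{1}{\sqrt{d_k}} q^\top R^{j-i} k =: \tfrac{1}{\sqrt{d_k}} g(i-j)$, a function of only the relative offset $\ell = i - j \geq 0$.

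For the first half of the claim I would exhibit non-uniform attention and align $c$ with it. Expanding in RoPE's 2D-block structure,
\begin{equation*}
    g(\ell) = \sum_{m=1}^{d_k/2} \norm{q^{(m)}}\,\norm{k^{(m)}}\,\cos(\theta_m + \ell\,\omega_m),
\end{equation*}
a sum of cosines at the pairwise distinct frequencies $\omega_m = b^{-2(m-1)/d_k}$. For a non-trivial head, where at least one block has $q^{(m)}, k^{(m)} \neq 0$, $g$ is non-constant in $\ell$, so the scores vary across $j \leq i$ and the softmax weights $\alpha_{ij}$ are non-uniform. I then \emph{fix}, at the current parameters $\params$, the real tensor
\begin{equation*}
    c_{ij} := \sqrt{d_k}\bigl(s_{ij} - \bar s_i\bigr), \qquad \bar s_i := \tfrac{1}{i}\sum_{j\leq i} s_{ij},
\end{equation*}
which automatically satisfies $\sum_{j\leq i} c_{ij} = 0$. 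A direct calculation gives $\abias^c = \tfrac{\sqrt{d_k}}{T}\sum_i \bigl(\mathbb{E}_{j\sim\alpha_i}[s_{ij}] - \bar s_i\bigr) > 0$, since the softmax-weighted mean of a row of scores strictly exceeds their arithmetic mean whenever the scores are non-constant in that row (standard log-sum-exp convexity).

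For the gradient, I would differentiate along the direction $\delta W_Q = \epsilon\,W_Q$, which sends $q \mapsto (1+\epsilon)q$ and hence $s_{ij} \mapsto (1+\epsilon)s_{ij}$. Using the softmax differential $\partial \alpha_{ij}/\partial s_{ij'} = \alpha_{ij}(\delta_{jj'} - \alpha_{ij'})$ and simplifying,
\begin{equation*}
    \partial_\epsilon \abias^c \big|_{\epsilon=0} = \tfrac{\sqrt{d_k}}{T}\sum_{i=1}^T \Var_{j\sim\alpha_i}[s_{ij}] > 0,
\end{equation*}
again strictly positive by the same non-constancy of scores. Hence $\norm{\nabla_\params \abias^c} > 0$, as required.

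The main obstacle is pinning down what ``non-trivial'' means. The argument requires (i) $h \neq 0$ at the head's input, and (ii) at least one RoPE block with both $q^{(m)}$ and $k^{(m)}$ non-zero; both are generic conditions that hold with probability one under any continuous initialization of $W_Q, W_K$ combined with a non-zero choice of constant input token, so I would fold them into the definition of a non-trivial head. For heads in layers $\ell \geq 2$, preceding RoPE attention layers have generically already broken position uniformity, so the incoming representations $h_i$ need not coincide and the argument adapts immediately without relying on constancy of $q$ and $k$.
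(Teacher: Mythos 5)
Your proof is correct and follows the paper's overall strategy — pick a $c$ tailored to the current attention pattern so that $\abias^c > 0$ by construction, then perturb along the scaling direction $q \mapsto \lambda q$ and show the directional derivative is strictly positive — but it uses a genuinely different choice of positional weights, which changes the algebra in both halves. The paper sets $c_{ij} = \alpha_{ij} - \tfrac{1}{i}$, so that $\abias^c = \tfrac1T\sum_i\bigl(\sum_j \alpha_{ij}^2 - \tfrac1i\bigr)$; positivity comes from Cauchy--Schwarz on the $\alpha_{ij}$, and the gradient is handled by writing $F_i = \sum_j\alpha_{ij}^2 = Z_i(2\lambda)/Z_i(\lambda)^2$ and showing $A_i' = (\log Z_i)'$ is strictly increasing. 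You instead take $c_{ij} = \sqrt{d_k}\,(s_{ij} - \bar s_i)$, which yields $\abias^c = \tfrac{\sqrt{d_k}}T\sum_i(\E_{\alpha_i}[s_i] - \bar s_i)$; positivity is then the standard fact that the softmax-weighted mean dominates the uniform mean (equivalently, $A_i'(1) > A_i'(0)$ by strict convexity of the log-partition), and the directional derivative collapses directly to $\tfrac{\sqrt{d_k}}T\sum_i \Var_{\alpha_i}(s_{i\cdot})$. Both choices make $\abias^c$ a covariance between $\alpha_{i\cdot}$ and a centered quantity (the centered weights themselves versus the centered logits), and both hinge on non-constancy of the logits; your route avoids the two-temperature partition function entirely and gets the variance in one line, which is arguably cleaner.

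The one place you gloss where the paper is careful is the claim that ``$g$ is non-constant in $\ell$.'' With several non-zero $2\times 2$ blocks this is not immediate, since the cosines could in principle cancel; the paper isolates this as Lemma~\ref{lem:non_unif} and proves it via Vandermonde (distinct $\pm\omega_m$ give a linearly independent exponential system, so the only constant combination has all $C_m = 0$, i.e.\ every block degenerate). Your sketch already contains the right ingredients (distinct frequencies, at least one block with $q^{(m)}, k^{(m)} \neq 0$), so this is a citation-to-fill rather than a missing idea, but you should either invoke the Vandermonde/linear-independence argument explicitly or cite the lemma, otherwise the non-constancy of $\{s_{ij}\}_{j\le i}$ — on which every strict inequality in your proof rests — is asserted rather than established.
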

\end{mdframed}

\textbf{NoPE transformers propagate embedding uniformity.} At initialization, the entries of the embedding matrix are drawn i.i.d. from a distribution with a fixed small variance (commonly, $\sigma^2 = 0.02$). Therefore, the token embeddings are close to uniform at the beginning of training. The next theorem shows that for NoPE transformers, this uniformity persists throughout the network, and bounds the attention positional bias $\abias^c$ and its gradients. 
\begin{mdframed}[style=theorem]
    \begin{restatable}{theorem}{Propagation}\label{thm:propagation}
        Define the he prefix-spread of the hidden states at layer $l$ as 
        \begin{equation*}
            \Delta_h^{(l)} := \max_{1 \leq j \leq i \leq T} \bnorm{\bar{h}^{(l)}_i - h^{(l)}_j}, \quad \text{where} \quad \bar{h}_i^{(l)} := \frac{1}{i}\sum_{j \leq i} h_j^{(l)}.
        \end{equation*}
        For NoPE transformers, there exists $\varepsilon > 0$ and constants $C_1$, $C_2$, and $C_3$ such that if the initial embeddings $\Delta_h^{(1)} \leq \varepsilon$, then for all layers $l \leq L$:
        \begin{equation*}
            \Delta_h^{(l)} \leq C_1 \varepsilon, \qquad \big|\abias^c\big| \leq C_2 \varepsilon, \qquad \Bnorm{\partial \abias^c / \partial W_Q}, \Bnorm{\partial \abias^c/\partial W_K} \leq C_3 \varepsilon,
        \end{equation*}
        with high probability over the initialization distribution. The constants only depend on the number of layers and heads, and \textbf{not} on the sequence length.
    \end{restatable}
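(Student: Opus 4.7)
The plan is to prove all three bounds simultaneously by induction on the layer index $l$, with the prefix-spread $\Delta_h^{(l)}$ serving as the master quantity that controls $|\abias^c|$ and the gradient norms at each layer. The base case $l=1$ is the hypothesis $\Delta_h^{(1)} \leq \varepsilon$. For the inductive step I would analyze a single transformer block (multi-head attention, residual, RMSNorm, MLP, residual) and show that the spread grows by at most a multiplicative factor that depends on the operator norms of the weight matrices and on the number of heads, but not on the sequence length $T$. Iterating $L$ times then yields $\Delta_h^{(l)} \leq C_1 \varepsilon$, and once the attention weights are pinned down as perturbations of the uniform distribution, the remaining two bounds follow with essentially the same proof.

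The core mechanism to exploit is a symmetry-plus-Lipschitz argument for NoPE attention on near-uniform inputs. If $\Delta_h^{(l)} \leq \delta$, then because $W_Q$, $W_K$, $W_V$ act pointwise the queries, keys, and values of every prefix $j \leq i$ lie within $\|W\|_{\mathrm{op}} \delta$ of their running averages. The scores $s_{ij} = q_i^\top k_j / \sqrt{d_k}$ therefore sit in an $O(\delta)$-interval around a common value $\bar s_i$, and a first-order expansion of softmax around uniform gives $\alpha_{ij} = 1/i + O(\delta)$. The attention output $z_i = \sum_{j \leq i} \alpha_{ij} v_j$ is then an $O(\delta)$ perturbation of $\bar v_i$, and since the values themselves are within $O(\delta)$ of $\bar v_i$, the prefix-spread of the block output stays $O(\delta)$. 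Residual additions, RMSNorm, and the MLP act pointwise and are Lipschitz, so the recursion closes. Plugging $\alpha_{ij} = 1/i + r_{ij}$ with $|r_{ij}| = O(\Delta_h^{(l)})$ into Definition~\ref{sec3:defn:positional_bias} and using the centering $\sum_{j \leq i} c_{ij}=0$ cancels the uniform part and leaves $|\abias^c| \leq C_2 \varepsilon$. For the gradients, I would differentiate through the softmax to express $\partial \abias^c/\partial W_Q$ as a sum of terms of the form $\alpha_{ij}(s_{ij}-\bar s_i)\, q_i h_j^\top$ plus backward contributions routed through the hidden states of later layers; each factor is either explicitly $O(\Delta_h^{(l)})$ (through $s_{ij}-\bar s_i$) or already controlled by the induction, so linearity in $\varepsilon$ is preserved.

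The main obstacle is the ``with high probability over the initialization distribution'' clause together with $T$-independence of the constants. The per-block Lipschitz factor $\kappa$ is a polynomial in the operator norms of attention and MLP weights, and a naive composition could inject either an exponential-in-$L$ blow-up or a $\sqrt{T}$ factor from Frobenius-to-operator conversions in the backward pass. To avoid this I would invoke standard random-matrix concentration (e.g.\ Gordon's inequality) to bound each initialised weight matrix's operator norm by a dimension-only constant with probability $1-e^{-\Omega(d)}$, and complement it with a uniform lower bound on $\|\bar h_i^{(l)}\|$ so the RMSNorm Jacobian stays bounded; the latter is the subtle part, since RMSNorm is only Lipschitz away from the origin and requires a small concentration argument showing the prefix-averages do not collapse to zero. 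Once these two ingredients are in place, the induction closes with constants $C_1, C_2, C_3$ that depend polynomially on $L$, $H$, and the width, but not on $T$.
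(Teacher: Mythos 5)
Your proposal follows essentially the same route as the paper: induct over layers treating $\Delta_h^{(l)}$ as the master quantity, show near-uniform inputs yield near-uniform attention whose mixing cannot amplify the spread, note that all non-attention operations are pointwise Lipschitz, and close with operator-norm concentration of the Gaussian initial weights. A few remarks on where your sketch and the paper diverge in detail. (i) Your ``first-order expansion of softmax around uniform'' is the dangerous step for $T$-independence; a naive Frobenius-norm Jacobian bound would introduce a $\sqrt{T}$ factor. The paper avoids this by proving a clean $\ell_1$-to-$\ell_\infty$ smoothness estimate, $\norm{\softmax(a+b)-\softmax(a)}_1 \leq \norm{b}_\infty$ (their Lemma B.2, obtained from $1$-smoothness of the log-sum-exp), and pairing $\norm{\alpha_i - u_i}_1$ with a $\max_{j}\norm{v_j - \bar v_i}$ bound, which is entirely dimension-free. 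You flag the risk but don't resolve it; the specific norm pairing is what makes it work. (ii) Your worry about an exponential-in-$L$ blowup is not actually an obstacle: the paper's constant $C=(2A_1)^{L-1}$ \emph{is} exponential in $L$, and that is permitted since the theorem only forbids dependence on $T$. (iii) You correctly identify a real subtlety that the paper glosses over — RMSNorm/LayerNorm is not globally Lipschitz and its Jacobian blows up as $\norm{h}\to 0$, so one needs a lower bound on hidden-state norms (or, as the paper does, one simply asserts a uniform Lipschitz constant $B$ for the normalization, justified informally by the fact that normalized vectors have norm $\sqrt{d}$ at initialization). (iv) For the gradient bounds, you speculate about ``backward contributions routed through later layers,'' but the paper only bounds $\partial\abias^c/\partial W_Q$ and $\partial\abias^c/\partial W_K$ for the \emph{same} head whose attention weights define $\abias^c$, computed directly via the softmax Jacobian and a centering identity $\sum_{j\leq i} g_{ij}=0$; no cross-layer backpropagation is needed, and adding it would complicate the argument unnecessarily.
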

\end{mdframed}
The main idea in the proof of Theorem~\ref{thm:propagation} is that uniformity in the embeddings causes uniformity in the attention maps, so $\alpha_{ij} \approx 1 /i$. Uniform mixing of tokens cannot increase the prefix spread; thus, uniformity persists throughout the network. This result explains the discrepancy between RoPE and NoPE transformers illustrated in Figure~\ref{fig:abias_grads}. 

In summary, we demonstrate that while NoPE attention can learn positional bias, attention non-uniformity develops slowly early in training due to bounded $\abias^c$ gradients at initialization. % which explains the pretraining gap between RoPE and NoPE transformers.

% Proposition~\ref{prop:rope_lowerbound} and Theorem~\ref{thm:propagation} imply that at initialization, positional bias and its gradients should be higher for RoPE than for NoPE. To verify this, we plot the average positional bias gradient for two concrete positional patterns that are common in LMs: diagonal and previous token attention heads. As illustrated in Figure~\ref{fig:abias_grads} at initialization, the gradient norms from these positional patterns are higher for RoPE across all layers.

% \textbf{Strictly non-uniform attention is slow to develop in NoPE transformers.}
% \yoav{workshop this textbf or remove}
% Finally, we show that because of the exponentiation in the softmax function, increasing an attention weight $\alpha_{ij}$ by a fixed $\varepsilon$ requires changing the corresponding attention logits by $\Omega(\varepsilon \log (i))$. Consequently, the small gradients at the beginning of training matter: escaping the near-uniform attention regime takes longer for longer sequences. 
% \begin{mdframed}[style=theorem]
%     \begin{restatable}{theorem}{Logi}\label{thm:logi}
%         Fix $1 \leq j \leq i \leq T$, $\varepsilon > 0$, a NoPE attention head, and an input sequence. With high probability over the initialization distribution, it takes $\Omega(\varepsilon^{-1}\log(i))$ gradient steps to get $\alpha_{ij} > 1 - \varepsilon$ under bounded gradient SGD.
%     \end{restatable}
% \end{mdframed}

\section{RoPE prevents effective zero-shot context extension}\label{sec:rope_scaling_fails}
\begin{figure}[t]
    \centering
    \includegraphics[width=0.85\linewidth]{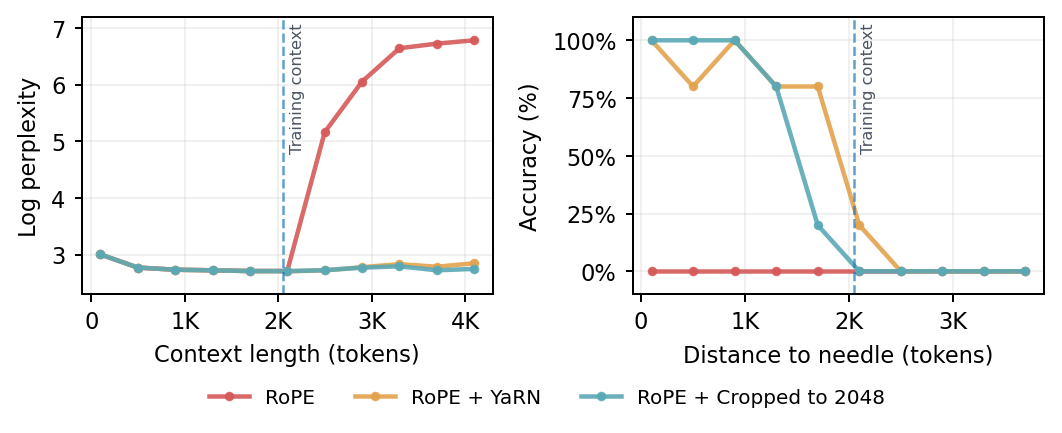}
    \caption{\textbf{YaRN crops effective retrieval context.} We compare RoPE's and YaRN's perplexity and NIAH performance at up-to $2\times$ the original context length against a baseline that \emph{crops} the input sequence to the training context length. Both YaRN and the cropped baseline can maintain perplexity on sequences exceeding the training context length, but are unable to retrieve information placed far away from the query.} \label{fig:rope_yarn_crop}
    % \vspace{-10pt}
\end{figure}
\begin{wrapfigure}[15]{r}{0.45\textwidth}
    \centering
    \vspace{-12pt}
    \includegraphics[width=0.45\textwidth]{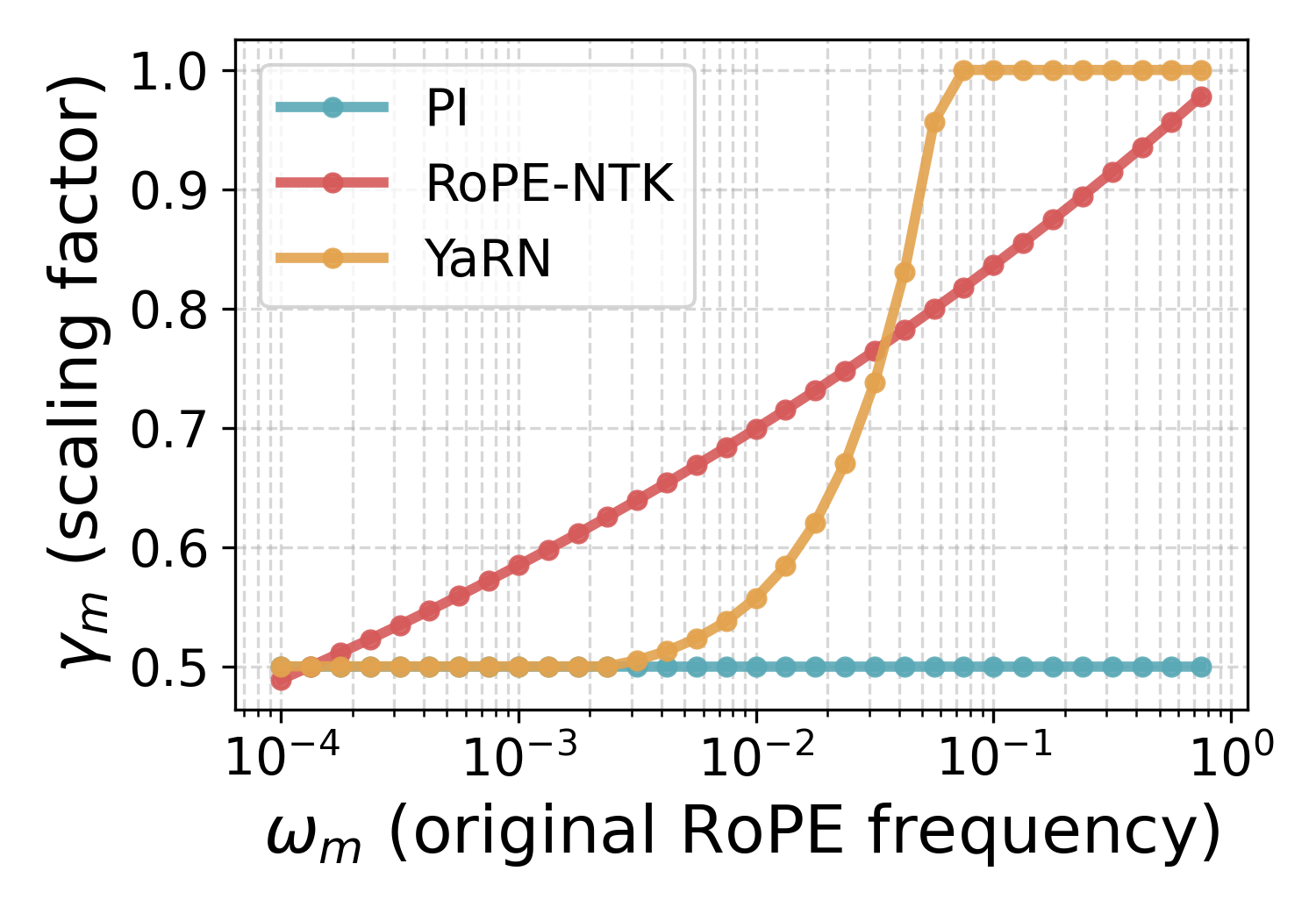}
    \vspace{-18pt}
    \caption{RoPE frequency scaling under PI, NTK-aware scaling (RoPE-NTK), and YaRN, with scaling factor $s = 2$.}\label{fig:freq_scaling}
\end{wrapfigure}

State-of-the-art RoPE scaling methods fail to effectively generalize to sequences longer than those seen in training without additional long-context finetuning. While YaRN and other popular frequency scaling techniques do avoid perplexity degradation on long-context sequence~\citep{ntk, yarn}, they exhibit sharp performance drops on downstream tasks whenever important information is present deep in the sequence, beyond the training context~\citep{controlledstudylongcontext_rush, lost_in_the_middle}. We empirically demonstrate this phenomenon, comparing the perplexity and needle-in-a-haystack (NIAH)~\citep{niah_repo, ruler} performance of a RoPE transformer scaled with YaRN and to a cropped context baseline. As illustrated in Figure~\ref{fig:rope_yarn_crop}, YaRN's zero-shot behavior closely matches that of simply \textit{cropping} the sequence length to the pretraining context, maintaining constant perplexity but ignoring information present outside the cropped window.

The cause of this limitation lies in the way context extension methods scale different RoPE frequencies. As detailed in Section~\ref{sec:2prel}, elaborated on in Appendix~\ref{apx:extended_preliminaries}, and illustrated in Figure~\ref{fig:freq_scaling}, the scaling factors of PI~\citep{pi}, RoPE-NTK~\citep{ntk}, and YaRN~\citep{yarn} have a strong effect on \emph{low frequencies}. In Section~\ref{sec4:subsec:why_inevitable}, we discuss why this aggressive scaling of low frequencies leads to the observed failures, yielding our second observation.
\begin{mdframed}[style=observation]
    \begin{observation}\label{find:yarn_is_bad_at_recall}
        RoPE-scaling methods \textbf{must} compress low frequencies to keep positional phases in-distribution. This, in turn, shifts semantic attention heads at large relative distances, causing the observed failures on downstream tasks, preventing zero-shot context extension. % that require long-context attention. % This phenomenon prevents zero-shot context extension in practice, and is \textbf{inevitable} for methods that avoid OOD positional phases by scaling frequencies.
    \end{observation}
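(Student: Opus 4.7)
My plan is to decompose Observation~\ref{find:yarn_is_bad_at_recall} into three claims and address each in turn: (i) the necessity of compressing low frequencies for zero-shot context extension, (ii) the resulting perturbation of attention scores at large relative distances, and (iii) why this perturbation disproportionately harms long-range retrieval. Parts (i) and (ii) are essentially algebraic readings of Equations~\ref{eq;rope_attnetion}--\ref{eq:rope_scaling}; part (iii) is the conceptual crux.

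First, I would establish why compressing low frequencies is unavoidable. During training, the rotation phase at frequency $\omega_m$ between positions $i,j$ ranges over $[0,\omega_m C_\mathrm{train}]$. Keeping every phase in-distribution at test length $C_\mathrm{test}=sC_\mathrm{train}$ requires $\omega_m' C_\mathrm{test}\leq\omega_m C_\mathrm{train}$, i.e.\ $\gamma_m\leq 1/s$. For high-frequency components with $\omega_m C_\mathrm{train}\gg 2\pi$, the rotation $R(\omega_m\Delta)$ is already effectively periodic over the training range, so the in-distribution constraint is vacuous and $\gamma_m\approx 1$ is safe. For low frequencies, with $\omega_m C_\mathrm{train}$ on the order of $2\pi$ or smaller, the constraint binds and forces $\gamma_m\approx 1/s$. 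This structural asymmetry is exactly what the PI, RoPE-NTK, and YaRN scaling curves in Figure~\ref{fig:freq_scaling} implement.

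Second, I would quantify the attention-score perturbation this compression induces. Writing $s'_{ij}=\tfrac{1}{\sqrt{d_k}}q_i^\top R_\gamma^{j-i} k_j$ for the scaled head, the $2\times 2$ block at frequency $\omega_m$ is perturbed from $R^{j-i}(\omega_m)$ to $R^{j-i}(\gamma_m\omega_m)$, which differs by a planar rotation of angle $(1-\gamma_m)\omega_m(j-i)$. For compressed low frequencies $(1-\gamma_m)\approx 1-1/s$, so the angular shift grows linearly with relative distance and becomes order-one precisely when $|j-i|\sim C_\mathrm{train}$, i.e.\ in the regime the extension is designed to cover; the high-frequency blocks, by contrast, see essentially no shift because $\gamma_m\approx 1$ there.

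Third, I would tie this perturbation to the observed retrieval failures. Attention heads that match distant tokens by content rely on alignment of $q_i$ with $R^{j-i} k_j$ through learned $W_Q, W_K$, which were trained under a fixed low-frequency geometry. Compressing those frequencies at test time rotates the learned ``content direction'' away from alignment at large $|j-i|$, so the score for the true retrieval-relevant key at distance $\Delta$ is suppressed and softmax mass concentrates on nearer positions whose effective phase is least distorted, precisely reproducing the cropped-context behavior of Figure~\ref{fig:rope_yarn_crop}: perplexity is preserved (because local context is untouched) while long-range recall breaks. This last step is the main obstacle, since formalizing ``semantic head shift'' cleanly requires either a stylized model of a retrieval head (a learned $q,k$ template aligned with particular low-frequency subspaces) or an empirical characterization of how trained $W_Q, W_K$ project onto those subspaces; I would therefore combine an explicit upper bound on $|s'_{ij}-s_{ij}|$ at distance $\Delta$ with the NIAH evidence in Figure~\ref{fig:rope_yarn_crop}, rather than attempt a closed-form lower bound on retrieval degradation.
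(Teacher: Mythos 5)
Your proposal mirrors the paper's own argument in Section~\ref{sec4:subsec:why_inevitable}: both derive the necessity of low-frequency compression from the in-distribution phase constraint ($\gamma_m \le 1/s$ whenever $\omega_m C_\mathrm{train} < 2\pi$, while high-frequency blocks are effectively periodic over the training range), observe that the induced angular perturbation $(1-\gamma_m)\omega_m|j-i|$ grows with relative distance and is concentrated in the low-frequency subspaces, and conclude that long-range semantic attention is shifted while positional heads are preserved. The only notable difference is that the paper grounds the low-frequency/semantic-head association in \citet{barbero2024round} and closes the loop empirically with probes on a pretrained model (Figures~\ref{fig:yarn_pos_heads} and~\ref{fig:yarn_mid_sequence}), whereas you reconstruct that association from first principles and explicitly defer to the NIAH evidence of Figure~\ref{fig:rope_yarn_crop} for the final step -- a reasonable substitution given that the statement is an informal observation rather than a theorem.
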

\end{mdframed}
% Observation~\ref{find:yarn_is_bad_at_recall} distinguishes two different known classes of attention heads in RoPE transformers: \emph{positional heads}, with attention patterns based on relative token positions (e.g., diagonal or previous-token heads), and \emph{semantic heads}, with attention patterns mostly affected by the token semantic contents~\citep{barbero2024round}. The distortion effect of RoPE scaling methods mostly affects semantic heads and \emph{grows with relative distance}, increasingly shifting attention probabilities for far-away tokens. Furthermore, we show that this phenomenon is not merely a limitation of current scaling methods, but is in fact inherent to \emph{any} RoPE-scaling method that seeks to prevent OOD positional phases outside the training context.
% \yoav{$\uparrow$, this paragraph can perhaps be removed}

\subsection{Why extrapolation failure is inevitable}
\label{sec4:subsec:why_inevitable}
\begin{wrapfigure}[10]{r}{0.5\textwidth}
    \centering
    \vspace{-30pt}
    \includegraphics[width=0.5\textwidth]{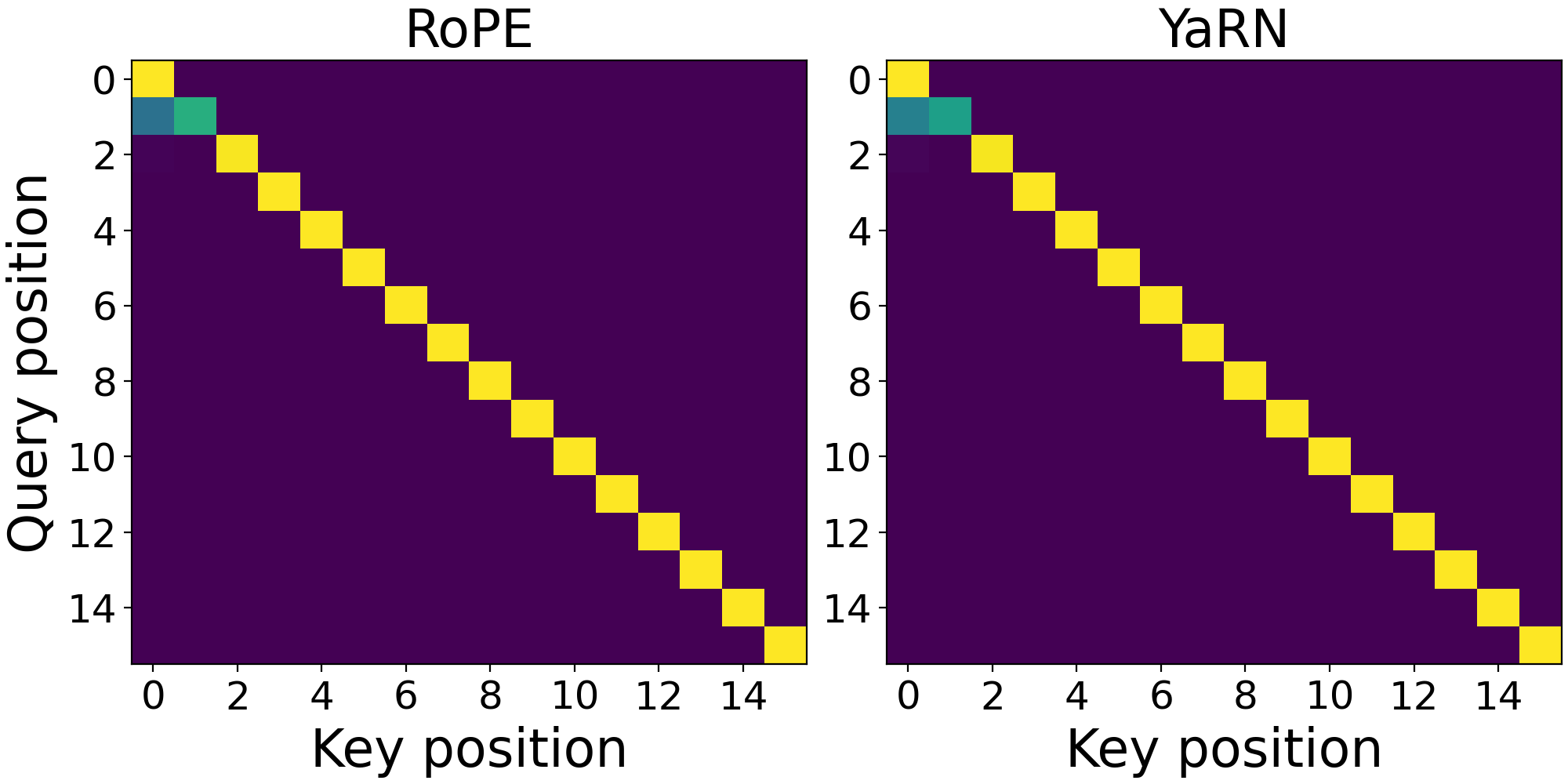}
    \vspace{-20pt}
    \caption{\textbf{RoPE scaling preserves average attention in positional heads.}}\label{fig:yarn_pos_heads}
\end{wrapfigure}

\textbf{Effect of RoPE scaling.} RoPE scaling methods modify the frequencies at inference time to evaluate sequences that are longer than those seen during pretraining. In each $(2m,2m{+}1)$ subspace, the RoPE phase at relative distance $\Delta$ is $\phi_m(\Delta)=\omega_m \Delta$, so scaling the frequency to $\omega'_m=\gamma_m \omega_m$ is equivalent to using a phase $\phi_m'(\Delta) = \gamma_m \omega_m \Delta$. As illustrated in Figure~\ref{fig:freq_scaling}, most scaling methods leave high frequencies nearly unchanged ($\gamma_m\approx 1$) but \emph{all of them} compress the low frequencies ($\gamma_m \approx 1/s$). As demonstrated both theoretically and empirically in~\citet{barbero2024round}, high RoPE frequencies are primarily used by \emph{positional heads}, with attention patterns based on relative token positions (e.g., diagonal or previous-token heads). In contrast, low frequencies are predominantly used by \emph{semantic heads} that attend based on query/key content. Consequently, positional heads are largely unaffected by scaling, but semantic attention is shifted. Moreover, the effect on low-frequency dominated semantic heads is exacerbated for distant tokens, since the relative phase $\phi_m(\Delta)$ is larger, and thus the $1/s$ scaling factor has a greater effect. In other words, scaling \emph{warps} low-frequency phases, shifting long-range attention in precisely the subspaces most used for semantic matching.

% RoPE scaling methods for sequences that are longer than those seen during pretraining have a direct mechanistic effect on implicitly encoded token distances. In each $(2m,2m{+}1)$ subspace, the relative RoPE phase is:
% \begin{equation*}
%     \phi_m(\Delta)=\omega_m \Delta.
% \end{equation*}
% So scaling the frequency to $\omega'_m=\gamma_m \omega_m$ is equivalent to using a phase $\phi_m'(\Delta) = \gamma_m \omega_m \Delta$ that encodes an \emph{effective distance} of $\Delta'=\gamma_m\Delta$. Since RoPE-based scaling methods focus primarily on low frequencies, their effect on semantic heads will be immediately exacerbated for distant tokens, since $\phi_m(\Delta)$ is larger, and thus the $1/s$ scaling term will have a greater impact. In other words, scaling \emph{warps} low-frequency phases, compressing long-range phases in precisely the subspaces most used for semantic matching.

In Figure~\ref{fig:yarn_pos_heads} and Figure~\ref{fig:yarn_mid_sequence}, we illustrate this behavior in practice. We start by selecting a positional attention head in a pretrained \textsc{Qwen2.5-0.5B} model by examining its average attention positional bias (Definition~\ref{sec3:defn:positional_bias}) across layers. In Figure~\ref{fig:yarn_pos_heads}, we show the average attention weights in this positional head under YaRN scaling with $s=2$. Because high frequencies, which are least affected by YaRN, dominate positional heads, the average attention profiles are similar. In Figure~\ref{fig:yarn_mid_sequence}, we then contrast this behavior with that of a semantic head for a long needle-in-a-haystack sequence, plotting the average attention of the last token (query) with tokens around the needle (keys). YaRN's aggressive scaling of low frequencies substantially shifts attention mass across tokens, reflecting the impact of frequency compression at longer ranges.

\begin{figure}[t]
    \centering
    \includegraphics[width=\textwidth]{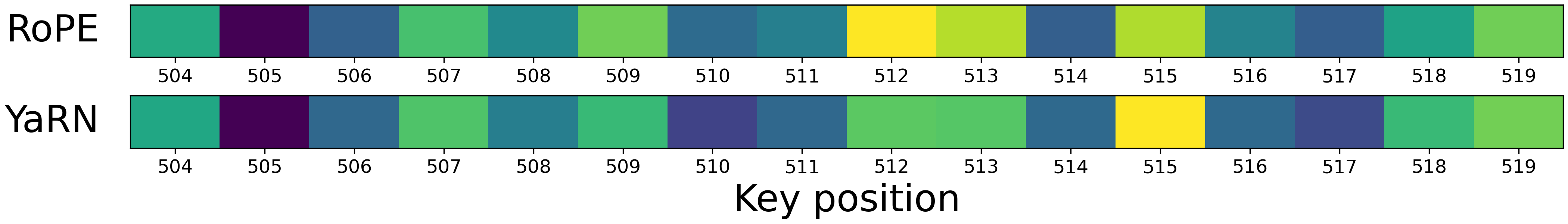}
    \caption{\textbf{RoPE scaling shifts semantic attention mass.} Attention weights of the last token (query) with tokens from a retrieval target (keys) in a semantic head evaluated on a NIAH probe. Since the head uses low frequencies and the relative distance is non-trivial, the impact of YaRN is substantial, shifting attention mass between tokens.}\label{fig:yarn_mid_sequence}
    % \vspace{-10pt}
\end{figure}

\textbf{Why this is inevitable.}
In a standard RoPE setup, low-frequency phases never make a full cycle over the original context length: $\phi_m(C_{\mathrm{train}})=\omega_m C_{\mathrm{train}} < 2 \pi$ for small $\omega_m$. E.g. for a standard RoPE base $b = 10^4$, a transformer with head dimension $d_k = 64$, will have at least five low frequencies for which $\phi_m(C_\mathrm{train}) < 2 \pi$, even at a training context of $C_\mathrm{train} = 32{,}000$. If we leave $\omega_m$ unchanged at an extended length $C_{\mathrm{test}} > C_{\mathrm{train}}$, the new maximal relative phase $\phi_m(C_{\mathrm{test}})$ is pushed outside the training regime and becomes out of distribution for the head. Therefore, to constrain phases to remain in range, any scaling method must choose $\gamma_m \le \tfrac{C_{\mathrm{train}}}{C_{\mathrm{test}}} = \tfrac{1}{s}$, which becomes increasingly small as the extension factor $s$ grows. In other words, when applying a RoPE transformer to sequences longer than those seen in training, any post-hoc scaling method \emph{must} compress the low frequencies. But this compression, in turn, shifts attention weights at long relative distances.

\section{\ouralgo: Dropping positional embeddings after pretraining}\label{sec:4method}

\looseness=-1
Taken together, Observations~\ref{find:pe_is_crucial} and~\ref{find:yarn_is_bad_at_recall} imply that providing explicit positional information with PE is a key component for effective LM training, but is also a fundamental barrier to long-context generalization. This raises a natural question: is it possible to harness the inductive bias from positional embeddings \textit{exclusively} during training? We answer in the affirmative. In this section, we demonstrate that it is possible to drop all positional embeddings from a pretrained transformer and quickly recover the model's in-context capabilities with a brief recalibration phase. Most notably, this simple new procedure (termed \ouralgo) unlocks strong \textit{zero-shot} long context generalization to unseen sequence lengths, far beyond highly-tuned RoPE extensions and prior alternative architectures.
\begin{mdframed}[style=observation]
    \begin{observation}\label{find:drope}
        Positional embeddings can be \textbf{removed after pretraining}, allowing LMs to generalize \textbf{zero-shot} to \textbf{unseen sequence lengths} without compromising their in-context performance after short recalibration on a fraction of the training tokens at the original context size.
    \end{observation}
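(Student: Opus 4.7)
The plan is to validate Observation~\ref{find:drope} through a combination of algorithmic construction and empirical verification, leveraging the complementary nature of Observations~\ref{find:pe_is_crucial} and~\ref{find:yarn_is_bad_at_recall}: the former establishes that explicit PEs are needed to bootstrap positional information early in training, while the latter shows that the same explicit positional information becomes a liability at test time. \ouralgo is the natural construction that uses RoPE only where it helps (during pretraining) and removes it where it hurts (during inference). The ``proof'' of the observation is thus an existence-by-construction argument combined with empirical measurements of the resulting model.

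First, I would specify the \ouralgo procedure concretely: given a pretrained RoPE transformer $\mathsf{M}_{\mathrm{RoPE}}$, construct $\mathsf{M}_{\mathrm{\ouralgo}}$ by replacing all RoPE rotation matrices $R \to I$ in every attention layer, and then fine-tune for a small number of steps on data truncated at the original context length $C_{\mathrm{train}}$. The motivating claim is that the representations learned during RoPE pretraining already encode the positional structure needed for language modeling in a way that does not critically depend on the RoPE rotations being present at inference: following the mechanism of \citet{nope}, the causal mask itself furnishes a positional channel that a well-initialized network can be quickly steered to rely on, even though bootstrapping this channel \emph{from scratch} is slow by Theorem~\ref{thm:propagation}.

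Next, I would empirically verify the three sub-claims in the observation in turn. \textbf{(i) In-context preservation:} measure perplexity and downstream accuracy within $C_{\mathrm{train}}$ after recalibration, showing parity with the original RoPE checkpoint (previewed in Figure~\ref{fig:loss_and_ppl}). \textbf{(ii) Recalibration efficiency:} track the loss curve during recalibration to show rapid convergence after only a small fraction of pretraining tokens, and ablate the token budget. \textbf{(iii) Zero-shot length generalization:} evaluate $\mathsf{M}_{\mathrm{\ouralgo}}$ at sequence lengths $C_{\mathrm{test}} \gg C_{\mathrm{train}}$ on both perplexity and NIAH, contrasting against RoPE (which goes OOD), YaRN (which, by Observation~\ref{find:yarn_is_bad_at_recall}, crops effective retrieval context), and NoPE-from-scratch (which underperforms by Observation~\ref{find:pe_is_crucial}). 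I would repeat these at multiple model scales and for off-the-shelf open-weight checkpoints to establish robustness.

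The main obstacle is justifying why a brief recalibration is sufficient to transition the model from RoPE-dependent to PE-free operation. Naively zeroing out RoPE in a pretrained transformer could plausibly cause a catastrophic representational shift, since every attention head has been optimized to operate on RoPE-rotated queries and keys. The conceptual resolution is that a pretrained network starts recalibration with rich, non-uniform hidden states with $\Delta_h^{(l)} \gg \varepsilon$, so the bound in Theorem~\ref{thm:propagation} no longer pins attention maps near uniform, positional-bias gradients flow freely, and the network can rapidly repurpose the causal-mask channel without relearning its semantic structure. Establishing this mechanism empirically, for instance by probing whether the recalibrated head-level attention patterns inherit the patterns of the RoPE parent, is the key piece of the argument; the remainder reduces to reporting the matching perplexity and superior long-context behaviour already hinted at in Figures~\ref{fig:drope_v_yarn_niah} and~\ref{fig:loss_and_ppl}.
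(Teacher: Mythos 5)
Your proposal takes essentially the same route as the paper: \ouralgo is presented as an existence-by-construction argument, and Observation~\ref{find:drope} is then substantiated empirically by checking the three sub-claims (in-context parity, short recalibration budget, zero-shot length generalization) across model and data scales, exactly as in Section~\ref{sec:4method} and Appendices~\ref{appendix: experimental details}--\ref{appendix: extra experiments}. The one addition in your write-up — the heuristic that a pretrained network has $\Delta_h^{(l)} \gg \varepsilon$, so Theorem~\ref{thm:propagation} no longer pins attention near uniform and recalibration can proceed quickly — is a plausible mechanistic gloss but is not formalized or probed in the paper, which supports the observation purely by measurement and by the recalibration-starting-point ablation in Appendix~\ref{apx:rec_ablation}.
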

\end{mdframed}

\subsection{Large-scale empirical evaluation}
% We provide extensive validation of \ouralgo using two different LMs and data scales, showing its effectiveness beyond prior approaches, both as an integration to pretraining recipes with \emph{zero additional cost}, and as an inexpensive technique to adapt \emph{any LM in the wild} already pretrained on hundreds of billions of tokens. 
We extensively validate \ouralgo across different LM and dataset scales, showing it outperforms prior approaches both as a \emph{zero cost} integration into pretraining recipes and as an inexpensive way to adapt \emph{any LM in the wild} already pretrained on trillions of tokens. For all experiments in this paper, we provide full implementation details of each evaluated architecture and optimization phase, including comprehensive hyperparameter lists in Appendix~\ref{appendix: experimental details}.

\looseness=-1
\textbf{Integrating \ouralgo at no extra cost.} For our first set of experiments, we train from scratch different LMs with half a billion parameters on 16B fineweb tokens~\citep{fineweb}, over twice the chinchilla-optimal rate~\citep{chinchilla}. We repeat this recipe for RoPE and NoPE transformers, as well as an ALiBi model~\citep{alibi} and an RNoPE-SWA model~\citet{rope2nope}, which are alternative architectures specifically aimed at long-context capabilities. We implement DroPE by taking the 14B tokens RoPE transformer checkpoint, removing positional embeddings from every layer, and resuming training for the final 2B tokens. Despite only recalibrating at the very end of training, at no extra cost, \ouralgo matches the final in-context validation perplexity of RoPE trained on the full 16B tokens, showing a clear edge over the NoPE baseline trained without positional embedding all the way (Figure~\ref{fig:loss_and_ppl}). We provide further analysis and ablations on the recalibration starting point in Appendix~\ref{apx:rec_ablation}, validating the importance of harnessing the inductive bias of RoPE for a substantial amount of training, in line with the core motivation of our new method.

\begin{wraptable}[14]{r}{0.55\linewidth}
    \centering
    \small
    \vspace{-8pt}
    \caption{\textbf{Zero-shot NIAH at $2\times$ training context.} Results are reported as a success rate over 500 trials.} \label{tab:niah_results}
    \resizebox{0.55\textwidth}{!}{
        \begin{tabular}{lccc}
            \toprule
            \textbf{Method} & \textbf{\shortstack{Multi- \\ Query}} & \textbf{\shortstack{Multi- \\ Key}} & \textbf{\shortstack{Multi- \\ Value}} \\
            \midrule
            RoPE transformer &  $0.0$ & $0.0$ & $0.0$ \\
            RoPE transformer + PI & $0.0$ & $0.0$ & $0.0$ \\
            RoPE transformer + RoPE-NTK & $21.1$ & $19.4$ & $16.5$ \\
            RoPE transformer + YaRN & $17.8$ & $0.5$ & $14.6$ \\
            \midrule
            ALiBi transformer & 5.2 & 0.0 & 1.1 \\
            NoPE transformer &  $9.2$ & $36.2$ & $21.4$ \\
            RNoPE-SWA transformer & $5.2$ & $25.6$ & $20.6$ \\
            \midrule
            \ouralgo transformer & $\mathbf{28.0}$ & $\mathbf{41.6}$ & $\mathbf{23.3}$ \\
            \bottomrule
        \end{tabular}
    }
\end{wraptable}

To evaluate the long-context generalization of each method, we select three tasks from the RULER benchmark~\citep{ruler}: (1) \emph{multi-query:} retrieve needles for several listed keys, (2) \emph{multi-key:} retrieve the needle for one specified key, and (3) \emph{multi-value:} retrieve all needles for one key with a single query. For the base RoPE transformer, we consider three context extension strategies: PI~\citep{pi}, NTK-RoPE~\citep{ntk}, and the popular YaRN~\citep{yarn} described in Section~\ref{sec:2prel} and Appendix~\ref{apx:extended_preliminaries}. In Table~\ref{tab:niah_results}, we report the success rate on each task at $2\times$ the training context length. Our \ouralgo model \emph{substantially outperforms all baselines} in each setting. While RoPE-NTK and YaRN also yield improvements to the original RoPE transformer, they consistently trail \ouralgo, as most evident on the multi-key task. In contrast, specialized architectures such as ALiBi, RNoPE-SWA, and NoPE underperform on multi-query tasks, which are the logic-intensive setting where strong base models excel. We believe these results provide compelling evidence toward validating \ouralgo's potential to be integrated as a standard component in the training pipeline of future generations of LMs.  

\begin{figure}[t]
    \vspace{-12pt}
    \centering
    \includegraphics[width=\linewidth]{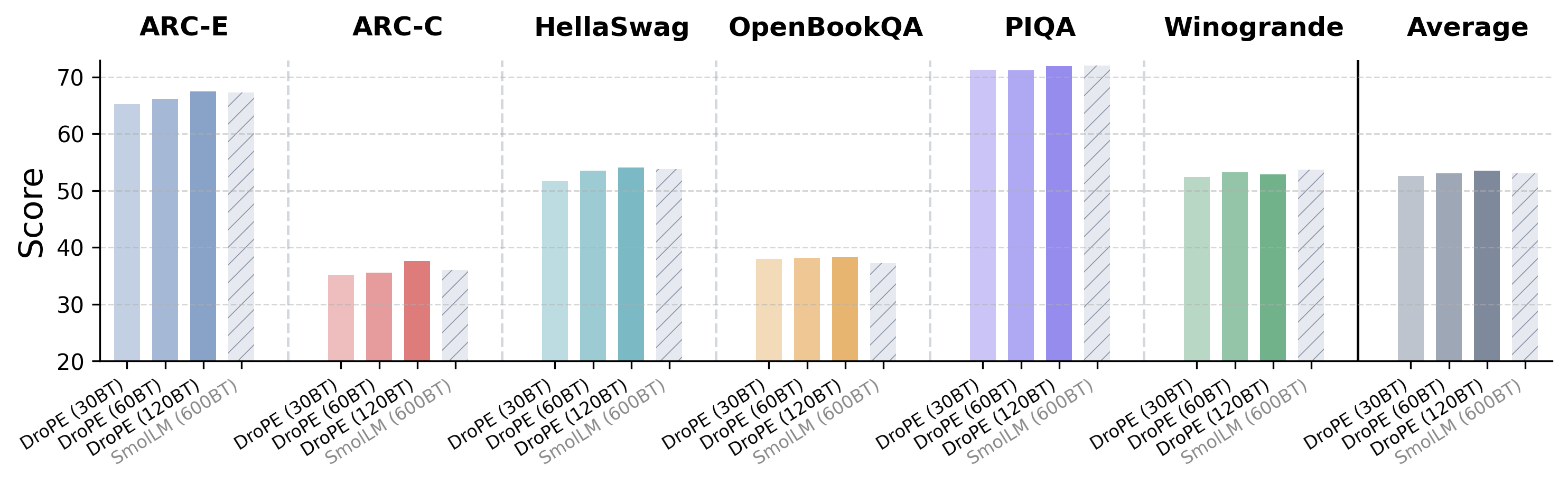}
    \vspace{-22pt}
    \caption{\textbf{\ouralgo matches base model in-context performance.} Comparison of base \textsc{SmolLM} with \textsc{SmolLM-\ouralgo} on standard LM benchmarks, using three recalibration recipes.} \label{fig:smoll_barplot}
    \vspace{-6pt}
\end{figure}

\textbf{Extending the context of LMs in the wild with \ouralgo.} For our second set of experiments, we directly apply \ouralgo to a 360M parameter language model from the \textsc{SmolLM} family~\citep{smollm} family pretrained on 600 billion tokens. We perform \ouralgo's recalibration with continued pretraining using the same context length, data, and hyperparameters as reported by \citet{smollm}. We consider three different recalibration budgets of 30, 60, and 120 billion tokens, adjusting the learning rate schedule accordingly. Given the extended training periods, only for these experiments, we also add QKNorm~\citep{qknorm} after dropping the positional embeddings, which we find beneficial for mitigating training instabilities, as noted by \citet{qknorm_olmo2} (See Appendix~\ref{apx:qknorm}).

\begin{table}[h]
    \caption{\textbf{\ouralgo outperforms RoPE-scaling methods on long context-tasks.} We evaluate \textsc{SmolLM-\ouralgo} and the base \textsc{SmolLM} model, extended with different RoPE scaling methods, on four long context language modeling tasks from~\citet{longbench} and needle-in-a-haystack.}\label{tab:longbench_result}
    \centering
    \small 
    \resizebox{0.95\textwidth}{!}{
        \begin{tabular}{lccccc|c}
            \toprule
            \textbf{Method} & \textbf{MultiFieldQA} & \textbf{MuSiQue} & \textbf{GovReport} & \textbf{LCC} & \textbf{NIAH} & \textbf{Avg.} \\
            \midrule
            \textsc{SmolLM} & $4.03$ & $0.4$ & $4.48$ & $5.99$ & $0.0$ & $2.98$\\
            \textsc{SmolLM} + PI & $13.68$ & $2.45$ & $5.67$ & $11.52$ & $0.0$ & $6.66$\\
            \textsc{SmolLM} + RoPE-NTK & $18.87$ & $4.89$ & $\mathbf{23.71}$ & $8.26$ & $29.84$ & $17.11$ \\
            \textsc{SmolLM} + YaRN & $20.78$ & $4.77$ & $15.03$ &  $10.87$ & $48.25$ & $19.94$ \\
            \midrule
            \textsc{SmolLM-\ouralgo} & $\mathbf{29.33}$ & $\mathbf{7.93}$ & $21.87$ &  $\mathbf{18.56}$ & $\mathbf{74.92}$ & $\mathbf{30.52}$ \\
            \bottomrule
        \end{tabular}
    }
\end{table}

We start by analyzing how quickly our \textsc{SmolLM-\ouralgo} models can recover \textsc{SmolLM}'s in-context performance across six different LM reasoning benchmarks~\citep{bench_1_arc, bench_2_hellaswag, bench_3_openbook_qa, bench_4_piqa, bench_5_winogrande}. As shown in Figures~\ref{fig:smoll_barplot} and \ref{fig:recalibration} as well as Table~\ref{tab:in_context_results}, even with our shortest training schedule, \textsc{SmolLM-\ouralgo} almost entirely matches \textsc{SmolLM} on every task, while with our longest schedule our new model manages to \textit{exceed} its original performance. Furthermore, inspecting our model at every checkpoint throughout training, we find that \ouralgo recovers \textit{over 95\% of \textsc{SmolLM}'s performance} after less than 5B tokens, representing a minuscule $0.8\%$ of \textsc{SmolLM}'s original budget.

\begin{wrapfigure}[15]{r}{0.45\textwidth}
    \centering
    \vspace{-23pt}
    \includegraphics[width=0.45\textwidth]{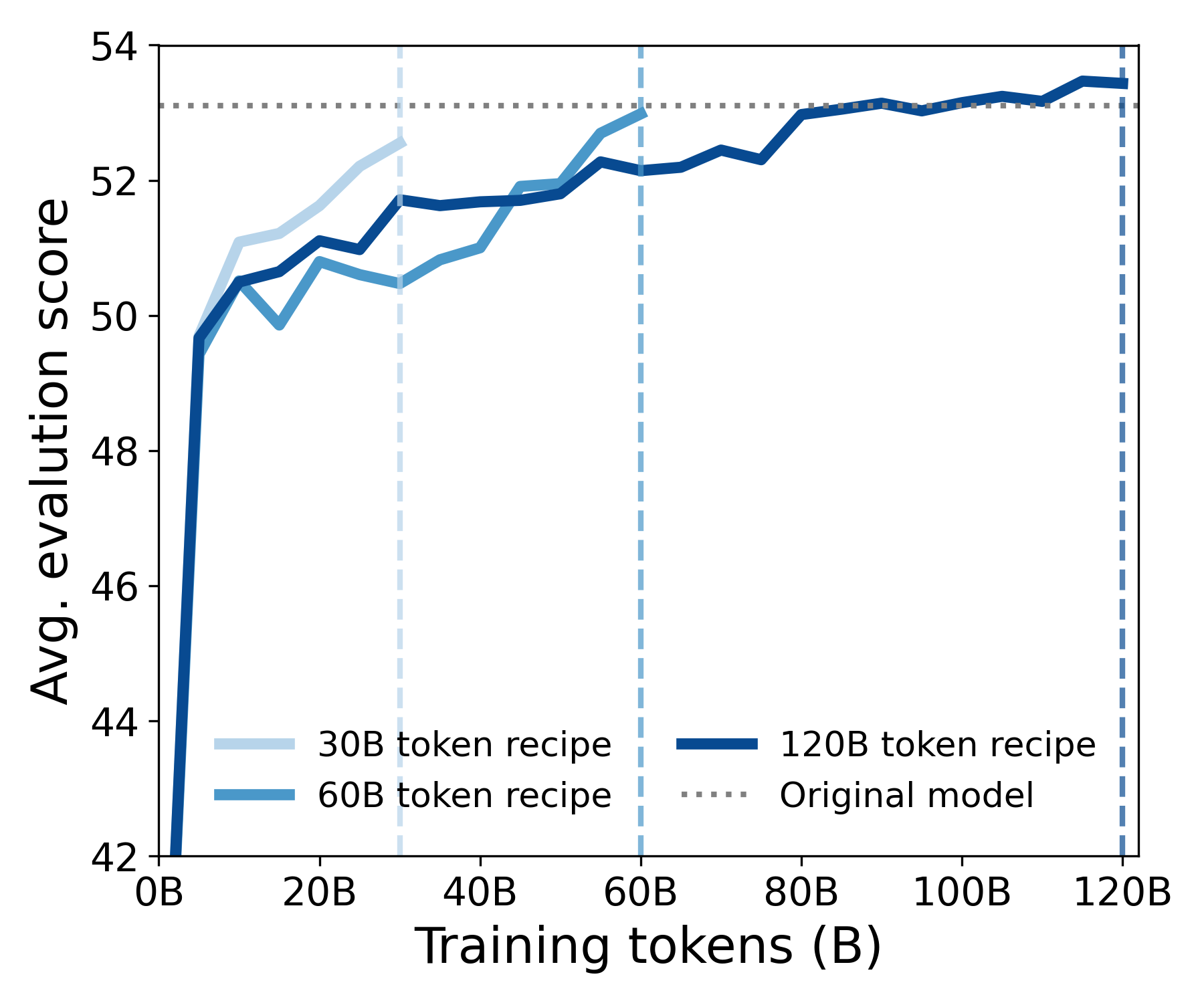}
    \vspace{-23pt}
    \caption{\textbf{\textsc{SmolLM-\ouralgo} recalibration.} We compare three recipes, using $30$B, $60$B, and $120$B training tokens.}\label{fig:recalibration} %All three regain 95\% of the performance after $5$B tokens
    % \vspace{-11pt}
\end{wrapfigure}

We then evaluate our \textsc{SmolLM-\ouralgo} models' zero-shot length generalization on four different tasks from LongBench~\citep{longbench}, a challenging benchmark even for closed-source LMs, including knowledge-extraction problems longer than \emph{80 times} \textsc{SmolLM}'s pretraining context ($2048$ tokens). We compare our method with the base \textsc{SmolLM} and three RoPE extensions: PI, RoPE-NTK, and YaRN. As shown in Table~\ref{tab:longbench_result}, despite a significant difficulty spike compared to our prior evaluations, \ouralgo still displays a clear edge over prior approaches, improving the base \textsc{SmolLM}'s average score by over 10 times. These gains are far beyond all prior zero-shot RoPE extensions currently used across modern LMs. We refer to Appendix~\ref{apx:perf_v_exp} for a fine-grained analysis of task performance as a function of extension factor.

\textbf{Scaling to billion-parameter models.} Given the remarkable efficiency of recalibration, we test \ouralgo's ability to scale to larger LMs in the wild, such as \textsc{SmolLM-1.7B}~\citep{smollm} and \textsc{Llama2-7B}~\citep{llama2}, which were trained on 1 trillion and 4 trillion tokens, respectively. For both of these models, we perform recalibration on 20B tokens, which only represents 2\% of the pretraining budget for  \textsc{SmolLM-1.7B}, and only $0.5\%$ for \textsc{Llama2-7B}. As demonstrated in Table~\ref{tab:smollm_and_llama}, consistently with all our prior results on a smaller scale, \textsc{SmolLM-1.7B-\ouralgo} and \textsc{Llama2-7B-\ouralgo} once again outperform state-of-the-art RoPE-scaling methods on long-context question-answering and summarization, providing strong evidence towards the scalability and immediate potential of \ouralgo.

\begin{table}[h]
    \centering
    \small
    \caption{
    \looseness=-1
    \textbf{Length generalization results on larger models.} We evaluate \ouralgo on \textsc{SmolLM-1.7B} and \textsc{Llama2-7B}, and compare it against different RoPE scaling methods, on long context language modeling tasks from~\citet{longbench}. We don't include the LCC task since it is in context for \textsc{Llama2-7B}.}\label{tab:smollm_and_llama}
    \begin{tabular}{llccc|c}
        \toprule
        \textbf{Model} & \textbf{Method} & \textbf{MultiFieldQA} & \textbf{MuSiQue} & \textbf{GovReport} & \multicolumn{1}{c}{\textbf{Avg.}} \\
        \midrule
        \multirow{4}{*}{\textsc{SmolLM}-1.7B}
          & Base         &  4.12 & 0.50 &  4.70 &  3.11 \\
          & RoPE-NTK   & 27.58 & 3.37 & 24.65 & 18.53 \\
          & YaRN       & 27.60 & 3.90 & 17.19 & 16.23 \\
        \cmidrule(lr){2-6}
          & \ouralgo & \textbf{32.18} & \textbf{7.53} & \textbf{24.77} & \textbf{21.49} \\
        \midrule
        \multirow{4}{*}{\textsc{Llama2}-7B}
          & Base         & 17.26 & 10.43 & 32.41 & 20.03 \\
          & RoPE-NTK   & 21.81 & 10.91 & 32.91 & 21.88 \\
          & YaRN       & 23.13 &  7.65 & 26.65 & 19.14 \\
        \cmidrule(lr){2-6}
          & \ouralgo & \textbf{25.90} & \textbf{12.88} & \textbf{39.47} & \textbf{26.08} \\
        \bottomrule
    \end{tabular}
\end{table}

\looseness=-1
Overall, our in-context and out-of-context results demonstrate \ouralgo is an efficient and effective long-context extension method, which we believe can have meaningful implications for reducing the cost of training pipelines and for tackling the canonical context scalability challenges of transformers. We complement this section with additional experimental results, including the entire LongBench benchmark, and a performance by query length breakdown in Appendix~\ref{appendix: extra experiments}.
\section{Related work}\label{sec:6related}

Recent improvements to RoPE include variants based on Fourier and wavelet transforms \citep{fourier,wavelet} and methods such as $p$-RoPE~\citep{p_rope}, NRoPE-SWA~\citep{rope2nope}, and SWAN-GPT~\citep{swangpt}, which occupy a middle ground between RoPE and NoPE. Our approach represents a fundamentally different paradigm, replacing RoPE with NoPE at different stages of training. These directions are complementary to ours and can be used in place of RoPE within the DroPE framework. Another orthogonal direction seeks length generalization while retaining a dedicated positional vector yet modifying its indexing or adaptivity~\citep{decomposed_pv,middle_focused_pe,DAPE}.

%The earliest transformers used absolute positional embeddings, which add sinusoidal waves to the token representations~\citep{vaswani2017attention}. Later, relative schemes were proposed to improve shift invariance and length extrapolation, with the most prominent example being RoPE~\citep{rope}. RoPE is now the most widely used method. An alternative approach that does not use rotations is ALiBi~\citep{alibi}, which adds a fixed negative bias proportional to the distance. All of these methods still have a limited ability to generalize to longer context lengths without fine-tuning.

%When adapting RoPE models for longer contexts through fine-tuning, methods such as PI~\citep{pi}, NTK-RoPE~\citep{ntk}, and YaRN~\citep{yarn} are used to scale RoPE, but this comes with the challenges discussed in Section~\ref{sec:rope_scaling_fails}. Our method, DroPE, also adapts RoPE for longer contexts, but it is based on a completely different approach from those that drop position embeddings and is therefore not subject to the same issues.
\section{Discussion and extensions}\label{sec:7conclusion}
\looseness=-1
Our findings support a reinterpretation of positional embeddings in transformer LMs as a useful inductive bias that is essential for efficient training (Observation~\ref{find:pe_is_crucial}), but inherently constrains zero-shot context extension (Observation~\ref{find:yarn_is_bad_at_recall}). Based on these findings, we propose \ouralgo, a new method rethinking the conventional role of PEs as a temporary scaffold that can and should be removed after serving their training-time purpose (Observation~\ref{find:drope}). We empirically validate \ouralgo across different models and data scales, showing its effectiveness and potential to be integrated as a new core component of future state-of-the-art training pipelines. More broadly, our work demonstrates that canonical trade-offs in LM architecture design can be reconciled by employing different architectural choices for different stages of the training and inference pipelines and recalibrating the model for the new architecture. We hope this will inspire further research toward challenging established bottlenecks in AI.

\section*{Acknowledgments}
YG is supported by the UKRI Engineering and Physical Sciences Research Council (EPSRC) CDT in Autonomous and Intelligent Machines and Systems (grant reference EP/S024050/1).

\section*{Author contribution}

\textbf{Yoav Gelberg} led the project, made major contributions to the codebase, method design, training pipelines, experimentation, and writing. He led \ouralgo's empirical and theoretical analysis.

\textbf{Koshi Eguchi} made contributions to experimentation and provided infrastructure support for scaling the methodology.

\textbf{Takuya Akiba} advised the project, was involved in early discussions about method design, and contributed to writing.

\textbf{Edoardo Cetin} made major contributions to the codebase, method design, experimentation, and writing. He led engineering on DroPE's training pipeline and coordinated the project.

\newpage
\bibliography{references}
\bibliographystyle{plainnat}

\clearpage
\appendix

\section{Extended preliminaries}\label{apx:extended_preliminaries}
\paragraph{Attention.}
Throughout this section, we consider a pre-norm, decoder-only transformer with $L$ layers, $H$ attention heads per layer, model dimension $d = d_\mathrm{model}$, and head dimension $d_k$. $h^{(l)}_1,\dots,h^{(l)}_T\in\sR^{d}$ denote the representations fed into the $l$-th multi-head attention block. For a head $h$ in layer $l$, queries, keys, and values are computed by 
\begin{equation}
    q_i^{(l,h)} = W_Q^{(l,h)} h_i^{(l)}, \qquad k_i^{(l,h)} = W_K^{(l,h)} h_i^{(l)}, \qquad v_i^{(l,h)} = W_V^{(l,h)} h_i^{(l)}, 
\end{equation}
The attention scores and weights are then computed by
\begin{equation}\label{eq:attention_apx}
    s_{ij}^{(l,h)} = \frac{1}{\sqrt{d_k}} \big(q_i^{(l,h)}\big)^\top k_j^{(l,h)}, \qquad
    \alpha_{ij}^{(l,h)} = \softmax(s_{i1}^{(l,h)}, \dots, s_{ii}^{(l,h)})_j.
\end{equation}
$s_{ij}^{(l,h)}$ are referred to as attention \emph{logits} or \emph{scores} and $\alpha_{ij}^{(l,h)}$ are referred to as attention \emph{weights} or \emph{probabilities}. Note that the softmax is taken over $j \leq i$, implementing a causal mask. The output of the multi-head attention block is 
\begin{equation}
    z_i^{(l, h)} = \sum_{j \leq i} \alpha_{ij}^{(l, h)} v_j^{(l, h)}, \qquad o_i^{(l)} = W_O^{(l)}[z_i^{(l, 1)}, \dots, z_i^{(l, H)}],
\end{equation}
where $[\cdot, \dots, \cdot]$ represents concatenation along the feature dimension. When clear from context, we omit layer and head indices. 

\paragraph{Positional embeddings in transformers.}
The attention mechanism does not directly encode relative distances between queries and keys. Therefore, attention is invariant to prefix permutations: for any permutation $\sigma \in S_p$ of the first $p$ input tokens, $\mathrm{attn}(x_{\sigma^{-1}(1)}, \dots, x_{\sigma^{-1}(p)}, x_p, \dots, x_T)_i = \mathrm{attn}(x_1, \dots, x_T)_i$ for every $i > p$. In other words, pure attention is \emph{blind} to token positions. To address this, \citet{vaswani2017attention} introduced \emph{absolute} positional embeddings, adding position information to the token embeddings before the first transformer block. More recently, many architectures replace absolute embeddings with \emph{relative} schemes that inject pairwise positional information directly into the attention mechanism. The most widely used approach is Rotary Position Embedding (RoPE) \citep{rope}. RoPE modifies the attention scores in Equation~\ref{eq:attention_apx} by rotating queries and keys before taking their inner product:
\begin{equation}
    s_{ij}^{\mathrm{RoPE}} = \frac{1}{\sqrt{d_k}}q_i^\top R^{j - i} k_j, \qquad \alpha_{ij}^\mathrm{RoPE} = \softmax(s_{i1}^\mathrm{RoPE}, \cdots, s_{ii}^\mathrm{RoPE})_j,
\end{equation}
where, $R \in O(d_k)$ is a block-diagonal orthogonal matrix composed out of $2 \times 2$ rotation blocks: 
\begin{equation}
    R=\operatorname{block\text{-}diag} \big(R(\omega_1),\ldots,R(\omega_{d_k/2})\big), \quad
    R(\omega)=
    \begin{pmatrix}
        \cos(\omega) & -\sin(\omega) \\
        \sin(\omega) & \cos(\omega)
    \end{pmatrix}.
\end{equation}
In the standard RoPE parameterization, $\omega_m = b^{-2\frac{m-1}{d_k}}$ with $b = 10{,}000$. 

% More recently, is was shown empirically~\citep{haviv2022transformer} and theoretically~\citep{nope} that transformers with out positional encoding but \emph{with a causal mask} can learned positional information using multiple attention layers and emulate relative positional encoding schemes. Throughout the paper, we refer to a transformer using RoPE as a \emph{RoPE transformer}, and to a transformer without any explicit positional encoding as a \emph{NoPE transformer}.

\paragraph{Language model context extension.}
Generalizing to contexts longer than those seen during training is a key challenge for transformer-based language models. The key issue is that when applying a transformer on a longer context, the attention mechanism must operate over more tokens than it was trained to handle. This issue is exacerbated with RoPE: applying RoPE to sequences beyond the training length introduces larger position deltas, and thus larger rotations, pushing attention logits out of the training distribution. RoPE context-extension methods address this by \emph{rescaling the RoPE frequencies} when the inference context length exceeds the training context length. Let $C_\mathrm{train}$ be the training context and $C_\mathrm{test}>C_\mathrm{train}$ the target context with extension factor $s= C_\mathrm{test} / C_\mathrm{train}$. Such methods define new frequencies 
\begin{equation*}
    \omega_m' = \gamma_m \omega_m, \qquad m=1, \dots, \tfrac{d_k}{2},
\end{equation*}
using \emph{scaling factors} $\gamma_m = \gamma_m(s)$. E.g. Position Interpolation (PI)~\citep{pi}, uses a uniform scaling of
\begin{equation}\label{eq:pi}
    \gamma_m^\mathrm{PI} = \tfrac{1}{s}.
\end{equation}
NTK-RoPE~\citep{ntk} uses
\begin{equation}\label{eq:ntk}
    \gamma_m^\mathrm{NTK} = \left(\tfrac{1}{s}\right)^{\frac{2m}{d_k -2}}, 
\end{equation}
so that low frequencies ($m \approx d_k/2$) are scaled similarly to PI and for high frequencies $\gamma_m \approx 1$. YaRN~\citep{yarn} uses
\begin{equation}\label{eq:yarn}
    \gamma_m^\mathrm{YaRN} = (1-\kappa_m) \tfrac{1}{s} + \kappa_m, \qquad \kappa_m = \begin{cases}
        0 & \omega_m < p \\
        1 & \omega_m > q \\
        \tfrac{\omega_m-p}{q -p} & p \leq \omega_m \leq q,
    \end{cases}
\end{equation}
with tunable $p$ and $q$ parameters, originally chosen as $p = 1$, $q = 32$. See Figure~\ref{fig:freq_scaling_ext} for a comparison between these different RoPE scaling methods with $s = 2$, $3$, and $4$.
\begin{figure}
    \centering
    \includegraphics[width=0.8\linewidth]{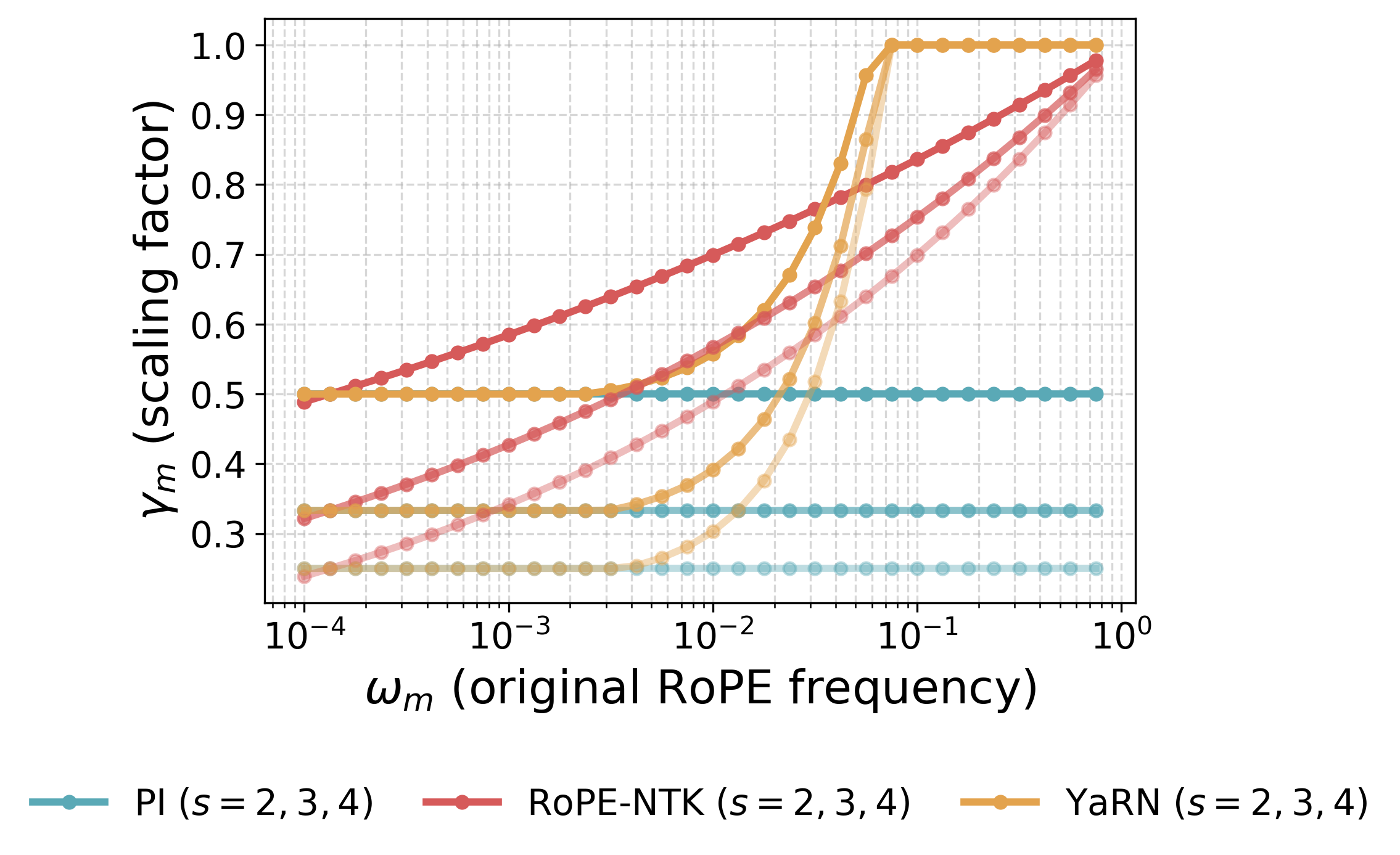}
    \caption{RoPE frequency scaling under PI, NTK-aware scaling (RoPE-NTK), and YaRN, with scaling factors $s = 2, 3, 4$.}\label{fig:freq_scaling_ext}
\end{figure}

\section{Theoretical results and proofs}\label{appendix:extended_theory}
In this section, we analyze the behavior of positional bias, or attention non-uniformity, in NoPE transformers and RoPE transformers early in training. We provide formal statements and proofs for all the results from Section~\ref{sec:pe_is_important}, starting with Propositions~\ref{prop:const_seq} and~\ref{prop:rope_lowerbound}, followed by Theorem~\ref{thm:propagation}. The notation of this section follows that of Appendix~\ref{apx:extended_preliminaries}.

\subsection{Proof of Proposition~\ref{prop:const_seq}}
\ConstSeq*
\begin{proof}
    Let $x_1, \dots, x_T$ be a constant input sequence, $x_1 = \cdots = x_T$, and let $\mathsf{M}$ be a NoPE transformer, i.e. a transformer with \emph{no positional encodings} and causal self attention. The order of the proof is (4) $\Rightarrow$ (1) $\Rightarrow$ (2 + 3).
    
    \textbf{(4) Layer outputs, and thus model outputs, are constant.}
    At the first layer, inputs are identical $h_1^{(1)} = \cdots = h_L^{(1)} = h$. This means that for every attention head and every $1 \leq j \leq T$
    \begin{equation*}
        v_j \equiv v = W_V h.
    \end{equation*}
    Therefore, the output of the attention head is 
    \begin{equation*}
        z_i = \sum_{j\le i}\alpha_{ij} v_j =  \sum_{j\le i}\alpha_{ij} v = \Big(\sum_{j\le i}\alpha_{ij}\Big) v = 1\cdot v,
    \end{equation*}
    independent of $i$. Concatenating heads and applying $W_O$ preserves equality across positions. Residual connections, LayerNorm, and the MLP are positionwise (the same function is applied independently at each position), so identical inputs produce identical outputs at every position. Thus the layer output remains constant. By repeating this argument layer-by-layer, every subsequent layer receives identical inputs and outputs identical states, so in the end
    \begin{equation*}
        M(x)_1=\cdots=M(x)_L.
    \end{equation*}
    
    \textbf{(1) Uniform causal attention.}
    Using (4), we know that for every layer $1 \leq l \leq L$ 
    \begin{equation*}
        h_1^{(l)} = \cdots = h_L^{(l)} = h.
    \end{equation*}
    Therefore, for every attention head and every $1 \leq j \leq T$
    \begin{equation*}
        q_j \equiv q := W_Q h, \quad k_j \equiv k := W_K h, \quad v_j \equiv v : =W_V h.
    \end{equation*}
    Thus, for each $1 \leq j \leq i \leq T$, the attention scores $s_{ij} = q^\top k/\sqrt{d_k} \equiv c$ are constant (independent of $i$ or $j$). Hence
    \begin{equation*}
        \alpha_{ij} = \softmax(\underbrace{c, \dots, c}_{i\ \text{entries}})_j=\tfrac{1}{i}\qquad(j\le i).
    \end{equation*}
    
    \textbf{(2 + 3) Vanishing $W_Q,W_K$ gradients.}
    Since, the inputs for every layer are constant, we know from (1) that every attention head has $\alpha_{ij} \equiv 1/i$, \emph{independant of $W_Q$ and $W_K$}. Therefore $\partial \alpha_{ij} / \partial W_Q = \partial \alpha_{ij} / \partial W_K = 0$. Since the attention bias $\abias^c$ depends on the parameters $\params$ only through $\alpha_{ij}$ and the loss $\gL$ depends on $W_Q$ and $W_K$ only through $\alpha_{ij}$, all these gradients vanish. More formally, using the chain rule, 
    \begin{gather*}
        \frac{\partial \abias^c}{\partial \params} = \frac{1}{T}\sum_{1 \leq j \leq i \leq T} c_{ij} \frac{\partial \alpha_{ij}}{\partial \params} = 0, \\
        \frac{\partial \gL}{\partial W_Q} = \sum_{1 \leq j \leq i \leq T} \frac{\partial \gL}{\partial \alpha_{ij}} \frac{\partial \alpha_{ij}}{\partial W_Q} = 0, \qquad
        \frac{\partial \gL}{\partial W_K} = \sum_{1 \leq j \leq i \leq T} \frac{\partial \gL}{\partial \alpha_{ij}} \frac{\partial \alpha_{ij}}{\partial W_K} = 0.
    \end{gather*}
    Additionally, since the heads are uniform the attention bias is zero to begin with
    \begin{equation*}
        \abias^c = \frac{1}{T} \sum_{1 \leq j \leq i \leq T} c_{ij} \alpha_{ij} = \frac{1}{T}\sum_{i = 1}^T \frac{1}{i }\sum_{j \leq i} c_{ij} =  \frac{1}{T}\sum_{i = 1}^T \frac{1}{i} \cdot 0 = 0.
    \end{equation*}
\end{proof}

\begin{remark}
    Note that part (4) of the proposition holds for RoPE transformers as well. Parts (1), (2) and (3) \emph{do not}. The relative rotations break attention uniformity and thus changing the magnitude of $\norm{W_Q}$ and $\norm{W_K}$ can affect the attention weights. This is formally demonstrated in the next section.
\end{remark}

\subsection{Proof of Proposition~\ref{prop:rope_lowerbound}}\label{appendix:rope_lowerbound}

\RoPELowerbound*
\begin{proof}
    Let $x_1 = \cdots = x_T = x \in \sR^d$ be the inputs to a RoPE attention head, and let $W_Q, W_K \in \sR^{d_k \times d}$ be the query and key projection parameters. Since the projection maps are shared across tokens, the queries and keys are constant as well:
    \begin{equation*}
        q_i = W_Q x_i = W_Q x = q, \qquad k_i = W_K x_i = W_K x = k.
    \end{equation*}
    Set the positional bias weights to be
    \begin{equation*}
        c_{ij} = \alpha_{ij} - \tfrac{1}{i}.
    \end{equation*}
    Since $\sum_{j \leq i} \alpha_{ij} = 1$, we have $\sum_{j \leq i} c_{ij} = 0$ as required. The positional bias $\abias^c$ is
    \begin{equation*}
        \abias^c = \frac{1}{T}\sum_{i = 1}^T\sum_{j \leq i} (\alpha_{ij}^2 - \tfrac{1}{i}\alpha_{ij}) =  \frac{1}{T}\sum_{i = 1}^T\Big(\sum_{j \leq i} \alpha_{ij}^2 - \tfrac{1}{i}\Big).
    \end{equation*}
    By Cauchy-Schwarz,
    \begin{equation*}
        1 = \Big(\sum_{j \leq i} \alpha_{ij} \cdot 1\Big)^2 \leq \Big(\sum_{j \leq i} \alpha_{ij}^2\Big)\Big(\sum_{j \leq i} 1\Big) = i \sum_{j \leq i} \alpha_{ij}^2,
    \end{equation*}
    with equality only when $\alpha_{i1} = \cdots = \alpha_{ii}$. Therefore,
    \begin{equation*}
        \sum_{j \leq i} \alpha_{ij}^2 \geq \frac{1}{i},
    \end{equation*}
    with equality iff $\alpha_{ij} = 1/i$ is uniform. Therefore, $\abias^c > 0$ unless $\alpha_{ij}$ is uniform for all $i$. The following lemma asserts that this is not the case
    \begin{restatable}{lemma}{NonUnif}\label{lem:non_unif}
        For any non-degenerate RoPE head and input embeddings $x_1 = \cdots = x_t = x$, there exists $i \geq 1$ such that $s_{i1}, \dots, s_{ii}$ and $\alpha_{i1}, \dots, \alpha_{ii}$ are not uniform.
    \end{restatable}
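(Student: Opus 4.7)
The plan is to reduce the lemma to a standard statement about trigonometric polynomials evaluated on the integer lattice. First, since the input tokens are identical, the queries and keys collapse to single constant vectors $q = W_Q x$ and $k = W_K x$, and the RoPE score depends only on the relative distance:
\begin{equation*}
    s_{ij} = \tfrac{1}{\sqrt{d_k}} q^\top R^{j-i} k =: \tfrac{1}{\sqrt{d_k}} f(j-i).
\end{equation*}
Decomposing $R^\Delta$ into its $2 \times 2$ rotation blocks, one obtains
\begin{equation*}
    f(\Delta) = \sum_{m=1}^{d_k/2} \bigl[ a_m \cos(\omega_m \Delta) + b_m \sin(\omega_m \Delta) \bigr],
\end{equation*}
with $a_m = (q^{(m)})^\top k^{(m)}$ and $b_m = q_2^{(m)} k_1^{(m)} - q_1^{(m)} k_2^{(m)}$, where $q^{(m)}, k^{(m)} \in \sR^2$ are the sub-projections of $q, k$ in the $m$-th rotated subspace. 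A one-line computation then gives the useful identity $a_m^2 + b_m^2 = \norm{q^{(m)}}^2 \norm{k^{(m)}}^2$.

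Next, I argue by contradiction. Suppose that $s_{i1} = \cdots = s_{ii}$ for every $i \geq 1$. Then $f(\Delta) = f(0)$ for every integer $\Delta \leq 0$, so the trigonometric polynomial $g(\Delta) := f(\Delta) - f(0)$ vanishes on an infinite arithmetic progression of integers. Rewriting $g$ in complex-exponential form exhibits it as a linear combination of the sequences $\Delta \mapsto z_j^\Delta$ for the values $z_j \in \{1, e^{\pm i\omega_1}, \ldots, e^{\pm i\omega_{d_k/2}}\}$. For the standard RoPE parameters, the frequencies are pairwise distinct and lie in $(0, 1] \subset (0, \pi)$, so all $d_k + 1$ values $z_j$ lie at distinct points on the unit circle. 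Picking any $d_k + 1$ consecutive $\Delta$ in the vanishing set yields a non-singular Vandermonde system in the coefficients, forcing each coefficient to vanish. Unpacking the exponential/real-sinusoid conversion, this means $a_m = b_m = 0$ for every $m$.

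The conclusion $a_m = b_m = 0$, combined with the identity $a_m^2 + b_m^2 = \norm{q^{(m)}}^2 \norm{k^{(m)}}^2$, forces $q^{(m)} = 0$ or $k^{(m)} = 0$ in every frequency block. I would take the natural definition of a \emph{non-degenerate} RoPE head on input $x$ to be the negation of this condition: at least one block $m$ has $W_Q^{(m)} x \neq 0$ and $W_K^{(m)} x \neq 0$. This yields the desired contradiction, so some $i \geq 1$ must have non-uniform scores $s_{i1}, \ldots, s_{ii}$. Because the softmax is strictly monotone in each argument, non-uniformity of the scores immediately transfers to non-uniformity of the attention weights $\alpha_{i1}, \ldots, \alpha_{ii}$, completing the proof.

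The main obstacle is the middle step: ruling out a coincidental cancellation of many sinusoids restricted to the integer lattice, which is not automatic for generic frequencies. The Vandermonde argument hinges on the complex exponentials $e^{\pm i\omega_m}$ being pairwise distinct on the unit circle, which is guaranteed by the RoPE frequencies staying safely below $\pi$. An equivalent formulation is Carlson's theorem on entire functions of exponential type less than $\pi$ vanishing at all non-negative integers, but the elementary Vandermonde route is sharper and needs no growth bound beyond what $g$ automatically satisfies as a bounded trigonometric polynomial.
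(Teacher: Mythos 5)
Your proof takes essentially the same route as the paper's: reduce the scores to a trigonometric polynomial $f(\Delta)$ over the frequency blocks, pass to complex exponentials, invoke a Vandermonde system on $d_k+1$ consecutive relative offsets to force all coefficients to vanish, and conclude from $a_m = b_m = 0$ that one of $q^{(m)}$, $k^{(m)}$ is zero in every block, contradicting non-degeneracy. Two small refinements in your version are worth noting but do not change the substance: you use the identity $a_m^2+b_m^2=\norm{q^{(m)}}^2\norm{k^{(m)}}^2$ in place of the paper's geometric argument that $\{k_m,Jk_m\}$ spans $\sR^2$, and you correctly flag that it is $\omega_m<\pi$ (not merely $\omega_m\in(0,2\pi)$, as the paper loosely states) that guarantees the $2M+1$ exponentials $e^{\pm i\omega_m},1$ are pairwise distinct.
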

    The proof of Lemma~\ref{lem:non_unif} is at the end of this subsection. As for $\nabla_\params \abias^c$, rewrite $\abias^c$ as
    \begin{equation*}
        \abias^c=\frac{1}{T} \sum_{i=1}^T \Big(\sum_{j\leq i}\alpha_{ij}^2-\tfrac{1}{i}\Big) = \frac{1}{T} \sum_{i=1}^T F_i -\frac{1}{T} \sum_{i=1}^T \tfrac{1}{i},
    \end{equation*}
    so the dependence in the parameters $\params$ is entirely through 
    \begin{equation*}
        F_i := \sum_{j \leq i}\alpha_{ij}^2.
    \end{equation*}
    From the definition of RoPE, we have
    \begin{equation*}
        \alpha_{ij} = \softmax(s_{i1}, \dots, s_{ii})_j, \qquad s_{ij} = 
        \frac{1}{\sqrt{d_k}}q^\top R^{j-i}k.
    \end{equation*}
    Consider scaling $q$ by a scalar $\lambda>0$: $q \mapsto \lambda q$. For fixed prefix $i$, define
    \begin{equation*}
        Z_i(\lambda) := \sum_{j \leq i}e^{\lambda s_{ij}},\qquad
        \alpha_{ij}(\lambda)=\frac{e^{\lambda s_{ij}}}{Z_i(\lambda)},\qquad
        F_i(\lambda):=\sum_{j\le i}\alpha_{ij}(\lambda)^2.
    \end{equation*}
    Then
    \begin{equation*}
        F_i(\lambda) = \frac{Z_i(2\lambda)}{Z_i(\lambda)^2} \quad \Longrightarrow \quad \frac{d}{d\lambda} \log F_i(\lambda)=2 \underbrace{\E_{j \sim \alpha_i(2\lambda)}[s_{ij}]}_{A_i'(2\lambda)}
        -2 \underbrace{\E_{j \sim \alpha_i(\lambda)}[s_{ij}]}_{A_i'(\lambda)},
    \end{equation*}
    where $A_i(\lambda) := \log Z_i(\lambda)$ is the log-partition function. The second derivative of the log partition function is the logit variance
    \begin{equation*}
        A_i''(\lambda) = \frac{Z_i''(\lambda)Z_i(\lambda) - (Z_i'(\lambda))^2}{(Z_i(\lambda))^2} = \frac{\sum_{j \leq i} s_{ij}^2 e^{\lambda s_{ij}}}{Z_i(\lambda)} - \left(\frac{\sum_{i \leq j} s_{ij} e^{\lambda s_{ij}}}{Z_i(\lambda)}\right)^2 = \Var_{j \sim \alpha_i(\lambda)}(s_{ij})
    \end{equation*}
    therefore $A_i''(\lambda)=\mathrm{Var}_{\alpha_i(\lambda)}(s_{i\cdot})>0$ since from Lemma~\ref{lem:non_unif} $s_{ij}$ are not all equal and $\alpha_{ij}(\lambda) > 0$. Thus, $A_i'(\lambda)$ is strictly increasing in $\lambda$. Hence, for any $i$ with \emph{non-constant} logits,
    \begin{equation*}
        \frac{d}{d\lambda}F_i(\lambda) = F_i(\lambda) \cdot 2\big(A_i'(2\lambda)-A_i'(\lambda)\big) >0, 
    \end{equation*}
    and in particular at $\lambda = 1$,
    \begin{equation*}
        \frac{d}{d\lambda}F_i(\lambda)\Big|_{\lambda=1} > 0 .
    \end{equation*}

    By the chain rule for $q \mapsto \lambda q$,
    \begin{equation*}
    \frac{d}{d\lambda}F_i(\lambda)\Big|_{\lambda=1}
    = \nabla_q F_i(q)^\top \cdot  q .
    \end{equation*}
    Thus $\nabla_q F_i(q)\neq 0$ (otherwise the dot product with $q$ couldn’t be strictly positive). Finally, since $q = W_Qx$,
    \begin{equation*}
        \nabla_{W_Q}F_i = \nabla_q F_i x^\top,
    \end{equation*}
    and with $x\neq 0$ we get $\norm{\nabla_\params F_i} \geq \norm{\nabla_{W_Q}} F_i > 0$. Therefore
    \begin{equation*}
        \nabla_\params \abias^ c =\frac{1}{T}\sum_{i=1}^T \nabla_\params F_i
    \end{equation*}
    has strictly positive norm (a sum of nonzero matrices sharing the same nonzero right factor $x^\top$ cannot be the zero matrix unless all left factors vanish, which they don’t for $i \ge 2$).
\end{proof}

To conclude this section, we now prove Lemma~\ref{lem:non_unif}.
\NonUnif*
\begin{proof}
    RoPE acts as independent $2\times 2$ rotations on disjoint coordinate pairs. Thus
    \begin{equation*}
        R^{\Delta} = \bigoplus_{m=1}^{M} R\big(\Delta \omega_m\big), \quad M=d_k/2
    \end{equation*}
    with pairwise distinct frequencies $\omega_m \in(0,2\pi)$. Decompose
    \begin{equation*}
        q=(q_1,\dots,q_M),\quad k=(k_1,\dots,k_M),\qquad a_m,b_m \in \sR^2,
    \end{equation*}
    so $s_{ij} = f(j-i)$ where
    \begin{equation*}
        f(\Delta) = \frac{1}{\sqrt{d_k}}\sum_{m=1}^{M} q_m^\top R\big(\Delta \cdot \omega_m\big) k_m.
    \end{equation*}
    Let
    \begin{equation*}
        R(\phi)=\begin{pmatrix}\cos\phi&-\sin\phi\\ \sin\phi&\cos\phi\end{pmatrix}, \qquad J=\begin{pmatrix}0&-1\\ 1&0\end{pmatrix}.
    \end{equation*}
    For any $u, v \in \sR^2$,
    \begin{equation*}
        u^\top R(\phi)\,v = (u^\top v)\cos\phi + (u^\top \cdot Jv)\sin\phi.
    \end{equation*}
    Define $A_m := q_m^\top k_m$ and $B_m := q_m^\top J b_m$. Then
    \begin{equation*}
        \begin{split}
            f(\Delta) = \frac{1}{\sqrt{d_k}}\sum_{m=1}^{M}\Big(A_m\cos(\Delta \omega_m) + B_m\sin(\Delta \omega_m)\Big) &= \Re\left(\sum_{m=1}^{M} c_m e^{i\Delta \omega_m}\right) \\
            &= \frac{1}{2}\sum_{m = 1}^M C_m e^{i\Delta \omega_m} + \bar{C}_m e^{-i \Delta \omega_m},
        \end{split}
    \end{equation*}
    where
    \begin{equation*}
        C_m := \frac{A_m-iB_m}{\sqrt{d_k}}.
    \end{equation*}

    Assume $f(\Delta)$ is constant in $\Delta$ for $\Delta = 0, \dots, 2M = d_k$, and denote the constant value by $-\frac{1}{2}C_0$. Then we have
    \begin{equation*}
        \sum_{m = -M}^M C_m e^{i\Delta \omega_m} \equiv 0
    \end{equation*}
    were $C_{-m} := \bar{C}_m$, and $\omega_{-m} = -\omega_m$. Since $\{e^{-i \omega_M}, \dots, e^{-i \omega_1}, 1, e^{i \omega_1}, \dots, e^{i \omega_M}\}$ are all distinct, by Vandermonde's identity this means $C_m = \bar{C}_m = 0$ for $m = 1, \dots, M$, $\Rightarrow A_m = B_m = 0$ for $m = 1, \dots, M$.
    Now $A_m = B_m = 0$ means
    \begin{equation*}
        q_m \perp k_m \quad\text{and} \quad q_m \perp J k_m.
    \end{equation*}
    If $k_m \neq 0$, then $\{k_m, J k_m\}$ spans $\sR^2$, forcing $q_m=0$. Thus for every block $m$, either $q_m = 0$ or $k_m = 0$, which results in a degenerate RoPE head, contradicting the assumption. Therefore, for $i \geq d_k + 1$ the attention logits $s_{ij}$ are not constant, and thus the attention weight $\alpha_{ij}$ are not constant.
\end{proof}

\subsection{Proof of Theorem~\ref{thm:propagation}}
In this section, we prove Theorem~\ref{thm:propagation}. To do so, we first need to prove a sequence of Propositions and Lemmas. First, we restate the theorem here. 
\Propagation*

Since all weight matrices are drawn from a Gaussian distribution with a fixed variance, there exists a constant $B$, depending only on the architecture, such that with high probability the operator norms of $W_Q$, $W_K$, $W_V$, and $W_O$, as well as the Lipschitz constants of the MLPs and normalization layers are all bounded by $B$. To see this use, e.g. Theorem 4.4.5 from~\citet{vershynin2018high} and the fact that for a two layer MLP $f$, it's Lipschitz constnat is bounded by $\mathrm{Lip}(f) \leq \norm{W_1} \norm{W_2} \mathrm{Lip}(\sigma)$. Let $L$ be the number of layers, and $H$ be the number of attention heads per layer. For any vector sequence $a_i \in \sR^d$ we denote by $\bar{a}_i = \frac{1}{i} \sum_{j \leq i} a_j$ the prefix sum of $a_i$. For real sequences with two indices $a_{ij} \in \sR$ we denote $a_i = (a_{i1}, \dots, a_{ii}) \in \sR^i$ and $\bar{a}_i = \frac{1}{i}\sum_{j \leq i} a_{ij}$.

\begin{proposition}\label{prop:rho}
    Fix a row $i$ in an attention head at the $l$-th layer.
    \begin{equation*}
        \max_{j\le i}\Big|s_{ij}-\bar{s}_i \Big| \leq B^2 \sqrt{H} \Delta_h^{(l)}.
    \end{equation*}
\end{proposition}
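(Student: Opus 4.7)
The bound is essentially a Cauchy--Schwarz computation once the attention logits are centered correctly. The key observation is that $\bar{s}_i$ is nothing but the logit between the fixed query $q_i$ and the \emph{averaged} key $\bar{k}_i = W_K \bar{h}_i^{(l)}$, so the centered quantity $s_{ij} - \bar{s}_i$ only picks up the deviation $h_j^{(l)} - \bar{h}_i^{(l)}$, which is exactly what $\Delta_h^{(l)}$ controls.

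Concretely, I would first rewrite
\begin{equation*}
    \bar{s}_i \;=\; \tfrac{1}{\sqrt{d_k}}\, q_i^\top W_K \bar{h}_i^{(l)}, \qquad\text{so}\qquad s_{ij} - \bar{s}_i \;=\; \tfrac{1}{\sqrt{d_k}}\, q_i^\top W_K\big(h_j^{(l)} - \bar{h}_i^{(l)}\big),
\end{equation*}
using that $q_i$ is independent of $j$ and $k_j = W_K h_j^{(l)}$ depends linearly on $h_j^{(l)}$. Cauchy--Schwarz and submultiplicativity of the operator norm then give
\begin{equation*}
    |s_{ij} - \bar{s}_i| \;\leq\; \tfrac{1}{\sqrt{d_k}}\, \|q_i\|\, \|W_K\|\, \|h_j^{(l)} - \bar{h}_i^{(l)}\|.
\end{equation*}
Two of the three factors are now immediate: $\|W_K\| \leq B$ by the operator-norm assumption, and $\|h_j^{(l)} - \bar{h}_i^{(l)}\| \leq \Delta_h^{(l)}$ by the very definition of the prefix-spread. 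The remaining factor is $\|q_i\| = \|W_Q h_i^{(l)}\| \leq B\, \|h_i^{(l)}\|$.

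Finally, the $\sqrt{H}$ comes from converting $\|h_i^{(l)}\|/\sqrt{d_k}$ into a constant. Since the attention block is preceded by a normalization layer in the pre-norm architecture, the inputs $h_i^{(l)}$ into the block satisfy $\|h_i^{(l)}\| \leq \sqrt{d}$ (the learned scale being absorbed into the Lipschitz constant $B$ of the normalization layer). Combined with the standard multi-head relation $d_k = d/H$, this yields $\|h_i^{(l)}\|/\sqrt{d_k} \leq \sqrt{H}$, and hence $|s_{ij}-\bar{s}_i| \leq B^2 \sqrt{H}\, \Delta_h^{(l)}$ uniformly in $j \leq i$, so the maximum over $j$ obeys the same bound.

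The only real subtlety I anticipate is bookkeeping: tracking how the normalization scale contributes to $\|h_i^{(l)}\|$ and making sure the exponent on $B$ in the final bound is exactly $2$ rather than $3$. This is a matter of absorbing the normalization scale consistently into the architectural constant $B$ rather than a genuine mathematical obstacle. Once this convention is fixed, the whole argument reduces to one line of Cauchy--Schwarz plus the definition of $\Delta_h^{(l)}$.
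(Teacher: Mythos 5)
Your proof is correct and follows essentially the same route as the paper's: both rewrite $s_{ij}-\bar{s}_i = \tfrac{1}{\sqrt{d_k}}q_i^\top(k_j-\bar{k}_i)$, then apply Cauchy--Schwarz together with $\|W_K\|\le B$, the definition of $\Delta_h^{(l)}$, and $\|q_i\|\le B\sqrt{d}$ from the pre-norm normalization, and finally convert $\sqrt{d/d_k}$ into $\sqrt{H}$. The bookkeeping subtlety you flag is handled exactly as you anticipate; the paper sets $\|h_i\|=\sqrt{d}$ at initialization and absorbs the remaining scale factors into $B$, giving the exponent $2$.
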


\begin{proof}
    Notice that
    \begin{equation*}
        s_{ij}-\bar{s}_i = \frac{1}{\sqrt{d_k}} q_i^\top k_j + \frac{1}{i}\sum_{r \leq i}\frac{1}{\sqrt{d_k}}q_i^\top k_r = \frac{1}{\sqrt{d_k}}q_i^\top (k_j-\bar{k}_i).
    \end{equation*}
    Therefore, by Cauchy-Swartz
    \begin{equation*}
        \Big|s_{ij}-\bar{s}_i \Big| \leq \frac{1}{\sqrt{d_k}} \norm{q_i} \norm{k_j - \bar{k}_i}.
    \end{equation*}
    By the linearity of $W_K$ we get $\norm{k_j-\bar{k}_i}=\norm{W_K(h_j-\bar{h}_i)} \le \norm{W_K}\norm{h_j-\bar{h}_i}\leq \norm{W_K}\Delta_h^{(l)} \leq B \Delta_h^{(l)}$. As for $\norm{q_i} = \norm{W_Q h_i}$, recall that $h_i$ are the output of a normalization layer, and therefore (at initialization) $\norm{h_i} = \sqrt{d}$. Thus, $\norm{q_i} \leq B \sqrt{d}$. Putting it all together gives
    \begin{equation*}
        \Big|s_{ij}-\bar{s}_i \Big| \leq B^2 \sqrt{\frac{d}{d_k}} \Delta_h^{(l)} =
        B^2 \sqrt{H} \Delta_h^{(l)}.
    \end{equation*}
    To finish the proof, take a maximum over $j \leq i$.
\end{proof}
To bound the effect on the attention \emph{probabilities}, we need the following Lemma.
\begin{lemma}\label{lem:softmax}
    For any $b \in \sR^n$,
    \begin{equation*}
        \norm{\softmax(a+b)-\softmax(a)}_1 \leq \norm{b}_\infty.
    \end{equation*}
\end{lemma}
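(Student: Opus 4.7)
The plan is to use an interpolation argument between $a$ and $a + b$. Let $p(t) := \softmax(a + tb) \in \sR^n$ for $t \in [0, 1]$, so that $p(0) = \softmax(a)$ and $p(1) = \softmax(a+b)$. A direct computation using the softmax Jacobian gives
\begin{equation*}
    \frac{d}{dt} p_i(t) = p_i(t)\,\big(b_i - \bar{b}(t)\big), \qquad \bar{b}(t) := \sum_{j=1}^n p_j(t) b_j.
\end{equation*}
By the fundamental theorem of calculus and the triangle inequality,
\begin{equation*}
    \norm{\softmax(a+b) - \softmax(a)}_1 \leq \int_0^1 \sum_{i=1}^n p_i(t)\,\big|b_i - \bar{b}(t)\big|\, dt,
\end{equation*}
whose integrand is exactly the mean absolute deviation (MAD) of $b$ under the distribution $p(t)$.

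The main step is then to show that for any probability distribution $q$ on $\{1, \dots, n\}$,
\begin{equation*}
    \sum_i q_i |b_i - \bar{b}| \leq \tfrac{1}{2}(b_{\max} - b_{\min}),
\end{equation*}
where $b_{\max} = \max_i b_i$, $b_{\min} = \min_i b_i$, and $\bar{b} = \sum_j q_j b_j$. This is a Popoviciu-style inequality. I would prove it by partitioning $\{1, \dots, n\} = S^+ \cup S^-$ according to whether $b_i \geq \bar{b}$, and using the identity $\sum_i q_i (b_i - \bar{b}) = 0$ to rewrite the MAD as $2\alpha(1-\alpha)(\bar{b}^+ - \bar{b}^-)$, where $\alpha = q(S^+)$ and $\bar{b}^\pm$ denote the conditional means on $S^\pm$. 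Bounding $\bar{b}^+ \leq b_{\max}$, $\bar{b}^- \geq b_{\min}$, and $\alpha(1-\alpha) \leq 1/4$ yields the claim.

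Combining these two bounds with the trivial estimate $b_{\max} - b_{\min} \leq 2\norm{b}_\infty$ gives $\norm{\softmax(a+b) - \softmax(a)}_1 \leq \norm{b}_\infty$. There is no deep obstacle --- all steps are elementary --- but some care is needed to obtain the sharp constant: a naïve triangle-inequality bound $|b_i - \bar{b}| \leq |b_i| + |\bar{b}| \leq 2\norm{b}_\infty$ would inflate the final bound by a factor of two, so it is essential to invoke the Popoviciu-style argument (or equivalently, Hoeffding's lemma combined with Pinsker's inequality) to recover the tight constant claimed in the lemma.
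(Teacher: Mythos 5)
Your proof is correct, but it takes a genuinely different route from the paper's. The paper invokes a general smoothness criterion (Theorem 2.1.6 of \citet{optimization}): a $C^2$ convex $f$ with $d^\top \nabla^2 f(x)\, d \le \norm{d}_\infty^2$ has $\ell_1$--$\ell_\infty$ Lipschitz gradient, and then it verifies the Hessian bound for $f(x)=\log\sum_i e^{x_i}$ by noting the quadratic form is a variance, $\Var_{p}(d) \le \E_p[d^2] \le \norm{d}_\infty^2$. You instead argue directly at the level of the gradient path: you integrate $\tfrac{d}{dt}\,\softmax(a+tb)$ along $t\in[0,1]$, observe that $\norm{p'(t)}_1$ equals the mean absolute deviation of $b$ under $p(t)$, and bound that MAD by $\tfrac12(b_{\max}-b_{\min}) \le \norm{b}_\infty$ via a Popoviciu-type argument ($\mathrm{MAD} = 2\alpha(1-\alpha)(\bar b^+ - \bar b^-)$ with $\alpha(1-\alpha)\le 1/4$). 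Both routes are the same fundamental-theorem-of-calculus skeleton under the hood; the difference is the pointwise control of the derivative, the paper controlling a second moment (variance) and you a first moment (MAD). Your route is more self-contained (no external smoothness citation) and actually proves the slightly sharper pointwise bound $\norm{p'(t)}_1 \le \tfrac12(b_{\max}-b_{\min})$, at the cost of a small extra combinatorial argument; the paper's route is shorter once the Nesterov theorem is granted and is the one a reader familiar with convex optimization will recognize immediately. Either is perfectly acceptable here, since only the constant $1$ is needed downstream.
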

\begin{proof}
    A $C^2$ convex function $f: \sR^n \to \sR$ satisfies $\norm{\nabla f(x)-\nabla f(y)}_1 \leq \norm{x-y}_\infty$ (1-smoothness) if $d^\top \nabla^2 f(x) d \leq \norm{d}_\infty^2$ for all $x, d \in \sR^n$ (see Theorem 2.1.6 in~\citet{optimization}). Take $f(x) = \log\Big(\sum_{i = 1}^n e^{x_i}\Big)$. $f$ is $C^2$, convex and $\nabla f(x) = \softmax(x)$. Therefore, all we need to show is that for all $x, d \in \sR^n$
    \begin{equation*}
        d^\top \nabla \softmax(x) d = d \nabla^2 f(x)d \leq \norm{d}^2.
    \end{equation*}
    and indeed, 
    \begin{equation*}
        \begin{split}
            d^\top \nabla \softmax(x) d &= d^\top\mathrm{diag}(\softmax(x)) d - (\softmax(x)^\top d)^2 \\
            &\leq d^\top \mathrm{diag}(\softmax(x))d \\
            &\leq \norm{d}_\infty^2 \norm{\softmax(x)}_1 \\
            &=\norm{d}_\infty^2,
        \end{split}
    \end{equation*}
    as required.
\end{proof}
Using Lemma~\ref{lem:softmax}, we can bound the uniformity of $\alpha_{ij}$ and the prefix spread of the head outputs.
\begin{proposition}\label{prop:secondorder}
    Let $u_i = \frac1i\1 \in \sR^i$. In any layer $l$,
    \begin{equation}\label{eq:unif_attn}
        \norm{\alpha_i-u_i}_1 \leq B^2 \sqrt{H} \Delta_h^{(l)},
    \end{equation}
    and,
    \begin{equation}\label{eq:unif_outputs}
        \norm{z_i-\bar{v}_i} \le B^3 \sqrt{H} \big(\Delta_h^{(l)}\big)^2.
    \end{equation}
\end{proposition}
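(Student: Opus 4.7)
Both bounds reduce to combining the logit-stability estimate of Proposition~\ref{prop:rho} with the softmax Lipschitz bound of Lemma~\ref{lem:softmax}, after an elementary ``center then subtract'' trick for the second part. I would prove the two inequalities in order.

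\textbf{Part (1): uniformity of the attention weights.} The key observation is that the uniform vector $u_i = \tfrac{1}{i}\mathbf{1}$ equals $\softmax(\bar s_i \mathbf{1})$ for \emph{any} scalar $\bar s_i$, because softmax is invariant under constant shifts. I would therefore write $s_i = \bar s_i \mathbf{1} + (s_i - \bar s_i \mathbf{1})$ and apply Lemma~\ref{lem:softmax} with $a = \bar s_i \mathbf{1}$ and $b = s_i - \bar s_i \mathbf{1}$ to conclude
\begin{equation*}
\norm{\alpha_i - u_i}_1
= \norm{\softmax(a+b) - \softmax(a)}_1
\leq \norm{b}_\infty
= \max_{j \leq i}\bigl|s_{ij} - \bar s_i\bigr|.
\end{equation*}
Plugging in the bound from Proposition~\ref{prop:rho} then gives $\norm{\alpha_i - u_i}_1 \leq B^2\sqrt{H}\,\Delta_h^{(l)}$.

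\textbf{Part (2): the head output is close to $\bar v_i$.} I would start from
\begin{equation*}
z_i - \bar v_i = \sum_{j \leq i}\bigl(\alpha_{ij} - \tfrac{1}{i}\bigr) v_j,
\end{equation*}
and then exploit the fact that $\sum_{j \leq i}(\alpha_{ij} - 1/i) = 0$ to replace $v_j$ by $v_j - \bar v_i$ without changing the sum. A triangle inequality followed by Hölder's inequality gives
\begin{equation*}
\norm{z_i - \bar v_i}
\leq \sum_{j \leq i}\bigl|\alpha_{ij} - \tfrac{1}{i}\bigr|\,\norm{v_j - \bar v_i}
\leq \norm{\alpha_i - u_i}_1 \cdot \max_{j \leq i}\norm{v_j - \bar v_i}.
\end{equation*}
Since $v_j - \bar v_i = W_V(h_j - \bar h_i^{(l)})$, the max is at most $\norm{W_V}\,\Delta_h^{(l)} \leq B\,\Delta_h^{(l)}$, and combining with the bound from Part (1) yields $\norm{z_i - \bar v_i} \leq B^3\sqrt{H}\,(\Delta_h^{(l)})^2$.

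\textbf{Main obstacles.} Neither step should present genuine difficulty: Lemma~\ref{lem:softmax} and Proposition~\ref{prop:rho} do all the heavy lifting. The only subtle point is recognizing the two simplifying identities, namely that $u_i$ is the softmax of any constant vector (so we may subtract $\bar s_i \mathbf{1}$ freely inside the softmax in Part (1)), and that the mean-zero property of $\alpha_{ij} - 1/i$ lets us recenter the values around $\bar v_i$ in Part (2). The latter is what upgrades a linear dependence on $\Delta_h^{(l)}$ to the quadratic one, which is essential for the inductive propagation argument driving Theorem~\ref{thm:propagation}.
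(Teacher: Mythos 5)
Your proposal is correct and follows essentially the same route as the paper's proof: center the logits at $\bar s_i$ and invoke Lemma~\ref{lem:softmax} together with Proposition~\ref{prop:rho} for Part (1), then rewrite $z_i - \bar v_i = \sum_{j\leq i}(\alpha_{ij} - \tfrac1i)(v_j - \bar v_i)$ via the mean-zero trick and apply H\"older with the Part (1) bound and $\norm{v_j - \bar v_i} \leq \norm{W_V}\Delta_h^{(l)} \leq B\Delta_h^{(l)}$ for Part (2). The only cosmetic difference is that the paper states the recentered identity for $z_i - \bar v_i$ directly rather than deriving it from $\sum_j(\alpha_{ij}-\tfrac1i)=0$.
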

\begin{proof}
    To get Equation~\ref{eq:unif_attn}, let $a$ be the constant vector $(\bar{s}_i, \dots, \bar{s}_i) \in \sR^i$ and let $b = s_i - a$. By Lemma~\ref{lem:softmax}
    \begin{equation*}
        \norm{\alpha_i - u_i}_1 = \norm{\softmax(a + b) - \softmax(a)}_1 \leq \norm{b}_\infty.
    \end{equation*}
    Now, notice that $\norm{b}_\infty = \max_{j \leq i} \big|s_{ij} - \bar{s}_i\big|$, therefore Proposition~\ref{prop:rho} gives us the desired inequality. For Equation~\ref{eq:unif_outputs} notice that,
    \begin{equation*}
        z_i-\bar{v}_i=\sum_{j\le i}(\alpha_{ij}-\tfrac1i)\,(v_j-\bar v_i),
    \end{equation*}
    hence
    \begin{equation*}
        \norm{z_i-\bar{v}_i} \leq \max_{j\le i}\norm{v_j-\bar{v}_i} \norm{\alpha_i-u_i}_1 \leq B \Delta_h^{(l)} \cdot B^2 \sqrt{H} \Delta_h^{(l)} = B^3 \sqrt{H} \big(\Delta_h^{(l)}\big)^2.
    \end{equation*}
\end{proof}

We now bound the next layer's spread in terms of the current one. Denote by $\Delta_z^{(l)}:=\max_{i}\max_{j\le i}\norm{z_j-\bar{z}_i}$ the prefix spread of an attention head's output. First, we'll give a bound for $\Delta_z^{(l)}$, and then use this bound to prove the entire propagation result. Before, we need a short lemma.

\begin{lemma}\label{lem:avgstable}
    For any sequence $(x_j)$ and $j \leq i$,
    \begin{equation*}
        \norm{\bar{x}_j-\bar{x}_i} \leq \max_{r \le j} \norm{x_r-\bar{x}_i} \leq \max_{r \le i} \norm{x_r-\bar{x}_i}.
    \end{equation*}
\end{lemma}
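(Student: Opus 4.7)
The plan is to prove the two inequalities separately, each of which follows from a short manipulation of prefix averages. The second inequality is essentially immediate: since $j \leq i$, the index set $\{r : r \leq j\}$ is contained in $\{r : r \leq i\}$, so the maximum over the smaller set is bounded by the maximum over the larger one. No computation is needed beyond this monotonicity observation.

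For the first inequality, the key move is to express the difference $\bar{x}_j - \bar{x}_i$ as an average of the deviations that appear on the right-hand side. Concretely, I would write
\begin{equation*}
    \bar{x}_j - \bar{x}_i = \frac{1}{j}\sum_{r \leq j} x_r - \bar{x}_i = \frac{1}{j}\sum_{r \leq j}(x_r - \bar{x}_i),
\end{equation*}
using that $\bar{x}_i$ is a constant vector whose $j$-fold average equals itself. Applying the triangle inequality and then bounding each summand by the maximum yields
\begin{equation*}
    \norm{\bar{x}_j - \bar{x}_i} \leq \frac{1}{j}\sum_{r \leq j}\norm{x_r - \bar{x}_i} \leq \max_{r \leq j}\norm{x_r - \bar{x}_i},
\end{equation*}
which is precisely the claimed bound.

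There is no real obstacle here: conceptually the lemma just records that an average of vectors lies in the convex hull of those vectors (so any norm applied to its deviation from a reference point is dominated by the worst individual deviation), together with the trivial monotonicity of a maximum over nested index sets. The whole argument fits in a couple of lines and relies only on the triangle inequality and the definition of the prefix average $\bar{x}_j$.
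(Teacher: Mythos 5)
Your proposal is correct and matches the paper's proof exactly: both write $\bar{x}_j - \bar{x}_i = \frac{1}{j}\sum_{r\le j}(x_r - \bar{x}_i)$ and apply the triangle inequality, with the second inequality following from monotonicity of the maximum over nested index sets.
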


\begin{proof}
    $\bar{x}_j-\bar{x}_i= \frac1j \sum_{r\le j}(x_r-\bar{x}_i)$ and triangle inequality.
\end{proof}

\begin{proposition}\label{prop:attn_spread}
    For any layer $1 \leq l \leq L$,
    \begin{equation*}
        \Delta_z^{(l)} \leq 2 B \Delta_h^{(l)} + 2 B^3 \sqrt{H} \big(\Delta_h^{(l)}\big)^2.
    \end{equation*}
\end{proposition}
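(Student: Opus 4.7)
The plan is to bound $\bnorm{z_j - \bar{z}_i}$ for arbitrary $j\leq i$ via a triangle-inequality argument whose intermediate terms are all already controlled by Proposition~\ref{prop:secondorder} and Lemma~\ref{lem:avgstable}. The natural auxiliary quantity is the uniform prefix-average of the value vectors, $\bar{v}_i = \tfrac{1}{i}\sum_{r\leq i} v_r$, since Equation~\ref{eq:unif_outputs} says that near-uniform attention forces every $z_r$ to lie within $B^3\sqrt{H}(\Delta_h^{(l)})^2$ of $\bar{v}_r$, while the $v_r$ themselves have prefix spread at most $B\Delta_h^{(l)}$ (being linear images of the $h_r$ under $W_V$, whose norm is bounded by $B$).

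First I would split
\begin{equation*}
    z_j - \bar{z}_i \;=\; (z_j - \bar{v}_i) \;+\; (\bar{v}_i - \bar{z}_i).
\end{equation*}
For the first term, writing $z_j = \sum_{s\leq j}\alpha_{js}v_s$ as a convex combination gives $z_j - \bar{v}_i = \sum_{s\leq j}\alpha_{js}(v_s - \bar{v}_i)$, whose norm is at most $\max_{s\leq i}\bnorm{v_s - \bar{v}_i} \leq B\Delta_h^{(l)}$ by the above observation.

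For the second term I would further decompose
\begin{equation*}
    \bar{z}_i - \bar{v}_i \;=\; \tfrac{1}{i}\sum_{r\leq i}(z_r - \bar{v}_r) \;+\; \tfrac{1}{i}\sum_{r\leq i}(\bar{v}_r - \bar{v}_i).
\end{equation*}
The first summand is bounded termwise by Equation~\ref{eq:unif_outputs}, contributing $B^3\sqrt{H}(\Delta_h^{(l)})^2$. The second summand is exactly the prefix-average difference handled by Lemma~\ref{lem:avgstable} applied to the sequence $(v_r)$, yielding a bound of $\max_{r\leq i}\bnorm{v_r - \bar{v}_i} \leq B\Delta_h^{(l)}$.

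Summing the three contributions gives $\bnorm{z_j - \bar{z}_i} \leq 2B\Delta_h^{(l)} + B^3\sqrt{H}(\Delta_h^{(l)})^2$, which is in fact slightly stronger than the claimed bound $2B\Delta_h^{(l)} + 2B^3\sqrt{H}(\Delta_h^{(l)})^2$; taking a maximum over $j \leq i \leq T$ on the left-hand side completes the argument. There is no real analytic obstacle here: the only delicate point is picking a decomposition in which every intermediate quantity is either a convex combination of value vectors or a prefix average thereof, so that both Equation~\ref{eq:unif_outputs} and Lemma~\ref{lem:avgstable} apply cleanly and no factor scaling with $T$ creeps in.
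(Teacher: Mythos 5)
Your argument is correct and uses essentially the same ingredients as the paper: Proposition~\ref{prop:secondorder}, Lemma~\ref{lem:avgstable}, and the prefix spread of value vectors, stitched together by a triangle-inequality decomposition. The one substantive difference is that you bound $z_j - \bar{v}_i$ in a single step by writing it as the convex combination $\sum_{s\le j}\alpha_{js}(v_s-\bar{v}_i)$, whereas the paper routes through the intermediate quantity $\bar{v}_j$ (splitting off $z_j - \bar{v}_j$ and $\bar{v}_j - \bar{v}_i$ and paying one extra application of the quadratic bound); your shortcut is valid and yields the marginally tighter constant $2B\Delta_h^{(l)} + B^3\sqrt{H}(\Delta_h^{(l)})^2$, which of course still implies the stated inequality.
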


\begin{proof}
    Fix $i$ and $j\le i$. Write $z_j-\bar{z}_i=(\bar{v}_j-\bar{v}_i) + (z_j-\bar{v}_j) - (\bar{z}_i-\bar{v}_i)$,
    so
    \begin{equation*}
        \norm{z_j-\bar{z}_i} \leq \underbrace{\norm{\bar{v}_j - \bar{v}_i}}_{=\mathrm{(a)}} + \underbrace{\norm{z_j-\bar v_j}}_{=\mathrm{(b)}} + \underbrace{\norm{\bar{z}_i-\bar{v}_i}}_{=\mathrm{(c)}}.
    \end{equation*}
    By Lemma~\ref{lem:avgstable}, 
    \begin{equation*}
        \mathrm{(a)} = \norm{\bar{v}_j - \bar{v}_i} \leq \max_{r \leq i}\norm{v_r-\bar{v}_i} \leq \norm{W_V} \Delta_h^{(l)} \leq B \Delta_h^{(l)}.
    \end{equation*}
    By Proposition~\ref{prop:secondorder}
    \begin{equation*}
        \mathrm{(b)} \leq B^3 \sqrt{H} \big(\Delta_h^{(l)}\big)^2.
    \end{equation*}
    As for (c), Notice that,
    \begin{equation*}
       \bar{z}_i - \bar{v}_i =  \frac{1}{i}\sum_{r \leq i} (z_r - \bar{v}_i) = \frac{1}{i}\sum_{r \leq i} \left((z_r - \bar{v}_r) + (\bar{v}_r - \bar{v}_i)\right), 
    \end{equation*}
    therefore by the triangle inequality, Proposition~\ref{prop:secondorder}, and Lemma~\ref{lem:avgstable},
    \begin{equation*}
        \begin{split}
            \mathrm{(c)} \leq \frac{1}{i}\sum_{r \leq i} \norm{z_r - \bar{v}_r} + \frac{1}{i} \sum_{r \leq i} \norm{\bar{v}_r - \bar{v}_i} &\leq B^3\sqrt{H} \big(\Delta_h^{(l)}\big)^2 + \frac{1}{i}\sum_{r \leq i} \max_{k \leq i}\norm{v_k - \bar{v}_i} \\
            &= B^3\sqrt{H} \big(\Delta_h^{(l)}\big)^2 + \max_{k \leq i} \norm{v_k - \bar{v}_i} \\
            &\leq B^3\sqrt{H}\big(\Delta_h^{(l)}\big)^2 + B \Delta_h^{(l)}
        \end{split}
    \end{equation*}
    
    To finish the proof, take the maximum over $i$ and $j \leq i$.
\end{proof}

\begin{proposition}[Full Transformer block recursion]\label{prop:layer_recur}
    There exist constants $A_1, A_2$ depending only on $B$, and $H$, such that
    \begin{equation*}
        \Delta_h^{(l+1)} \leq A_1 \Delta_h^{(l)} + A_2\big(\Delta_h^{(l)}\big)^2.
    \end{equation*}
\end{proposition}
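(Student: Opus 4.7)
The plan is to trace the prefix-spread functional through one pre-norm transformer block, decomposed as $h^{(l)} \mapsto y^{(l)} = h^{(l)} + \mathrm{MHA}(\mathrm{LN}_1(h^{(l)}))$ followed by $y^{(l)} \mapsto h^{(l+1)} = y^{(l)} + \mathrm{MLP}(\mathrm{LN}_2(y^{(l)}))$. I expect each sub-block to amplify $\Delta$ in a controlled way, with the attention sub-block contributing the sole quadratic term via Proposition~\ref{prop:attn_spread}, and every other component acting purely linearly. The final bound is then obtained by composing these amplifications.

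First I would record two elementary facts about the prefix-spread functional. (i) If $f$ is applied pointwise and is $L$-Lipschitz, then $\Delta_{f(h)}^{(l)} \leq 2L\,\Delta_h^{(l)}$: writing $f(h_j) - \tfrac{1}{i}\sum_{k \leq i} f(h_k) = \tfrac{1}{i}\sum_{k\leq i}(f(h_j) - f(h_k))$, the triangle inequality together with $\norm{h_j - h_k} \leq 2\Delta_h^{(l)}$ for $j,k \leq i$ yields the bound. (ii) For a residual addition, $\Delta_{g+h}^{(l)} \leq \Delta_g^{(l)} + \Delta_h^{(l)}$, since $(g_j + h_j) - \overline{(g+h)}_i = (g_j - \bar g_i) + (h_j - \bar h_i)$. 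These two facts, combined with Proposition~\ref{prop:attn_spread}, supply the entire machinery.

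Next I apply this to the attention branch. The pre-norm $\mathrm{LN}_1$ is pointwise and Lipschitz with constant at most $B$, so by fact (i) its output has spread at most $2B\Delta_h^{(l)}$. Feeding this into each attention head and invoking Proposition~\ref{prop:attn_spread} with input spread $2B\Delta_h^{(l)}$ bounds the per-head output spread by $2B(2B\Delta_h^{(l)}) + 2B^3\sqrt{H}(2B\Delta_h^{(l)})^2$. Since $\bnorm{[z_j^{(1)}, \ldots, z_j^{(H)}]} \leq \sqrt{H}\max_h\norm{z_j^{(h)}}$, concatenating the $H$ heads inflates the spread by at most $\sqrt{H}$, and applying $W_O$ contributes another factor $B$. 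Combined with fact (ii) to absorb the residual, the output of the first sub-block satisfies $\Delta_y^{(l)} \leq (1 + c_1)\Delta_h^{(l)} + c_2 (\Delta_h^{(l)})^2$ for constants $c_1, c_2$ depending only on $B$ and $H$.

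The MLP branch is identical in spirit and purely linear: $\mathrm{LN}_2$ and the two-layer MLP are each pointwise and Lipschitz with constant at most $B$, so fact (i) bounds the MLP output spread by $(2B)^2 \Delta_y^{(l)}$; a final residual addition via fact (ii) then yields $\Delta_h^{(l+1)} \leq A_1 \Delta_h^{(l)} + A_2(\Delta_h^{(l)})^2$ for constants $A_1, A_2$ depending only on $B$ and $H$, as required. The main obstacle is pure bookkeeping---correctly composing the Lipschitz factors with the $\sqrt{H}$ inflation from concatenation, and ensuring that the quadratic term (which is the source of the persistence-of-uniformity phenomenon needed in Theorem~\ref{thm:propagation}) is not inadvertently amplified to a higher power along the way. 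No new analytic estimate beyond Proposition~\ref{prop:attn_spread} is needed; the linear-plus-quadratic form provided by that proposition is exactly what propagates through the entire block.
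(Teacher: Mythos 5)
Your proposal is correct and takes essentially the same route as the paper: compose the quadratic bound of Proposition~\ref{prop:attn_spread} with Lipschitz bounds on the pointwise maps (LayerNorm, MLP, $W_O$), absorb head concatenation as a $\sqrt{H}$ factor, and use the triangle inequality for the residual. Your version simply fills in the bookkeeping that the paper's terse sketch leaves implicit, via your two explicit facts about how the prefix-spread functional behaves under pointwise Lipschitz maps and residual addition.
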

\begin{proof}
    From Proposition~\ref{prop:attn_spread}, the single-head spread is bounded by a linear term $2 B\Delta_h$ plus a quadratic term $2B^3\sqrt{H}$. Concatenation and $W_O$ multiply by at most $\|W_O\|$ (up to a fixed constant depending on number of heads). Adding the residual preserves a linear contribution in $\Delta_h^{(\ell)}$. The positionwise LayerNorm/MLP, being $B$-Lipschitz, scales the spread by at most $B$. Collecting the constants into $A_1$ and, $A_2$ gives the desired result.
\end{proof}
We can now proof the full propagation result.
\begin{theorem}\label{thm:propagation_formal}
    For any finite depth $L$, there exists $\varepsilon>0$ (depending on $B$, $L$, and $H$) such that if $\Delta_h^{(1)} \leq \varepsilon$, then for all $l \leq L$,
    \begin{equation*}
        \Delta_h^{(l)} \leq C \Delta_h^{(1)} \leq C \varepsilon,
    \end{equation*}
    with $C=C(B, L, H)$.
\end{theorem}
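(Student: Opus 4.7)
The plan is to induct on the layer index $l$, using the one-step recursion from Proposition~\ref{prop:layer_recur} as the only dynamical ingredient. Writing the recursion in the factored form
\begin{equation*}
    \Delta_h^{(l+1)} \leq \Delta_h^{(l)}\bigl(A_1 + A_2\, \Delta_h^{(l)}\bigr),
\end{equation*}
the key observation is that if $\Delta_h^{(l)}$ can be kept small (say, bounded by some quantity proportional to $\varepsilon$), then the parenthetical factor is at most $A_1 + O(\varepsilon)$, i.e.\ essentially linear. Thus a small enough $\varepsilon$ will freeze the otherwise-quadratic recursion into a linear geometric growth.

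Concretely, I would set the target constant $C := (2A_1)^{L-1}$ (taking $\max(2A_1,1)$ if $A_1<1/2$, so that $C \geq 1$), and then choose
\begin{equation*}
    \varepsilon \leq \frac{A_1}{A_2 \, C} = \frac{A_1}{A_2 (2A_1)^{L-1}},
\end{equation*}
which depends only on $B$, $L$, and $H$ through $A_1, A_2$. I then prove by induction on $l = 1, \dots, L$ that $\Delta_h^{(l)} \leq (2A_1)^{l-1} \Delta_h^{(1)}$. The base case $l=1$ is trivial. For the inductive step, the hypothesis gives $\Delta_h^{(l)} \leq C\varepsilon$, so by the choice of $\varepsilon$ we have $A_2\Delta_h^{(l)} \leq A_1$, and therefore
\begin{equation*}
    \Delta_h^{(l+1)} \leq \Delta_h^{(l)}\bigl(A_1 + A_2\Delta_h^{(l)}\bigr) \leq 2A_1 \Delta_h^{(l)} \leq (2A_1)^{l}\Delta_h^{(1)},
\end{equation*}
closing the induction. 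Taking $l = L$ yields the desired $\Delta_h^{(l)} \leq C\Delta_h^{(1)} \leq C\varepsilon$.

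The only subtlety, and hence the main thing to get right, is that $\varepsilon$ and $C$ must be chosen \emph{jointly and in the correct order}: $C$ is fixed first as a function of $A_1$ and $L$, and then $\varepsilon$ is chosen small enough that $A_2 C \varepsilon \leq A_1$, so that the inductive invariant $\Delta_h^{(l)} \leq C\varepsilon$ is strong enough to suppress the quadratic term at every layer. This circular-looking dependence is benign because $L$ is fixed, but it is the reason $\varepsilon$ must depend on $L$ (and degrades geometrically in $L$). No other obstacle arises: all constants are absorbed into $A_1, A_2$ by Proposition~\ref{prop:layer_recur}, and the high-probability statement about the operator-norm bound $B$ is inherited from the initialization assumption used throughout Appendix~\ref{appendix:extended_theory}.
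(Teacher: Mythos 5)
Your proof is essentially the same argument as the paper's: apply the recursion from Proposition~\ref{prop:layer_recur}, choose $\varepsilon$ small enough that $A_2\Delta_h^{(l)}\leq A_1$ holds so the quadratic term is dominated by the linear one, and conclude geometric growth $\Delta_h^{(l)}\leq(2A_1)^{l-1}\Delta_h^{(1)}$ with $C=(2A_1)^{L-1}$. In fact, your version is slightly more careful: the paper chooses $\varepsilon\leq\min\{1,A_1/A_2\}$, which only guarantees $A_2\Delta_h^{(1)}\leq A_1$ at the first layer, whereas when $2A_1>1$ the spread grows and can escape that bound at deeper layers; your choice $\varepsilon\leq A_1/(A_2 C)$ correctly ensures the inductive invariant $A_2\Delta_h^{(l)}\leq A_1$ holds uniformly over $l\leq L$, closing the induction without the implicit gap.
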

\begin{proof}
    By Proposition~\ref{prop:layer_recur}, $\Delta_h^{(l+1)} \leq A_1 \Delta_h^{(l)} + A_2 (\Delta_h^{(l)})^2$. Choose $\varepsilon \leq \min\{1, (A_1/A_2)\}$ so that $A_2 \Delta_h^{(l)} \le A_1$. Then $\Delta_h^{(l+1)} \leq 2A_1 \Delta_h^{(l)}$. Induction yields $\Delta_h^{(l)} \leq (2A_1)^{l-1}\Delta_h^{(1)} \leq C \Delta_h^{(1)}$ for $l \leq L$ with $C =(2A_1)^{L-1}$.
\end{proof}
This conclude the first part of the proof, regarding uniformity propagation across depth. Note that the bounds in the proof \emph{do not} depend on the number of tokens in the input sequence. 

\paragraph{$\abias^c$ bound.}
Recall that,
\begin{equation*}
    \abias^c = \frac{1}{T}\sum_{i = 1}^T\sum_{j \leq i} \alpha_{ij} c_{ij}
\end{equation*}
where $c_{ij}$ are centered positional weights, i.e. $\sum_{j \leq i} c_{ij} = 0$. For any such $c_{ij}$ we have
\begin{equation*}
    \begin{split}
        \Big|\abias^c\Big| &= \frac{1}{T} \Big|\sum_{i = 1}^T\sum_{j \leq i} \alpha_{ij} c_{ij}\Big| = \frac{1}{T} \Big|\sum_{i = 1}^T\sum_{j \leq i} \alpha_{ij} c_{ij}\Big| = \frac{1}{T} \Big|\sum_{i = 1}^T\sum_{j \leq i} (\alpha_{ij}-\tfrac{1}{i}) c_{ij} + \sum_{i = 1}^T\underbrace{\sum_{j \leq i} \tfrac{1}{i} c_{ij}}_{= 0}\Big| \\
        &\leq \frac{1}{T}\sum_{i = 1}^T \sum_{j \leq i} |\alpha_{ij} - \tfrac{1}{i}||c_{ij}| \leq \Big(\underbrace{\max_{1 \leq j \leq i \leq T}|c_{ij}|}_{C}\Big) \frac{1}{T}\sum_{i = 1}^T \norm{\alpha_i - u_i}_1 \\
        &\leq C B^2 \sqrt{H} \Delta_h^{(l)} = \gO(\varepsilon).
    \end{split}
\end{equation*}

\paragraph{Q/K gradient bounds.}
Let $g_{ij} = \partial \abias^c / \partial s_{ij}$. We have
\begin{equation*}
    g_{ij} = \tfrac{1}{T}\alpha_{ij}(c_{ij} - c_i^\alpha),
\end{equation*}
where $c_i^\alpha = \sum_{p \leq i} \alpha_{ip} c_{ip}$.
\begin{lemma}\label{lem:rowsum}
    For every $i$, $\sum_{j\le i} g_{ij}=0$, and therefor for any vectors $a_j$
    \begin{equation*}
        \sum_{j \leq i}g_{ij}a_j = \sum_{j \leq i}g_{ij}(a_j - \bar{a}_i).
    \end{equation*}
\end{lemma}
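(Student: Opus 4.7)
The plan is to prove the two assertions in order, exploiting the fact that the second is a direct consequence of the first. Both rely only on the explicit formula $g_{ij} = \tfrac{1}{T}\alpha_{ij}(c_{ij} - c_i^\alpha)$ and on the definition of $c_i^\alpha$.

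For the row-sum identity, I would simply plug in the definition and split the sum:
\begin{equation*}
    \sum_{j \leq i} g_{ij} = \tfrac{1}{T}\sum_{j \leq i}\alpha_{ij} c_{ij} - \tfrac{c_i^\alpha}{T} \sum_{j \leq i}\alpha_{ij}.
\end{equation*}
Since $\alpha_i$ is a probability distribution over $\{1,\dots,i\}$, the second sum equals $1$, and by definition the first sum equals $c_i^\alpha / T$, so the two terms cancel and $\sum_{j \leq i} g_{ij} = 0$.

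For the second identity, I would use the row-sum identity to subtract a constant vector freely from the $a_j$'s. Specifically, writing $a_j = (a_j - \bar{a}_i) + \bar{a}_i$ and distributing,
\begin{equation*}
    \sum_{j \leq i} g_{ij} a_j = \sum_{j \leq i} g_{ij}(a_j - \bar{a}_i) + \bar{a}_i \sum_{j \leq i} g_{ij} = \sum_{j \leq i} g_{ij}(a_j - \bar{a}_i),
\end{equation*}
where the final equality uses part (1). No real obstacle is expected here: the only subtlety is to note that $c_i^\alpha$ depends on $i$ but not on $j$, so it factors out of the $j$-sum, which is exactly what makes the cancellation in part (1) work.
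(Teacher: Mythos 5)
Your proof is correct and follows essentially the same approach as the paper's: the row-sum cancellation from the definitions of $g_{ij}$ and $c_i^\alpha$ (the paper phrases it in expectation notation, you in sum notation, but the algebra is identical), followed by distributing the sum to drop the $\bar{a}_i$ term.
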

\begin{proof}    
    First notice that
    \begin{equation*}
        \sum_{j \leq i}g_{ij} = \frac{1}{T}\sum_{j \leq i}\alpha_{ij} (c_{ij} - c_{ij}^\alpha) = \frac{1}{T}\E_{j \sim \alpha_i}[c_{ij} - \E_{p \sim \alpha_i}[c_{ip}]] = 0.
    \end{equation*}
    For the second part, observe that
    \begin{equation*}
        \sum_{j \leq i} g_{ij}(a_j - \bar{a}_i) =\sum_{j \leq i} g_{ij}a_j - \bar{a}_i\sum_{j \leq i}g_{ij} = \sum_{j \leq i} g_{ij}a_j.
    \end{equation*}
\end{proof}
Now, from direct computation and an application of Lemma~\ref{lem:rowsum}, we have
\begin{align*}
    \frac{\partial \abias^c}{\partial W_Q} &= \frac{1}{\sqrt{d_k}}\sum_{i=1}^T \Big(\sum_{j \le i}g_{ij} k_j\Big)h_i^\top = \frac{1}{\sqrt{d_k}}\sum_{i=1}^T \sum_{j \le i}g_{ij} (k_j - \bar{k}_i)h_i^\top,
    \\
    \frac{\partial \abias^c}{\partial W_K} &= \frac{1}{\sqrt{d_k}} \sum_{i=1}^T q_i \Big(\sum_{j \leq i} g_{ij} h_j^\top \Big) = \frac{1}{\sqrt{d_k}} \sum_{i = 1}^T\sum_{j \leq i} g_{ij}q_i(h_j - \bar{h}_i)^\top.
\end{align*}
Let's analyse the norm:
\begin{equation*}
    \begin{split}
        \Bnorm{\frac{\partial \abias}{\partial W_K}} &= \Bnorm{\frac{1}{\sqrt{d_k}} \sum_{i = 1}^T\sum_{j \leq i} g_{ij}q_i(h_j - \bar{h}_i)^\top} \leq \frac{1}{\sqrt{d_k}} \sum_{i = 1}^T\sum_{j \leq i} |g_{ij}| \norm{q_i} \norm{h_j - \bar{h}_i} \\
        &\leq B\sqrt{H} \Delta_h^{(l)}\sum_{i = 1}^T\sum_{j \leq i} |g_{ij}| \\
        &\leq \frac{B\sqrt{H} \Delta_h^{(l)}}{T} \Big(\sum_{i = 1}^T\sum_{j \leq i} |(\alpha_{ij} - \tfrac{1}{i})(c_{ij} - c_i^\alpha)| + \sum_{i = 1}^T\sum_{j \leq i}\tfrac{1}{i}|c_{ij} - c_i^\alpha|\Big) \\
        &\leq B\sqrt{H} \Delta_h^{(l)}\Big(B^2 \sqrt{H} \Delta_h^{(l)}C + C\Big) = \gO(\varepsilon), 
    \end{split}
\end{equation*}
where $C = \max_{1 \leq j \leq i \leq T} |c_{ij} - c_i^\alpha| \leq \max_{1 \leq j \leq i \leq T} |c_{ij}|$. An analogous result holds for $W_Q$, 
\begin{equation*}
    \begin{split}
        \Bnorm{\frac{\partial \abias}{\partial W_Q}} &= \Bnorm{\frac{1}{\sqrt{d_k}}\sum_{i=1}^T \sum_{j \le i}g_{ij} (k_j - \bar{k}_i)h_i^\top} \leq \frac{1}{\sqrt{d_k}}\sum_{i = 1}^T\sum_{j \leq i}|g_{ij}| \norm{k_j - \bar{k}_i} \norm{h_i} \\
        &\leq B\sqrt{H} \Delta_h^{(l)} \sum_{1 \leq j \leq i \leq T}|g_{ij}| \\
        &\leq \frac{B\sqrt{H} \Delta_h^{(l)}}{T} \Big(\sum_{i = 1}^T\sum_{j \leq i} |(\alpha_{ij} - \tfrac{1}{i})(c_{ij} - c_i^\alpha)| + \sum_{i = 1}^T\sum_{j \leq i}\tfrac{1}{i}|c_{ij} - c_i^\alpha|\Big) \\
        &\leq B\sqrt{H} \Delta_h^{(l)}\Big(B^2 \sqrt{H} \Delta_h^{(l)}C + C\Big) = \gO(\varepsilon). 
    \end{split}
\end{equation*}
This concludes the proof of Theorem~\ref{thm:propagation}.

\section{Experimental details} \label{appendix: experimental details}

\subsection{Training}
\paragraph{\ouralgo from a RoPE transformer trained from scratch.}
For the first part of our experimental evaluation, we train a small RoPE transformer with almost half a billion parameters on FineWeb~\citep{fineweb} for over 16B tokens with a sequence length of 1024. We note this is well over 2 times the chinchilla optimal number of tokens from \citet{chinchilla}. We use a \textsc{Qwen2}~\citep{qwen2} tokenizer and follow the specifications (number of layers/hidden dimensions) from the 0.5B model from the same family. We implemented all our baselines on top of this architecture, pretraining them for the same large number of tokens. We use the AdamW optimizer~\cite{adamw} with a small warmup phase of 520 steps, a batch size of 1024, a peak learning rate of $3.0\times10^{-4}$, and a cosine decay thereafter. For \ouralgo we followed a similar optimization setup, but only training for 2B total tokens using a shorter warmup of 70 steps and a slightly larger learning rate of $1.0\times10^{-3}$ to compensate for the shorter training budget. We provide a full list of hyperparameters and training specifications for this setting in the left column of Table~\ref{tab:full_hparams}.

\paragraph{\ouralgo from a pretrained \textsc{SmolLM} .}
For the second part of our experimental evaluation, we use a \textsc{SmolLM}~\citep{smollm} with around 362 million parameters already extensively pretrained on the SmolLM corpus~\citep{smollm_corpus} for over 600B tokens with a sequence length of 2048 -- almost 100 times the chinchilla optimal number. This model used a \textsc{GPT2}~\citep{gpt2} tokenizer and its architecture was designed to be similar to models of the \textsc{Llama2} family~\citep{llama2}. While not all training details have been disclosed,  \citet{smollm} explicitly mentions using the AdamW optimizer~\cite{adamw}, a batch size of 512, a peak learning rate of $3.0\times10^{-3}$, and a cosine decay thereafter. 
For \ouralgo we again tried to follow a similar optimization setup, across our different 30B/60B/120B training regimes, introducing a short warmup of 490 steps and a slightly lower learning rate of $1.0\times10^{-3}$ as we found their reported $3.0\times10^{-3}$ led to instabilities from the small batch size. Given the more extended training period, we used a simple QKNorm~\citep{qknorm} after dropping the positional embeddings, which we found beneficial to mitigate sporadic instabilities from large gradients. We note that preliminary experiments showed that normalizing only the queries led to even faster learning and also successfully stabilized long training. We believe further exploration of this new Q-norm method could be an exciting direction for future work to train transformers without positional embeddings at even larger scales. We provide a full list of hyperparameters and training specifications for this setting in the right column of Table~\ref{tab:full_hparams}.

\begin{table}[H]
\caption{Architectures, optimization, and other training setup hyperparameters for pretraining our RoPE transformer, \textsc{SmolLM}, and our two new \ouralgo phases.} \label{tab:full_hparams}
\centering
\small
\begin{tabular}{lcc}
\toprule
\textbf{Pretraining and DroPE Hyperparameter} &
\text{RoPE transformer} &
\textsc{SmolLM}\\
\midrule
\multicolumn{3}{l}{\textbf{Model architectures}} \\
\midrule
Model parameters & 494M & 362M \\
Model parameters w/o embeddings & 358M & 315M \\
Hidden size & 896 & 960 \\
Hidden MLP size & 4864 & 2560 \\
Hidden activation & SiLU & SiLU \\
Number of hidden layers & 24 & 32 \\
Number of attention heads & 14 & 15 \\
Number of key--value heads & 2 & 5 \\
Head dimension & 64 & 64 \\
Attention bias & false & false \\
Attention dropout & 0.0 & 0.0 \\
Initializer range & 0.02 & 0.02 \\
RoPE $\theta$ & 1{,}000{,}000 & 10{,}000 \\
Tied word embeddings & true & true \\
Output router logits & true & true \\
Computation dtype & bfloat16 & bfloat16 \\
Tokenizer &\textsc{Qwen2} & \textsc{GPT2} \\
\midrule
\multicolumn{3}{l}{\textbf{Pretraining setup}} \\
\midrule
Optimizer & AdamW & AdamW \\
Learning rate & $3.0\times10^{-4}$ & $3\times10^{-3}$ \\
Weight decay & 0.1 & 0.1 \\
Adam parameters $(\beta_1,\beta_2,\epsilon)$ & (0.9, 0.95, $1\!\times\!10^{-8}$) & (0.9, 0.95, $1\!\times\!10^{-8}$) \\
Learning rate scheduler & Cosine decay & Cosine decay \\
Warmup steps & 520 & N/A \\
\midrule
Maximum sequence length & 1024 & 2048 \\
Global train batch size (sequences) & 1024 & 512 \\
Tokens per training step & 1{,}048{,}576 & 1{,}048{,}576 \\
Total tokens & 16.8B & 600B \\
Dataset & fineweb & smollm-corpus \\
\midrule
\multicolumn{3}{l}{\textbf{\ouralgo setup}} \\
\midrule
QK-norm & False & True\\
Optimizer & AdamW & AdamW \\
Learning rate & $1.0\times10^{-3}$ & $1.0\times10^{-3}$ \\
Weight decay & 0.1 & 0.1 \\
Adam parameters $(\beta_1,\beta_2,\epsilon)$ & (0.9, 0.95, $1\!\times\!10^{-8}$) & (0.9, 0.95, $1\!\times\!10^{-8}$) \\
Learning rate scheduler & Cosine decay & Cosine decay \\
Warmup steps & 70 & 490 \\
\midrule
Maximum sequence length & 1024 & 2048 \\
Global train batch size (sequences) & 1024 & 512 \\
Tokens per training step & 1{,}048{,}576 & 1{,}048{,}576 \\
Total tokens & 2.10B & 31.46B/62.9B/125.8B \\
Dataset & fineweb & fineweb-edu \\
\bottomrule
\end{tabular}
\end{table}

\subsection{Evaluation}
\paragraph{Needle-in-a-haystack.}
We evaluate long-context retrieval using the \emph{needle-in-a-haystack} (NIAH) setup, which places a short ``needle'' inside a long distractor “haystack.” Following prior work~\citep{niah_repo}, our haystack is a random excerpt from Paul Graham’s essays, and each needle is a seven-digit ``magic number'' paired with a short key/descriptor. We study three variants:
\begin{itemize}
    \item \textbf{(Standard NIAH)} We insert a single needle and prompt the model to retrieve it.
    \item \textbf{Multi-Query NIAH:} We insert multiple (key, value) pairs and prompt the model to return as many values as possible for a given list of keys. For example: \texttt{The special magic numbers for whispering-workhorse and elite-butterfly mentioned in the provided text are:}.
    \item \textbf{(Multi-Key NIAH)} We insert multiple (key, value) pairs but query for a single key, e.g., \texttt{The special magic number for elite-butterfly mentioned in the provided text is:}
    \item \textbf{(Multi-Value NIAH)} We associate multiple values with one key and ask for all of them without pointing to specific positions, e.g., \texttt{What are all the special magic numbers for cloistered-colonization mentioned in the provided text?}
\end{itemize}
Inserted needles and example targets are formatted in natural language, for instance, two examples include \texttt{One of the special magic numbers for whispering-workhorse is: 1019173} and \texttt{One of the special magic numbers for elite-butterfly is: 4132801}. For the standard NIAH variant, we report the average success rate over all possible needle depths. For the multiple needles NIAH variants, we always insert four (key, value) needle pairs, placed at random sequence locations. Unless otherwise noted, we use greedy decoding (logit temperature~$=0$) for reproducibility.

\paragraph{Long-context evaluations.}
We use standard implementations of PI, RoPE-NTK, and YaRN. For tasks that require a fixed maximum context length (e.g., NIAH at $2\times$ the training context), we set the \emph{extension factor} $s$ manually. For settings that require reasoning across multiple context lengths and extended generations, we employ a \emph{dynamic scaling} schedule that adjusts $\gamma$ as a function of the generation length as detailed in~\citet{yarn}.

For \ouralgo, we follow \citet{wang2024length} and apply softmax \emph{temperature scaling} when evaluating on longer sequences. In practice, we tune a single scalar logit scale (equivalently, the inverse temperature) on a held-out set at the target length. Analogous to \citep{yarn}, we fit this coefficient by minimizing perplexity to obtain the optimal scaling. For the \ouralgo model trained from scratch, the best-performing scale is
\begin{equation*}
    \beta^\star = 1 + 0.412 \ln(s),
\end{equation*}
and for \textsc{SmolLM-DroPE} the optimal scale is
\begin{equation*}
    \beta^\star = 1 + 0.103 \ln(s),
\end{equation*}
Where $s = C_\mathrm{test}/C_\mathrm{train}$ is the context extension factor. Unless otherwise specified, all other decoding settings are held fixed across lengths.

\paragraph{Language modeling benchmarks.}
We evaluate \textsc{SmolLM} and \textsc{SmolLM-\ouralgo} on six standard multiple-choice benchmarks using the \textsc{LightEval} harness~\citep{lighteval}: \textbf{ARC-E/C:} grade-school science QA split into Easy and Challenge sets, the latter defined by questions that defeat simple IR and co-occurrence baselines~\citep{bench_1_arc}; \textbf{HellaSwag:} adversarially filtered commonsense sentence completion that is easy for humans but challenging for LMs~\citep{bench_2_hellaswag}; \textbf{OpenBookQA:} combining a small ``open book'' of science facts with broad commonsense to answer 6K questions~\citep{bench_3_openbook_qa}; \textbf{PIQA:} two-choice physical commonsense reasoning~\citep{bench_4_piqa}; and \textbf{WinoGrande:} a large-scale, adversarial Winograd-style coreference/commonsense benchmark~\citep{bench_5_winogrande}. We follow the harness defaults for prompt formatting, decoding, and scoring, and do not perform any task-specific fine-tuning or data adaptation.

\begin{table}[H]
    \centering
    \small
    \caption{\textbf{\ouralgo matches base model in-context performance.} Comparison of the pretrained \textsc{SmolLM-360M} and \textsc{SmolLM-1.7B} models with \textsc{SmolLM-360M-\ouralgo} and \textsc{SmolLM-1.7B-\ouralgo} respectively. Modes are evaluated on a variety of LM benchmarks across question answering and reasoning tasks.}\label{tab:in_context_results}
    \resizebox{\textwidth}{!}{
        \begin{tabular}{lcccccc|c}
            \toprule
            \textbf{Model} & \textbf{ARC-E} & \textbf{ARC-C} & \textbf{HellaSwag} & \textbf{OpenBookQA} & \textbf{PIQA} & \textbf{Winogrande} & \textbf{Avg.} \\
            \midrule
            \textsc{SmolLM-360M} & $65.6$ & $36.0$  & $53.8$ & $37.2$ & $\mathbf{72.0}$ & $\mathbf{53.7}$ & $53.1$ \\
            \textsc{SmolLM--360M-\ouralgo} &  $\mathbf{67.3}$ & $\mathbf{37.6}$ & $\mathbf{53.9}$ & $\mathbf{38.0}$ & $71.5$ & $52.3$ & $\mathbf{53.4}$\\
            \midrule
            \textsc{SmolLM-1.7B} & $77.50$ & $\mathbf{44.0}$ & $64.10$ & $42.60$ & $\mathbf{77.30}$ & $56.00$ & $60.25$ \\
            \textsc{SmolLM-1.7B-\ouralgo} & $\mathbf{77.70}$ & $42.9$ & $\mathbf{65.90}$ & $\mathbf{43.00}$ & $77.10$ & $\mathbf{57.10}$ & $\mathbf{60.62}$ \\
            \bottomrule
        \end{tabular}
    }
\end{table}
\section{Additional experimental results} \label{appendix: extra experiments}

\subsection{Additional recalibration ablations}\label{apx:rec_ablation}
\textbf{When should we start recalibration?} In this setup, we train a 500M-parameter transformer on 16B tokens and remove its PEs during training. We vary the training step at which recalibration is activated. We consider four recipes:
\begin{itemize}
    \item Dropping PEs from step 0 (\emph{NoPE transformer}),
    \item Dropping PEs at step 8K,
    \item Dropping PEs at step 14K,
    \item Dropping PEs at step 16K (\emph{RoPE transformer}, i.e., no dropping during training).
\end{itemize}

Table~\ref{tab:pretrain_recalib} reports the final validation perplexity for each setting.

\begin{table}[htbp]
\centering
\small
\caption{Validation perplexity for a 500M-parameter transformer trained on 16B tokens, when dropping positional encodings at different stages of pretraining.}
\label{tab:pretrain_recalib}
\begin{tabular}{lcccc}
\toprule
& \textbf{\ouralgo @ 0K (NoPE)} & \textbf{\ouralgo @ 8K} & \textbf{\ouralgo @ 14K} & \textbf{\ouralgo @ 16K (RoPE)} \\
\midrule
Validation perplexity & 23.77 & 22.42 & 21.73 & \textbf{21.72} \\
\bottomrule
\end{tabular}
\end{table}

We find that this ablation further strengthens our theoretical observation that DroPE should be integrated later in training. Our analysis in Section~3 suggests that NoPE transformers struggle to train efficiently, whereas retaining RoPE for most of the training benefits optimization. Consistent with this, we observe that dropping the positional encoding only at the very end of pretraining (DroPE @ 16K) yields the best validation perplexity, while earlier dropping steadily degrades performance.

Finally, we emphasize that in this setup, DroPE does not incur additional training cost: the total number of optimization steps is unchanged, and once the positional encoding is removed, training becomes slightly faster due to skipping the RoPE rotation operations in attention.

\subsection{Performance at different context extension factors}\label{apx:perf_v_exp}
\textbf{Average LongBench scores and tasks breakdowns.}
The following tables provide average results over the entire LongBench benchmark (Table~\ref{tab:longbench_avg}), and provide a performance breakdown per input length for the MultiFieldQA and MuSiQue tasks from LongBench (Tables~\ref{tab:multifieldqa_buckets} and~\ref{tab:musique_buckets}). 

\begin{table}[H]
    \centering
    \small
    \caption{Average performance over all LongBench tasks for different RoPE scaling methods.}
    \label{tab:longbench_avg}
    \begin{tabular}{lc}
        \toprule
        \textbf{Method} & \textbf{Avg. LongBench score} \\
        \midrule
        \textsc{SmolLM}        &  2.59 \\
        \textsc{SmolLM} + PI        &  2.48 \\
        \textsc{SmolLM} + RoPE-NTK  & 12.21 \\
        \textsc{SmolLM} + YaRN      & 13.07 \\
        \cmidrule(lr){1-2}
        \textsc{SmolLM-\ouralgo} & \textbf{13.81} \\
        \bottomrule
    \end{tabular}
\end{table}

\begin{table}[htbp]
    \centering
    \small
    \caption{MultiFieldQA performance across context length buckets for \textsc{SmolLM} variants.}
    \label{tab:multifieldqa_buckets}
    \begin{tabular}{lccc}
        \toprule
        \textbf{Model} &
        \textbf{0--4K (0--2$\times$ ctx)} &
        \textbf{4--8K (2--4$\times$ ctx)} &
        \textbf{8--16K (4--8$\times$ ctx)} \\
        \midrule
        SmolLM-DroPE & 32.82 & 24.73 & 30.07 \\
        SmolLM-NTK   & 34.25 & 22.30 & 21.63 \\
        SmolLM-YaRN  & 33.96 & 22.91 & 20.08 \\
        \bottomrule
    \end{tabular}
\end{table}

\begin{table}[htbp]
    \centering
    \small
    \caption{MuSiQue performance across context length buckets for \textsc{SmolLM} variants.}
    \label{tab:musique_buckets}
    \begin{tabular}{lcccc}
        \toprule
        \textbf{Model} &
        \textbf{0--4K (0--2$\times$ ctx)} &
        \textbf{4--8K (2--4$\times$ ctx)} &
        \textbf{8--16K (4--8$\times$ ctx)} &
        \textbf{16--32K (8--16$\times$ ctx)} \\
        \midrule
        SmolLM-DroPE & 50.00 &  6.11 &  8.05 & 16.67 \\
        SmolLM-NTK   &  0.00 &  4.36 &  3.36 &  0.00 \\
        SmolLM-YaRN  &  0.00 & 19.68 &  3.13 &  7.14 \\
        \bottomrule
    \end{tabular}
\end{table}

\textbf{Needle-in-a-haystack performance at larger extension factors.}
To directly measure the effect of the context extension factor on downstream performance, we use standard needle-in-a-haystack evaluations at $2\times$, $4\times$, and $8\times$ original context length. We use \textsc{SmolLM} as the base model, and additionally compare against LongRoPE2~\citep{longrope2} since it was specifically evaluated on NIAH tasks.

\begin{table}[h]
    \caption{\ouralgo outperforms RoPE-scaling methods on long needle-in-a-haystack tasks.}\label{tab:long_niah}
    \centering
    \small 
    \resizebox{0.95\textwidth}{!}{
        \begin{tabular}{lccc}
            \toprule
            \textbf{Method} & \textbf{$2\times$ original context} & \textbf{$4\times$ original context} & \textbf{$8\times$ original context} \\
            \midrule
            \textsc{SmolLM} + RoPE-NTK & $29.84$ & $14.37$ & $7.19$ \\
            \textsc{SmolLM} + YaRN  & $48.25$ & $25.62$ & $12.18$ \\
            \textsc{SmolLM} + LongRoPE2  & $44.20$ & $26.20$ & $16.45$ \\
            \midrule
            \textsc{SmolLM-DroPE} & $\mathbf{74.92}$ & $\mathbf{55.00}$ & $\mathbf{52.20}$ \\
            \bottomrule
        \end{tabular}
    }
    \vspace{-10pt}
\end{table}

\subsection{The effect of QKNorm}\label{apx:qknorm}
We introduce QKNorm in the recalibration phase as an optimization-stability mechanism to enable training with higher learning rates, following recent practices in large-scale model training such as OLMo2~\citep{qknorm_olmo2} and Qwen3~\citep{qwen3}, where normalization is used to stabilize gradients and mitigate loss spikes.

To assess the interaction between QK Norm and DroPE, we conducted a controlled ablation study on the SmolLM-360M model using six configurations: three learning rates ($3\times10^{-5}$, $3\times10^{-4}$, $10^{-3}$), each trained with and without QK Norm. The results, summarized in Table~\ref{tab:qk-ablation}, yield two main observations:
\begin{itemize}
    \item \textbf{Lower learning rates ($3\times10^{-5}$, $3\times10^{-4}$).} DroPE works effectively without QKNorm. At the lowest learning rate ($3\times10^{-5}$), the model without QK Norm achieves a slightly better final loss ($2.713$ vs. $3.102$). Together with the $3\times10^{-4}$ setting ($2.530$ vs.$2.555$), this indicates that QK Norm does not consistently improve performance in low-volatility regimes and is not the source of our gains.
    \item \textbf{High learning rate ($10^{-3}$).} At the highest learning rate, the model without QKNorm becomes unstable (loss spikes, gradient explosions), leading to poor convergence (final loss 6.334). In contrast, adding QKNorm stabilizes training and allows us to leverage the higher learning rate to achieve the best overall performance (final loss 2.496).
\end{itemize}

Figure~\ref{fig:qk_norm_ablation} shows the corresponding training curves with and without QK Norm, highlighting the presence of loss spikes at higher learning rates, in line with observations reported in~\cite{olmo2}. These results empirically demonstrate that the primary role of QK Norm is to act as a stabilizer that enables the use of a more aggressive, compute-efficient learning rate. Importantly, DroPE can still be applied without QK Norm by using a moderate learning rate (e.g., ($3\times10^{-4}$), which is our default setting for all experiments except the longer SmolLM-360M recalibration phases.

\begin{table}[H]
\centering
\caption{Ablation study on SmolLM-360M recalibration with and without QK Norm across different learning rates.}
\label{tab:qk-ablation}
\begin{tabular}{lccc}
\toprule
\textbf{Learning Rate} & \textbf{With QK Norm} & \textbf{Without QK Norm} & \textbf{Status} \\
\midrule
$10^{-3}$ (High) & $2.496$ & $6.334$ & Unstable without Norm \\
$3\times10^{-4}$ (Mid) & $2.555$ & $2.530$ & Stable / Comparable \\
$3\times10^{-5}$ (Low) & $3.102$ & $2.713$ & Stable / Comparable \\
\bottomrule
\end{tabular}
\end{table}

\begin{figure}[H]
    \centering
    \includegraphics[width=0.8\linewidth]{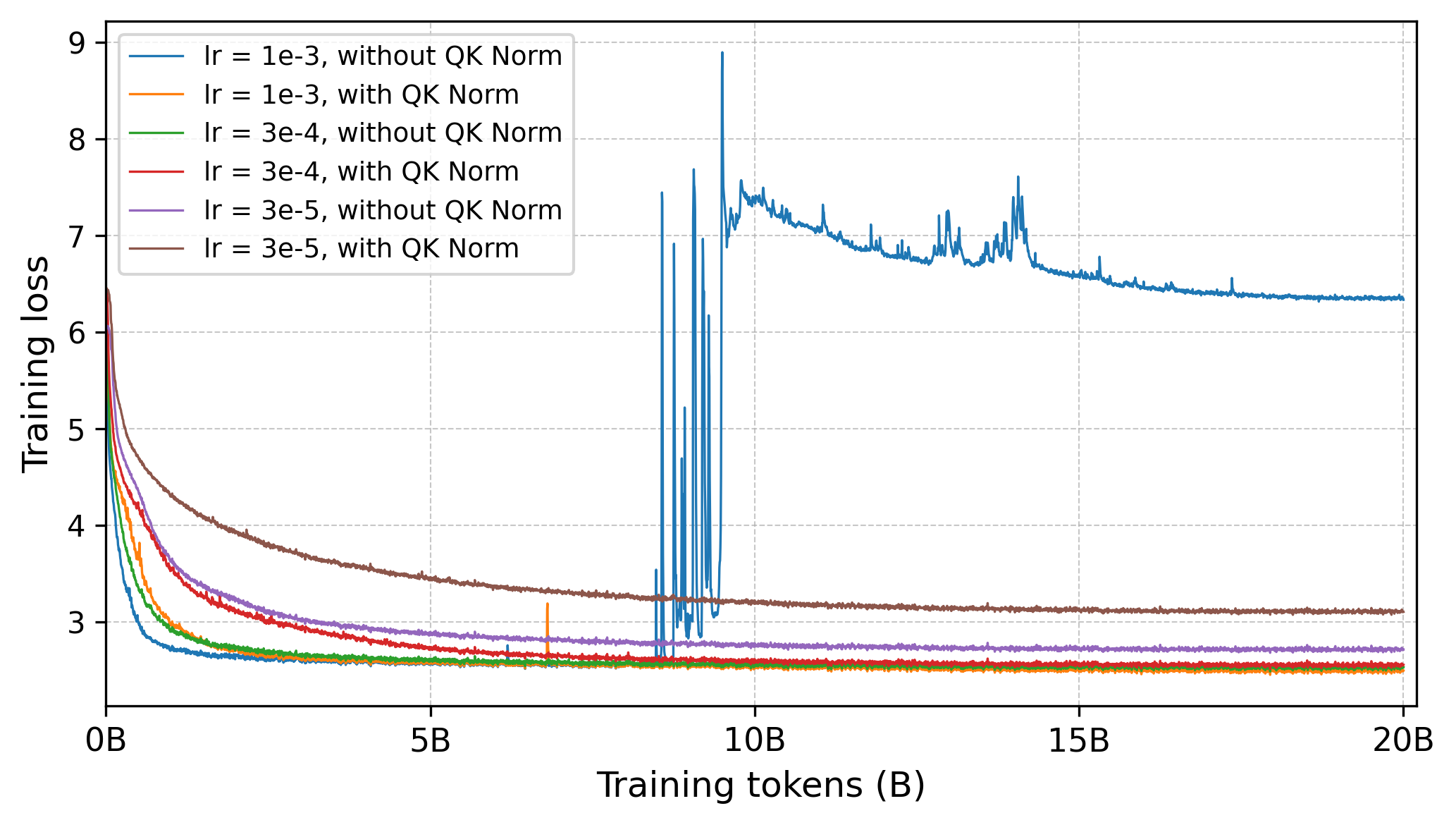}
    \caption{QKNorm allows for recalibration at a higher learning rate.}
    \label{fig:qk_norm_ablation}
\end{figure}

\end{document}